\documentclass{article}

\usepackage{autobit}

\title{Beyond Scalar Sensitivity: Activation-Aware Mixed-Precision LLM Quantization with Cross-Layer Refinement}

\author{
  Akihiro Yoshida \\
  Fujitsu Limited, Institute of Science Tokyo
  \And
  Yuma Ichikawa \\
  Fujitsu Limited, RIKEN Center for AIP
}

\begin{document}

\maketitle

\begin{abstract}
Mixed-precision weight quantization is commonly formulated as a Multiple-Choice Knapsack Problem (MCKP), yet existing solvers rely on scalar sensitivity proxies that collapse each weight matrix's Hessian into a single number and treat every module independently.
We prove that even the optimal scalar proxy incurs multiplicative distortion up to $\sqrt{\kappa(\mathbf{A})\kappa(\mathbf{B})}$ relative to the full activation-aware quadratic, where $\kappa(\mathbf{A})$ and $\kappa(\mathbf{B})$ denote the condition numbers of the input- and output-side Hessian factors.
This bound varies from $10^1$ to $10^{13}$ for typical LLM modules, making inter-module sensitivity ranking unreliable.
To address these limitations, we propose Cross-layer Activation-aware Sensitivity Allocation (CASA), a two-phase method.
In Stage 1, the scalar proxy is replaced by an activation-aware metric derived from the Kronecker-factored Hessian, reducing the MCKP to a form whose continuous relaxation admits a closed-form solution.
In Stage 2, a cross-layer-aware local search evaluates bit-width updates using the end-to-end model loss.
Experiments on multiple LLMs across different bit budgets show that CASA achieves lower perplexity than the latest scalar-proxy baselines, especially at ultra-low bit-widths ($<3$ bits per weight).
Moreover, the performance gain in zero-shot accuracy tracks the per-model average condition-number over modules, confirming the distortion bound as a practical indicator of scalar-proxy failure.
\end{abstract}

\section{Introduction}
\label{sec:intro}
Weight-only post-training quantization (PTQ)~\citep{lin2024awq, frantar2022gptq} has become the standard technique for deploying large language models under memory constraints.
Among the various PTQ strategies, mixed-precision quantization---where different weight matrices are quantized to different bit widths---offers a flexible trade-off between model size and accuracy~\citep{guo2025slimllm, dong2019hawq, dong2020hawq}.
The bit allocation problem is naturally formulated as a Multiple-Choice Knapsack Problem (MCKP)~\cite{lee2025qpalette, chen2021towards}: given a total bit budget, assign a bit width to each weight matrix so as to minimize the total quantization-induced loss.

The quality of any MCKP solution depends critically on the proxy that estimates per-module quantization error.
Recent mixed-precision allocators, including HIGGS~\cite{malinovskii2025higgs} and Q-Palette~\cite{lee2025qpalette}, acknowledge the role of activations but ultimately reduce this matrix-form proxy to a single scalar coefficient $\alpha_l$, yielding $\alpha_l (\nicefrac{\|\dW^{(l)}\|^{2}}{\|\mW^{(l)}\|^{2}})$.
This scalar reduction discards the directional information carried by the Hessian: the anisotropy of the activation covariance, the heterogeneous output-side sensitivities, and the channel-level structure of $\dW^{(l)}$ produced by the underlying quantizer.
The damage is not merely a loss of precision; when the MCKP solver relies on a proxy that is systematically misaligned with the true error, it allocates fewer bits to the modules that matter most.

A second, less-discussed limitation is the layer-wise independence assumption that underpins every MCKP-style allocator.
Standard formulations sum per-module errors as if the modules were statistically independent---exactly the additive structure that the knapsack objective requires.
The deep residual architecture of modern LLMs violates this assumption: quantization error injected at layer $l$ propagates through the residual stream and interacts with the error at layer $l{+}1$, producing cross-layer terms---the off-diagonal blocks of the full network Hessian---that the additive proxy cannot represent.

A natural first attempt is to fold these cross-layer terms into the MCKP objective itself, but our experiments show that this approach does not yield meaningful improvement over the self-only baseline.
These results indicate that cross-layer awareness is genuinely necessary, yet capturing it inside a single MCKP formulation is empirically difficult; it must be addressed by a mechanism outside the MCKP.

Guided by this separation of concerns, we propose Cross-layer Activation-aware Sensitivity Allocation (CASA) (see~\Cref{fig:overview}), a two-stage allocator in which Stage 1 handles the former problem inside the MCKP, and Stage 2 handles the latter problem outside it.
\textbf{Stage 1: Activation-aware MCKP.}
For each module and candidate bit-width, we evaluate the per-module
loss with the full Kronecker-factored Hessian.
Unlike scalar proxies, this cost faithfully captures activation outliers.
This advantage is not merely empirical: any scalar surrogate suffers a worst-case multiplicative distortion relative to the true quadratic (\Cref{thm:app_scalar_sharp_distortion}), which can flip the sensitivity ranking of two modules and thereby induce unbounded MCKP regret (\Cref{prop:app_misranking_mckp_regret}); in the high-rate limit our proxy further yields a water-filling allocation that no scalar surrogate can recover (\Cref{thm:app_optimal_continuous_bits}).
\textbf{Stage 2: Cross-layer local search.}
Treating the Stage-1 assignment as initialization, we perform a greedy bit-swap search whose acceptance criterion is the calibration cross-entropy loss measured after re-quantization, so that the non-additive cross-layer interactions are evaluated end-to-end rather than through any quadratic surrogate.
This stage is precisely the mechanism that operates outside the MCKP framework and recovers the cross-layer signal that no single-stage formulation captured.
The cross-aware optimum is never worse than the self-only optimum and is strictly better whenever the latter is suboptimal (\Cref{prop:app_cross_never_worse}).

Numerical experiments on multiple models (e.g., the Llama and Qwen series) across different bit budgets show that the activation-aware error proxy enhances existing scalar-based weighting methods.
The cross-layer-aware local search yields further improvement, especially at ultra-low bit-widths.
CASA successfully identifies modules with activation outliers and assigns them higher precision.
Moreover, the distortion bound $\sqrt{\kappa(\mA)\kappa(\mB)}$ established in~\Cref{thm:app_scalar_sharp_distortion} correlates with the empirical gain of CASA, validating the connection between our theory and experiments.

\begin{figure}[htbp]
  \centering
  \includegraphics[width=1.0\linewidth]{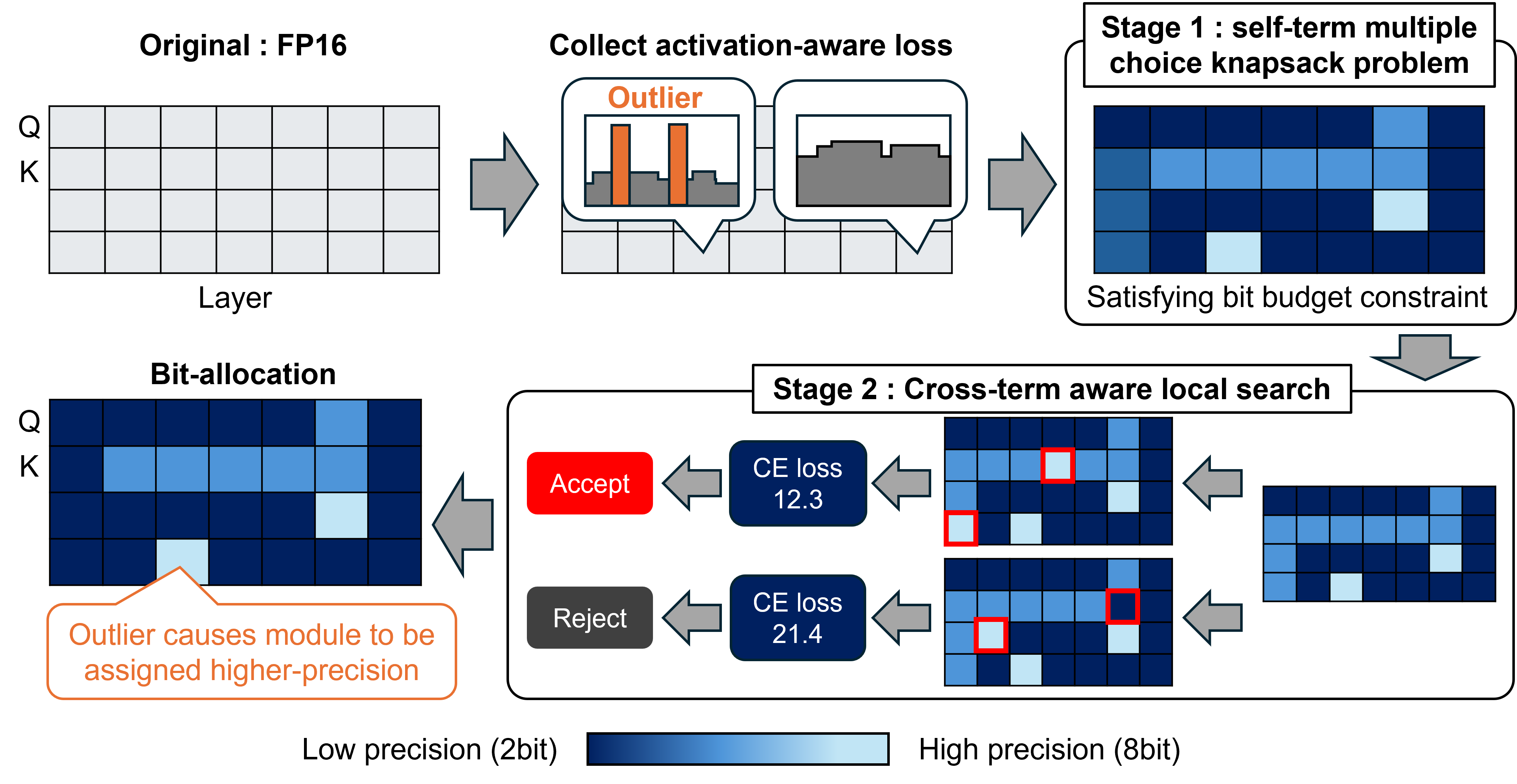}
  \caption{Overview of Cross-layer Activation-aware Sensitivity Allocation (CASA), which determines the bit allocation by accounting for activation-aware loss and cross-layer interactions. CASA efficiently detects outlier modules and assigns them higher precision.}
  \label{fig:overview}
\end{figure}

Our contributions are summarized as follows:
\begin{enumerate}
  \item We derive an activation-aware quantization-error proxy that retains the full Kronecker-factored Hessian from the second-order Taylor expansion of the task loss, and prove a worst-case distortion bound for any scalar proxy (\Cref{sec:proxy}).
  \item Building on this proxy, we propose CASA, a two-stage allocator that solves an activation-aware MCKP for self-term costs (Stage 1) and then performs a cross-entropy-driven cross-layer local search (Stage 2) that provably never worsens the Stage-1 solution (\Cref{sec:casa}).
  \item We demonstrate that CASA achieves better performance than the latest scalar-based proxy across multiple models and quantization settings.
  We further observe that CASA's improvement grows with the distortion bound, consistent with~\Cref{thm:app_scalar_sharp_distortion}~(\Cref{sec:casa_experiments}).
\end{enumerate}

\section{Preliminaries}
\label{sec:preliminaries}

\subsection{Bit-allocation as constrained optimization problem}
The mixed-precision quantization problem reduces to choosing a quantizer for each layer so as to minimize the total weighted error subject to a resource constraint~\citep{chen2021towards}.
This problem can be formulated as a multiple-choice knapsack problem (MCKP) subject to resource constraints (model size~\citep{uhlich2019mixed}, computational complexity~\citep{yang2021fracbits}, etc.).
It can be solved by a genetic algorithm~\citep{li2021brecq} or by mathematical optimization solvers~\citep{hubara2021accurate}.
In addition to the MCKP formulation, bit-allocation has been addressed via reinforcement learning~\citep{wang2019haq} or differentiable search~\citep{yang2021fracbits}.

\subsection{Sensitivity Measurement}
Many methods construct a surrogate objective to determine the bit allocation.
The HAWQ series~\citep{dong2019hawq, dong2020hawq, yao2021hawq} uses spectral information of the Hessian as a sensitivity metric.
OMPQ~\citep{ma2023ompq} prioritizes the bit allocation so that layer outputs tend to be mutually orthogonal.
Q-Palette~\citep{lee2025qpalette} measures module importance using the HIGGS metric~\citep{malinovskii2025higgs}.
These approaches fail to account for activation outliers, which are explicitly handled by recent quantization methods~\citep{lin2024awq, xiao2023smoothquant}.
More fundamentally, all of these methods reduce sensitivity to a single scalar and therefore cannot reflect activation distributions that depend on the calibration data.

\subsection{Cross-layer Awareness for Quantization}
Cross-layer awareness has been exploited for quantization itself, but rarely for bit allocation.
BRECQ~\citep{li2021brecq} iteratively refines the bit allocation by monitoring the validation loss.
QEP~\citep{arai2025quantization} propagates the previous layer's quantization loss to the next layer, thereby suppressing error accumulation.
However, cross-layer awareness for bit allocation in LLMs remains unexplored.
For vision encoders, CLADO~\citep{deng2023mixed} jointly considers both the self-term and cross-term.
InfoQ~\citep{akbulut2026infoq} studies how quantization perturbations propagate to affect the output information.
For KV-cache quantization, KVTuner~\citep{li2025kvtuner} solves a multi-objective optimization problem informed by inter-layer correlations.
Whether cross-layer interactions can actually reorder the allocation produced by a sensitivity-only integer programming formulation remains an open question.


\section{Activation-Aware Quantization Error Proxy}
\label{sec:proxy}

\subsection{Derivation}
\label{sec:proxy:derivation}

Consider a pre-trained model with $L$ layers. Let $\mW^{(l)} \in \mathbb{R}^{d_\text{out} \times d_\text{in}}$ denote the weight matrix of layer $l$, and let $\dW^{(l)} := \hat{\mW}^{(l)} - \mW^{(l)}$ be the quantization perturbation. We approximate the change in task loss $\Delta L$ through the following chain of standard assumptions:

\begin{enumerate}
  \item[(i)] Second-order Taylor expansion.
    $\Delta L \approx \frac{1}{2}\,\Delta\bm{\theta}^\top
    \mH(\bm{\theta})\,\Delta\bm{\theta}$,
    where $\mH$ is the Hessian of the loss with respect to all parameters
    $\bm{\theta}$.
  \item[(ii)] Local optimality.
    At the pre-trained weights, the gradient is approximately zero, so the first-order term in the Taylor expansion is negligible.
  \item[(iii)] Layer-wise block-diagonal Hessian.
    Off-diagonal blocks of $\mH$ between different layers are negligible, so
    $\Delta L \approx \sum_l \Delta L^{(l)}$.
  \item[(iv)] Empirical Fisher.
    Given calibration samples $\{\vx^{(n)}\}_{n=1}^{N}$, the Hessian is
    estimated via the empirical Fisher:
    $\vx^{(n)}(\vx^{(n)})^\top$ for the input side and $\mH_y^{(n)}$
    for the output side.
\end{enumerate}

Combining (i)--(iv), the per-layer loss is given by
$
  \Delta L^{(l)} \;\approx\; \frac{1}{2}\,
    \tr (\mB^{(l)}\,\dW^{(l)}\,\mA^{(l)}\,
    (\dW^{(l)})^\top),
  \label{eq:proxy_full}
$
where
$
  \mA^{(l)} = \frac{1}{N}\,\mX\mX^\top \in \mathbb{R}^{d_\text{in}\times d_\text{in}},
  ~~
  \mB^{(l)} = \frac{1}{N}\sum_{n=1}^{N}\mH_y^{(n)}
  \in \mathbb{R}^{d_\text{out}\times d_\text{out}}.
  \label{eq:AB_def}
$
Here $\mA^{(l)}$ is the input activation Gram matrix, encoding channel-wise correlations and magnitudes, and $\mB^{(l)}$ is the output-side Hessian, capturing the downstream sensitivity to perturbations in each output channel. 

\subsection{Existing Proxies as Degenerate Cases}
\label{sec:proxy:comparison}

\Cref{eq:proxy_full} provides a unified view of existing quantization error proxies as successively coarser approximations (see \Cref{tab:proxy_hierarchy}).
The scalar proxy $\alpha_l\|\dW\|^2/\|\mW\|^2$ treats all input channels and all output channels as equally important.
In practice, Transformer activations exhibit strong anisotropy: a small number of channels carry disproportionately large magnitudes (``activation outliers'').
These outlier channels make certain columns of $\dW$ far more costly than others, and analogous heterogeneity arises on the output side via $\mB$.
By collapsing $\mA$ and $\mB$ to scaled identities ($\mA \to \alpha_l\mI,\;\mB \to \mI$), scalar proxies assign equal weight to every entry of $\dW^{(l)}$, fundamentally misranking modules by their quantization sensitivity.

\begin{table}[htbp]
  \centering
  \caption{Hierarchy of quantization error proxies. All are special cases of $\tr(\mB\,\dW\,\mA\,\dW^\top)$ with different levels of approximation for $\mA$ and $\mB$.}
  \label{tab:proxy_hierarchy}
  \begin{tabular}{llll}
    \toprule
    Proxy & Approximation & Formula & Methods \\
    \midrule
    Full & $\mA^{(l)},\; \mB^{(l)}$ &
      $\tr(\mB\,\dW\,\mA\,\dW^\top)$ & CASA(\textbf{ours}) \\ \hline
    \multirow{2}{*}{Input-only} & \multirow{2}{*}{$\mB \to \mI$} &
      \multirow{2}{*}{$\frac{1}{N}\|\dW\mX\|_F^2$}
      & GPTQ~\citep{frantar2022gptq} \\
    & & & OBQ~\citep{frantar2022obq} \\ \hline
    \multirow{2}{*}{Scalar} & \multirow{2}{*}{$\mA \to \alpha_l\mI,\;\mB \to \mI$} &
      \multirow{2}{*}{$\alpha_l\|\dW\|^2 / \|\mW\|^2$}
      & HIGGS~\citep{malinovskii2025higgs} \\
    & & & Q-Palette~\citep{lee2025qpalette} \\
    \bottomrule
  \end{tabular}
\end{table}


\subsection{Theoretical Discussion}
The worst-case multiplicative distortion of any scalar proxy against the activation-aware quadratic, and the MCKP regret induced by sensitivity misranking, are stated formally in \Cref{thm:app_scalar_sharp_distortion} and~\Cref{prop:app_misranking_mckp_regret}, respectively.

\begin{restatable}[Sharp multiplicative distortion of scalar proxies]{theorem}{appscalarsharpdistortion}
\label{thm:app_scalar_sharp_distortion}
Assume $\mA\succ0$ and $\mB\succ0$. Let $\kappa(\mA):=\nicefrac{\lambda_{\max}(\mA)}{\lambda_{\min}(\mA)}$ and $\kappa(\mB):=\nicefrac{\lambda_{\max}(\mB)}{\lambda_{\min}(\mB)}$. Then
\begin{equation}
  \inf_{\alpha>0}\;\sup_{E\neq0}\max\left\{\frac{Q(E)}{Q_\alpha(E)},\frac{Q_\alpha(E)}{Q(E)}\right\} =
  \sqrt{\kappa(\mA)\kappa(\mB)} =: D.
  \label{eq:app_scalar_distortion_bound}
\end{equation}
Consequently, whenever either the input-side factor $\mA$ or the output-side factor $\mB$ is anisotropic, no scalar proxy can uniformly approximate the activation-aware quadratic proxy without incurring this worst-case multiplicative distortion.
\end{restatable}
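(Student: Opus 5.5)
We work with $Q(E)=\tr(\mB E\mA E^\top)$ and the scalar proxy $Q_\alpha(E)=\alpha\|E\|_F^2$. (Any normalization by $\|\mW\|^2$ is a positive constant absorbed into $\alpha$, and the factor $\tfrac12$ is likewise absorbed.) The plan is to recognize $Q$ as a quadratic form in $\mathrm{vec}(E)$. Since $\tr(\mB E\mA E^\top)=\mathrm{vec}(E)^\top(\mA\otimes\mB)\mathrm{vec}(E)$ and $\|E\|_F^2=\|\mathrm{vec}(E)\|^2$, the ratio $Q(E)/\|E\|_F^2$ is a Rayleigh quotient of the Kronecker product $\mA\otimes\mB\succ0$.

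\textbf{Step 1 (range of the Rayleigh quotient).} The eigenvalues of $\mA\otimes\mB$ are exactly the products $\lambda_i(\mA)\lambda_j(\mB)$. Hence
\[
m:=\lambda_{\min}(\mA)\lambda_{\min}(\mB)\le \frac{Q(E)}{\|E\|_F^2}\le \lambda_{\max}(\mA)\lambda_{\max}(\mB)=:M .
\]
Both bounds are attained by rank-one matrices $E=\vu\vv^\top$, where $\vu$ is an eigenvector of $\mB$ and $\vv$ is an eigenvector of $\mA$ for the corresponding extreme eigenvalues. For such $E$ one checks directly that $Q(E)=(\vu^\top\mB\vu)(\vv^\top\mA\vv)$ and $\|E\|_F^2=\|\vu\|^2\|\vv\|^2$. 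This gives an elementary attainment argument, so we do not even need the Kronecker spectrum for attainment. Moreover, every value in $[m,M]$ is attained, because the Rayleigh quotient is continuous on the connected unit sphere. Note that $M/m=\kappa(\mA)\kappa(\mB)$.

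\textbf{Step 2 (reduction to a scalar problem).} Set $r(E)=Q(E)/\|E\|_F^2$. The quantity to evaluate is
\[
\sup_E \max\{r/\alpha,\;\alpha/r\}=\max\{M/\alpha,\;\alpha/m\},
\]
because $t\mapsto\max\{t/\alpha,\alpha/t\}$ is quasi-convex in $t$, so its supremum over $[m,M]$ is reached at an endpoint. Both endpoints are attained by Step 1, so the supremum is indeed a maximum.

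\textbf{Step 3 (minimize over $\alpha$).} The first term $M/\alpha$ is decreasing in $\alpha$ and the second term $\alpha/m$ is increasing. The infimum of their maximum is therefore reached where they are equal, at $\alpha^\star=\sqrt{mM}$. The optimal value is $\sqrt{M/m}=\sqrt{\kappa(\mA)\kappa(\mB)}=D$. For any other $\alpha$, one of the two terms exceeds $D$, which establishes the lower bound and hence equality.

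\textbf{Consequence.} If $\mA$ or $\mB$ is anisotropic, then $\kappa(\mA)>1$ or $\kappa(\mB)>1$, so $D>1$. Thus no $\alpha$ yields uniform approximation of $Q$.

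The only step needing care is Step 1's attainment, together with the endpoint argument in Step 2. The rank-one eigenvector construction handles the attainment cleanly; the rest of the argument is one-variable calculus.
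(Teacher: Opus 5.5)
Your proposal is correct and follows essentially the same route as the paper. Both arguments bound $Q(E)/\|E\|_F^2$ between $\lambda_{\min}(\mA)\lambda_{\min}(\mB)$ and $\lambda_{\max}(\mA)\lambda_{\max}(\mB)$, attain both endpoints with rank-one eigenvector outer products, reduce to $\max\{M/\alpha,\alpha/m\}$, and balance at $\alpha=\sqrt{mM}$. The differences are cosmetic: you cite the Kronecker-product spectrum via $\operatorname{vec}$, whereas the paper diagonalizes explicitly with $F=\bm{V}^\top E\bm{U}$ and writes the ratio as a convex combination of the products $a_j b_i$; and you finish by monotonicity, whereas the paper uses $\max\{x,y\}\ge\sqrt{xy}$.
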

We empirically confirms that the bound $D$ massively varies depending on the module. (See Appendix~\ref{app:distortion_bound})

\begin{proposition}[Proxy misranking can create arbitrarily large MCKP regret]
\label{prop:app_misranking_mckp_regret}
Consider two modules indexed by $i\in\{1,2\}$ and two bit-width choices, $L$ and $H$, where $L$ is of lower precision and $H$ is of higher precision. Suppose the bit budget allows exactly one module to use $H$. Let the true second-order cost be
\begin{equation}
  C_i(b)=\gamma_i d_b,
  ~~ b\in\{L,H\},
\end{equation}
where $d_L>d_H>0$ and $\gamma_i>0$. Let a proxy MCKP use the proxy cost, $\widehat C_i(b)=\widehat\gamma_i d_b$, where $\widehat\gamma_i>0$. Suppose that the true sensitivities and proxy sensitivities have opposite rankings: $\gamma_1>\gamma_2,~~ \widehat\gamma_2>\widehat\gamma_1$.
Then the true optimal allocation assigns $H$ to module $1$, whereas the proxy-optimal allocation assigns $H$ to module $2$. The true regret of the proxy-optimal allocation is
\begin{equation}
  \operatorname{Regret} =
  (\gamma_1-\gamma_2)(d_L-d_H).
\end{equation}
In particular, taking $\gamma_1=R$ and $\gamma_2=1$ makes the regret equal to $(R-1)(d_L-d_H)$, which diverges as $R\to\infty$.
\end{proposition}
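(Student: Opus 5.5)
The argument reduces to enumerating the two feasible allocations and comparing their costs. Since the budget allows exactly one module to use $H$, the feasible set is $\{(H,L),(L,H)\}$, where the first coordinate is the bit-width of module $1$. Under the true costs, the total cost is $T_1:=\gamma_1 d_H+\gamma_2 d_L$ for $(H,L)$ and $T_2:=\gamma_1 d_L+\gamma_2 d_H$ for $(L,H)$. We then compute
\begin{equation*}
T_2-T_1=\gamma_1(d_L-d_H)-\gamma_2(d_L-d_H)=(\gamma_1-\gamma_2)(d_L-d_H).
\end{equation*}
This is strictly positive because $\gamma_1>\gamma_2$ and $d_L>d_H$. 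The true optimum is therefore unique and assigns $H$ to module $1$.

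The same computation applies to the proxy costs $\widehat C_i(b)=\widehat\gamma_i d_b$, with the roles of the modules exchanged. The proxy cost of $(H,L)$ exceeds that of $(L,H)$ by $(\widehat\gamma_2-\widehat\gamma_1)(d_L-d_H)>0$. The proxy MCKP therefore selects $(L,H)$ uniquely, so no tie-breaking convention is needed.

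Regret is measured under the true cost: it is the true cost of the proxy-optimal allocation minus the true optimal cost, namely $T_2-T_1=(\gamma_1-\gamma_2)(d_L-d_H)$. Substituting $\gamma_1=R$ and $\gamma_2=1$ gives $(R-1)(d_L-d_H)$. This quantity grows linearly in $R$ with positive slope $d_L-d_H$, so it diverges as $R\to\infty$. The proxy sensitivities can be any pair with $\widehat\gamma_2>\widehat\gamma_1$, for example $\widehat\gamma_1=1$ and $\widehat\gamma_2=2$.

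There is no real obstacle. The only points needing care are the uniqueness of the two optima, which follows from the strict inequalities, and making explicit that regret is evaluated under the true cost $C_i$ rather than the proxy $\widehat C_i$. One could also note that such a misranking is consistent with \Cref{thm:app_scalar_sharp_distortion}: a multiplicative distortion greater than $1$ suffices to flip the ranking of $\gamma_1,\gamma_2$. This remark is not needed for the proof.
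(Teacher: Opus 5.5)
Your proposal is correct and follows the paper's proof essentially step for step. Like the paper, you enumerate the two feasible allocations, show the true-cost gap $(\gamma_1-\gamma_2)(d_L-d_H)>0$ and the proxy-cost gap $(\widehat\gamma_2-\widehat\gamma_1)(d_L-d_H)>0$, which give uniqueness of both optima, and identify the regret as that true-cost gap before substituting $\gamma_1=R$, $\gamma_2=1$.
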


\section{Cross-layer Activation-aware Sensitivity Allocation (CASA)}
\label{sec:casa}
To address the issues derived from scalar-based sensitivity and layer-wise independent treatment, we propose \textbf{Cross-layer Activation-aware Sensitivity Allocation (CASA)}.
A natural attempt is to fold both the self-term and the cross-term into a single MCKP and solve it jointly, but this approach empirically performs worse (see Appendix~\ref{app:ablation_one_stage}).
Therefore, capturing both components in a single formulation is empirically difficult, and we adopt a two-stage approach instead.
Stage 1 solves an MCKP using the activation-aware proxy of \Cref{eq:proxy_full} on the self-term alone (\Cref{sec:casa:selfterm}); Stage 2 then refines this allocation via a cross-layer local search that evaluates bit-swaps using the end-to-end loss (\Cref{sec:casa:crosslayer}); this refinement is guaranteed to produce a solution no worse than the Stage-1 one.
The overall algorithm of CASA is shown in~\Cref{alg:casa}.

\subsection{Stage 1: MCKP Formulation}
\label{sec:casa:selfterm}

Following \Cref{sec:proxy}, we formulate the bit allocation as an MCKP over all weight matrices in the model. 
Let $(l, m)$ index a specific module (e.g., $m \in \{Q, K, V, O, \text{Gate}, \text{Up}, \text{Down}\}$) in layer $l$.
Each module can be quantized at one of the candidates in $\mathcal{Q} = \{q_1, \ldots, q_{|\mathcal{Q}|}\}$ (e.g., $\{2.0, 3.0, \ldots, 6.0\}$).
We use $q \in \mathcal{Q}$ in two roles: as a generic candidate when defining per-module costs, and as the assigned bit-width $q_{l,m} \in \mathcal{Q}$ for  module $(l,m)$. 
The full allocation is denoted by $\boldsymbol{q} = (q_{l,m})_{(l,m)}$.

For each candidate $(l,m,q)$, we pre-compute the quantized weight $\hat{\mW}^{(l,m,q)}$ and the perturbation $\dW^{(l,m,q)} = \hat{\mW}^{(l,m,q)} - \mW^{(l,m)}$. 
The activation-aware proxy cost $\ell_{l,m,q}$ is:
\begin{equation}
  \ell_{l,m,q} := \tr \ \Bigl(\mB^{(l,m)}\,\dW^{(l,m,q)}\,
    \mA^{(l,m)}\,\bigl(\dW^{(l,m,q)}\bigr)^\top\Bigr).
  \label{eq:proxy_cost}
\end{equation}
Expanding in element-wise form:
$
  \ell_{l,m,q} = \sum_{i,j}\,
    b_i^{(l,m)}\cdot a_j^{(l,m)}\cdot
    (\Delta w_{ij}^{(l,m,q)})^2$,
where $a_j^{(l,m)}$ and $b_i^{(l,m)}$ are the diagonal entries of $\mA^{(l,m)}$ and $\mB^{(l,m)}$, respectively, under the diagonal approximation of $\mA$ and $\mB$.
The MCKP is:
\begin{align}
  \min_{P \in\{0,1\}} ~ & \sum_{l,m,q} P_{l,m,q}\,\ell_{l,m,q}
  \label{eq:mckp_obj} \\
  \text{subject to}~ & \sum_q P_{l,m,q} = 1
    ~ (\forall\, l,m),
    \quad
   \sum_{l,m,q} P_{l,m,q}\cdot\text{bits}_q\cdot\text{size}_{l,m} \leq B_{\rm avg} \cdot \sum_{l,m} \text{size}_{l,m}
\end{align}
where $P_{l,m,q} = 1$ indicates that module $(l,m)$ is quantized at bit width $q$, 
$\text{bits}_q$ includes the overhead from zero points and scales, 
and $B_{\rm avg}$ is the target average bits per weight (BPW).
This is an integer linear program and can be solved efficiently by existing mathematical optimization solvers.
The binary variables $P_{l,m,q}$ induce an assignment vector $\boldsymbol{q} = (q_{l,m})$ via $q_{l,m} = q \iff P_{l,m,q}=1$. Stage 2 below refines this $\boldsymbol{q}$.

In the continuous relaxation of this MCKP, where bit-widths are allowed to take real values and per-module errors follow the high-rate distortion law $\Gamma_i 2^{-2\beta_i}$ with $\Gamma_i$ proportional to $\ell_{l,m,q}$ at high $q$ and $n_i = \text{size}_{l,m}$, the optimal allocation admits a closed-form solution:

\begin{restatable}[Optimal continuous bit allocation under high-rate distortion]{theorem}{appoptimalcontinuousbits}
\label{thm:app_optimal_continuous_bits}
Let $\mathcal{I}$ be a finite nonempty index set. Assume that $\Gamma_i>0$ and $n_i>0$ for every $i\in\mathcal{I}$. Consider the continuous relaxation
\begin{equation}
  \label{eq:app_continuous_rate_problem}
  \begin{aligned}
    \min_{\{\beta_i\}_{i\in\mathcal{I}}}~
    &F(\beta):=\sum_{i\in\mathcal{I}}\Gamma_i2^{-2\beta_i} \quad
    \text{subject to}~
    &\sum_{i\in\mathcal{I}}n_i\beta_i\le B_{\rm total}.
  \end{aligned}
\end{equation}
Then the problem has a unique global minimizer. Moreover, the unique minimizer satisfies $\sum_{i\in\mathcal{I}}n_i\beta_i^{\prime}=B_{\rm total}$ and is given by
\begin{equation}
  \beta_i^{\prime} =
  \frac{1}{2}\log_2\left(\nicefrac{\Gamma_i}{n_i}\right)+c,~~~
  c =
  \frac{
    B_{\rm total}
    -
    \frac{1}{2}\sum_{j\in\mathcal{I}}n_j
    \log_2\left(\nicefrac{\Gamma_j}{n_j}\right)
  }{
    \sum_{j\in\mathcal{I}}n_j
  }.
  \label{eq:app_beta_star}
\end{equation}
Consequently, for any two modules $i,j\in\mathcal{I}$,
\begin{equation}
  \beta_i^{\prime}-\beta_j^{\prime} =
  \frac{1}{2}\log_2\left(
    \frac{\nicefrac{\Gamma_i}{n_i}}{\nicefrac{\Gamma_j}{n_j}}
  \right).
  \label{eq:app_beta_difference}
\end{equation}
Thus the optimal continuous bit width is larger for modules with larger activation-aware sensitivity per parameter, $\Gamma_i/n_i$.
\end{restatable}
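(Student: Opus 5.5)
We will prove \Cref{thm:app_optimal_continuous_bits} with a standard convex-analysis argument in four steps: existence, uniqueness, activity of the budget constraint, and identification of the minimizer through the Lagrange (KKT) conditions. The objective $F(\beta)=\sum_i\Gamma_i 2^{-2\beta_i}$ is separable. Each summand $\beta_i\mapsto\Gamma_i 2^{-2\beta_i}=\Gamma_i e^{-2\ln 2\,\beta_i}$ is strictly convex and strictly decreasing because $\Gamma_i>0$. Hence $F$ is strictly convex on $\mathbb{R}^{\mathcal{I}}$. The feasible set $\{\beta:\sum_i n_i\beta_i\le B_{\rm total}\}$ is a closed half-space, so it is convex and nonempty.

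\textbf{Existence.} This is the only step that needs some care, since the feasible set is unbounded. We will show that the sublevel sets of $F$ restricted to the feasible set are compact. Fix a feasible point $\beta^0$ and set $M:=F(\beta^0)$. Any feasible $\beta$ with $F(\beta)\le M$ satisfies $\Gamma_i 2^{-2\beta_i}\le M$ for every $i$, which gives the lower bound $\beta_i\ge \tfrac12\log_2(\Gamma_i/M)=:\underline\beta_i$. Combining these lower bounds with the budget constraint and $n_i>0$ gives an upper bound on each coordinate:
\begin{equation*}
\beta_i\le \Bigl(B_{\rm total}-\sum_{j\ne i}n_j\underline\beta_j\Bigr)/n_i .
\end{equation*}
The set is therefore closed and bounded, hence compact. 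Since $F$ is continuous, Weierstrass yields a global minimizer. Strict convexity of $F$ over the convex feasible set then gives uniqueness: if two distinct minimizers existed, their midpoint would be feasible and have strictly smaller value.

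\textbf{Activity of the constraint.} Suppose the minimizer $\beta'$ had $\sum_i n_i\beta_i'<B_{\rm total}$. Then we could increase any single $\beta_i'$ slightly while staying feasible. Because every summand of $F$ is strictly decreasing, this would strictly lower $F$, a contradiction. Hence $\sum_i n_i\beta_i'=B_{\rm total}$.

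\textbf{Closed form.} The constraint is affine, so Slater's condition holds (indeed, for affine constraints KKT is necessary without it). Because the problem is convex, the KKT conditions are both necessary and sufficient for optimality. Stationarity reads
\begin{equation*}
-2\ln 2\,\Gamma_i 2^{-2\beta_i}+\lambda n_i=0 \quad\text{for some }\lambda\ge0 .
\end{equation*}
Here $\lambda>0$ is forced, since otherwise the equation would give $\Gamma_i 2^{-2\beta_i}=0$, which is impossible. Solving for $\beta_i$ gives
\begin{equation*}
\beta_i=\tfrac12\log_2(\Gamma_i/n_i)+c,\qquad c=\tfrac12\log_2\bigl(2\ln2/\lambda\bigr),
\end{equation*}
where $c$ is a constant common to all $i$. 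Substituting this into the active constraint $\sum_i n_i\beta_i=B_{\rm total}$ and solving for $c$ produces exactly \eqref{eq:app_beta_star}. The point so constructed satisfies KKT with $\lambda>0$, so by sufficiency and uniqueness it is the minimizer. Subtracting the formulas for $\beta_i'$ and $\beta_j'$ cancels $c$ and gives \eqref{eq:app_beta_difference}. Monotonicity of $\log_2$ then gives the final claim that modules with larger $\Gamma_i/n_i$ receive larger optimal bit widths.

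If we prefer to avoid KKT altogether, there is a direct alternative. Write $\Gamma_i2^{-2\beta_i}=n_i\cdot(\Gamma_i/n_i)2^{-2\beta_i}$ and apply the weighted AM--GM inequality with weights $n_i/\sum_j n_j$. The geometric mean depends on $\beta$ only through $\sum_i n_i\beta_i\le B_{\rm total}$, which gives a lower bound on $F$. Equality holds iff all the terms $(\Gamma_i/n_i)2^{-2\beta_i}$ are equal and the budget is tight, which is exactly the formula in \eqref{eq:app_beta_star}. This route re-derives existence, uniqueness and the closed form simultaneously, and we would use it as a cross-check.
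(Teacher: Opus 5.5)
Your proposal is correct and follows essentially the paper's route: strict convexity of the separable objective, existence via a compact sublevel set, activity of the budget by monotonicity, and KKT stationarity with $\lambda>0$ forced, solved for a common constant $c$ that the active budget then pins down. The one difference is the existence step: you bound the sublevel set directly inside the half-space with explicit coordinate bounds, whereas the paper restricts to the hyperplane $\sum_i n_i\beta_i=B_{\rm total}$, proves boundedness by a sequential contradiction, and then has to transfer that minimizer back to the half-space (a step it justifies only tersely) --- your version needs no transfer, and your weighted AM--GM cross-check would give existence, uniqueness, and the closed form at once.
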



\subsection{Stage 2: Local Search for considering Cross-Layer Interactions} \label{sec:casa:crosslayer}

This section presents a local search that refines the Stage-1 allocation by explicitly accounting for cross-layer interactions.

We relax Assumption (iii) in~\Cref{sec:proxy:derivation} (layer-wise block-diagonal Hessian) to a block-tridiagonal approximation. 
Specifically, we retain off-diagonal Fisher blocks only between adjacent layers $(l, l{+}1)$ and, within each such block, only the same-module-type entries $(l,m)$–$(l{+}1,m)$.
\begin{equation}
  \Delta L \;\approx\;
    \sum_{l,m}\operatorname{err}_\text{self}(l,m) + 2\sum_{l=1}^{L-1}\sum_{m}\operatorname{err}_\text{cross}(l,l\!+\!1,m).
  \label{eq:crosslayer_loss}
\end{equation}

Consideration of the cross-layer interactions is not harmful for obtaining better assignment, which is ensured by following proposition.
\begin{proposition}[Exact cross-aware optimization is never worse for the quadratic surrogate]
\label{prop:app_cross_never_worse}
Let $\mathcal{F}$ be the feasible set defined by the MCKP assignment and budget constraints. Let
\begin{equation}
  P_{\rm self}\in\operatorname*{arg\,min}_{P\in\mathcal{F}}S(P), \quad
  P_{\rm cross}\in\operatorname*{arg\,min}_{P\in\mathcal{F}}C_{\rm quad}(P).
\end{equation}
Then
\begin{equation}
  C_{\rm quad}(P_{\rm cross})
  \le
  C_{\rm quad}(P_{\rm self}).
  \label{eq:app_cross_never_worse}
\end{equation}
If $P_{\rm self}$ is not a minimizer of $C_{\rm quad}$ over $\mathcal{F}$, then the inequality is strict.
\end{proposition}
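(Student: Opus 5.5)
The argument uses only the definition of a minimizer over a common feasible set; the quadratic structure of $C_{\rm quad}$ plays no role beyond making the objective well defined. I will write $S(P)=\sum_{l,m,q}P_{l,m,q}\,\ell_{l,m,q}$ for the self-term objective of \Cref{eq:mckp_obj}. I will write $C_{\rm quad}(P)=S(P)+2\sum_{l}\sum_m \operatorname{err}_{\rm cross}(l,l{+}1,m)$ for the block-tridiagonal surrogate of \Cref{eq:crosslayer_loss}, evaluated at the assignment induced by $P$.

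First I will check that both argmins are attained. $\mathcal{F}$ is a finite set, since each $P_{l,m,q}\in\{0,1\}$ and the index set is finite. I will assume $\mathcal{F}$ is nonempty, which is implicit in the statement. Any real-valued function on a finite nonempty set attains its minimum, so $P_{\rm self}$ and $P_{\rm cross}$ exist.

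The main inequality is then one line. Both minimizers are taken over the same set $\mathcal{F}$, so $P_{\rm self}\in\mathcal{F}$. Because $P_{\rm cross}$ minimizes $C_{\rm quad}$ over $\mathcal{F}$, we have $C_{\rm quad}(P_{\rm cross})=\min_{P\in\mathcal{F}}C_{\rm quad}(P)\le C_{\rm quad}(P_{\rm self})$. This uses nothing about $S$ beyond the fact that it yields a feasible point.

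For strictness I will argue by contrapositive. Suppose equality $C_{\rm quad}(P_{\rm cross})=C_{\rm quad}(P_{\rm self})$ held. Then $C_{\rm quad}(P_{\rm self})$ would equal the minimum value of $C_{\rm quad}$ over $\mathcal{F}$. Since $P_{\rm self}\in\mathcal{F}$, this makes $P_{\rm self}$ a minimizer of $C_{\rm quad}$ over $\mathcal{F}$, contradicting the hypothesis. Hence the inequality is strict whenever $P_{\rm self}\notin\operatorname*{arg\,min}_{\mathcal{F}}C_{\rm quad}$.

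There is no substantive obstacle. The only care needed is to state the well-posedness assumptions: $\mathcal{F}$ is finite and nonempty, and $C_{\rm quad}$ is real-valued on $\mathcal{F}$. I would also remark that the guarantee is about the quadratic surrogate $C_{\rm quad}$ and the exact cross-aware optimum, not about the greedy Stage-2 search. For Stage 2, the analogous ``never worse'' property instead follows from its acceptance rule: it starts at the Stage-1 assignment and accepts a swap only if the measured loss does not increase.
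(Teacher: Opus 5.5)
Your proof is correct and follows the paper's argument: $P_{\rm self}\in\mathcal{F}$ together with the minimality of $P_{\rm cross}$ gives the inequality, and strictness holds because a feasible point attaining the minimum value of $C_{\rm quad}$ would itself be a minimizer. Your well-posedness check (that $\mathcal{F}$ is finite and nonempty) is a harmless addition the paper leaves implicit; the one small inaccuracy is in your closing aside, since Algorithm~\ref{alg:casa} accepts a swap only on a strict decrease of $\mathcal{L}_{\rm cal}$ rather than a mere non-increase, though the never-worse conclusion for Stage 2 still holds.
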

The self-terms are given by~\Cref{eq:proxy_full}. 
For the cross-terms, we apply a Kronecker factorization to the off-diagonal Fisher block:
$
  \mF^{(l,m),(l+1,m)} \approx \mathbb{E}_n[\vx_n^{(l,m)}(\vx_n^{(l+1,m)})^\top]
    \otimes
    \mathbb{E}_n [\vg_n^{(l,m)}(\vg_n^{(l+1,m)})^\top],
  \label{eq:cross_fisher}
$
For computational tractability, we approximate the Kronecker factors of \Cref{eq:cross_fisher} by their diagonals (consistent with the diagonal approximation already used for the self-term). 
The cross-term then reduces to elementwise form:
\begin{equation}
    \operatorname{err}_\text{cross}(l, l{+}1, m;\, q, q')  \;\approx\; \sum_{i,j} b_i^{\text{cross}} \cdot a_j^{\text{cross}} \cdot \Delta w_{ij}^{(l,m,q)} \cdot \Delta w_{ij}^{(l+1,m,q')}
  \label{eq:cross_err}
\end{equation}
where
$a_j^{\text{cross},(l,l+1,m)} := \mathbb{E}_n[x_{n,j}^{(l,m)} x_{n,j}^{(l+1,m)}]$, and 
$b_i^{\text{cross},(l,l+1,m)} := \mathbb{E}_n[g_{n,i}^{(l,m)}\, g_{n,i}^{(l+1,m)}]$.

The cross-layer coefficients $b_i^\text{cross}$ and $a_j^\text{cross}$ are signed, unlike their self-term counterparts, and the signs of $\Delta w_{ij}^{(l,m,q)}$ are unavailable at the planning stage 
since they depend on the candidate $q$.

Applying the triangle inequality and then the Cauchy--Schwarz bound
to~\Cref{eq:cross_err} yields the Cauchy--Schwarz cross proxy:
\begin{align}
    & \operatorname{err}_\text{cross}  (l,l+1,m;q,q')
    \le \\
    & \sum_{i,j}\sqrt{b_i^{(l,m)} b_i^{(l+1,m)}}\sqrt{a_j^{(l,m)} a_j^{(l+1,m)}}
    \left|\Delta w_{ij}^{(l,m,q)}\right| 
    \left|\Delta w_{ij}^{(l+1,m,q')}\right| 
    =:\;\bar{C}^{(l,m)}_{q,q'}.
\end{align}
Since this upper bound depends only on the self-term diagonals $a^{(l,m)}, b^{(l,m)}$ already used in \Cref{eq:proxy_full}, the table $\bigl\{\bar{C}^{(l,m)}_{q,q'}\bigr\}_{(q,q')\in\mathcal{Q}^2}$ can be precomputed without an extra calibration pass.
The total proxy objective minimized by the local search (\Cref{alg:casa}, Stage 2) is:
$
J(\boldsymbol{q}) = 2\sum_{l=1}^{L-1}\sum_{m} \bar{C}^{(l,m)}_{q_{l,m},\,q_{l+1,m}} \label{eq:proxy_total}
$.

\paragraph{Local search procedure.}
Based on the cross-term surrogate, we perform a local search to refine the MCKP solution obtained from Stage 1.
The MCKP has already minimized the self-term proxy; the local search further reduces the total objective~\eqref{eq:proxy_total} by exploiting cross-layer interactions while maintaining the budget constraint.
Each round consists of three phases: (1) candidate generation, (2) evaluation, and (3) acceptance.
In Phase 1 (candidate generation), we construct feasible swap candidates.
Each candidate consists of $k$ upgrades (modules whose bit-width index is increased by one) and $k$ downgrades (modules whose bit-width index is decreased by one), chosen so that the total effective bit budget does not exceed the target.
We write $\boldsymbol{q} \oplus s$ for the resulting allocation, in which unaffected modules retain their original bit-widths and every module satisfies $(\boldsymbol{q}\oplus s)_{l,m} \in \{1,\dots,|\mathcal{Q}|\}$.
We then rank candidates by their estimated cross-term proxy improvement.
For a swap $s$, let $q_{l,m}^{\text{new}} := (\boldsymbol{q} \oplus s)_{l,m}$ denote the bit-width assigned to module $(l,m)$ after applying $s$.
The cross-term delta is then:
\begin{equation}
  \Delta_{\text{cross}}(s) = 2\sum_{l,m}\Bigl(
        \bar{C}^{(l,m)}_{q_{l,m}^{\text{new}},\,q_{l+1,m}^{\text{new}}}
      - \bar{C}^{(l,m)}_{q_{l,m},\,q_{l+1,m}}
    \Bigr).
  \label{eq:cross_delta}
\end{equation}
In Phase 2 (evaluation), we form $\mathcal{S}$ as the top-$K$ swaps with the smallest (most negative) $\Delta_{\text{cross}}(s)$, restricted to $\Delta_{\text{cross}}(s) < 0$.
For each $s \in \mathcal{S}$, we apply the candidate allocation $\boldsymbol{q} \oplus s$ to the model and measure the next-token prediction loss on calibration data.
In Phase 3 (acceptance), we accept the candidate with the largest loss reduction:
\begin{equation}
    \boldsymbol{q}^{(r+1)} = \boldsymbol{q}^{(r)} \oplus
    \underset{s \in \mathcal{S}}{\operatorname{argmin}}\;
    \mathcal{L}_{\rm cal}(\boldsymbol{q}^{(r)} \oplus s)
    \label{eq:acceptance}
\end{equation}
where $\mathcal{L}_{\rm cal}$ is the calibration loss.
If no candidate improves upon the incumbent, the search terminates.

\section{Experiments}
\label{sec:casa_experiments}

\paragraph{Setup.}
We evaluate CASA on five models: Llama-2-7B, Llama-3-8B, Llama-3.1-8B, Qwen3-8B, and Qwen3-14B.
We compare four methods:
\begin{itemize}
    \item \textbf{Uniform}: all modules are assigned the same bit-width.
    \item \textbf{Q-Palette}~\citep{lee2025qpalette}: one of the latest mixed-precision methods, in which a scalar sensitivity parameter is computed via the HIGGS metric.
    While the original Q-Palette uses condensed quantizers, we replace them with widely-used GPTQ~\citep{frantar2022gptq}.
    \item \textbf{CASA\textsubscript{self}}: an ablation that solves only the activation-aware MCKP (Stage 1 of \Cref{alg:casa}), without the cross-term local search.
    \item \textbf{CASA} (ours): the full proposed method (\Cref{alg:casa}).
\end{itemize}
All methods use RTN (Round-to-Nearest) as the proxy quantizer. We use GPTQ~\citep{frantar2022gptq} as the main quantizer with a group size of 128 and C4 as calibration data.
The candidate bit-widths $\mathcal{Q}$ are integers from 2 to 8.
We use SCIP~\citep{bolusani2024scip} to solve the MCKP.
For the local search (Stage 2), we use $K=100$ candidate evaluations per round with a maximum of 20 rounds.
Each swap consists of one upgrade paired with one downgrade ($k=1$).
Experiments were conducted on one NVIDIA B200 GPU.
We report perplexity on WikiText-2 and average accuracy on a suite of common-sense reasoning benchmarks.

\paragraph{Results and Discussion.}
\Cref{tab:results} presents the main results.
\Cref{fig:allocation_llama} visualizes the bit allocations produced by the baselines and our proposed CASA.
More results are shown in~\Cref{tab:additional_results}.
\begin{table*}[tbhp]
\centering
\caption{WikiText-2 perplexity ($\downarrow$) and 6-task average accuracy ($\uparrow$, \%)
  for various mixed-precision quantization methods and different bit budgets.
  Each cell: PPL / ACC.
  \textbf{Bold PPL}: lowest per row; \textbf{bold ACC}: highest per row.
  $^\dagger$Uniform at 2-bit, shown for reference only.}
\label{tab:results}
\setlength{\tabcolsep}{3.5pt}
\begin{tabular}{@{}ll ccc>{\columncolor{blue!8}}c@{}}
\toprule
Model & BPW & Uniform & Q-Palette & CASA\textsubscript{self} & CASA \\
\midrule
\multirow{6}{*}{\shortstack[l]{Llama-2-7B\\{(FP16: 4.86 / 58.6)}}}
  & 2.25 & $24.20^\dagger$ / 36.8  & 11.59 / 42.4  & \textbf{11.01} / 42.1  & 11.19 / \textbf{42.7} \\
  & 2.50 & --  & 8.31 / 46.9  & 7.51 / 47.9  & \textbf{7.48} / \textbf{48.3} \\
  & 2.75 & --  & 6.93 / 51.2  & 6.22 / \textbf{52.8}  & \textbf{6.18} / 52.5 \\
  & 3.00 & 5.48 / 55.6  & 5.86 / 55.0  & \textbf{5.16} / 56.1  & 5.48 / \textbf{56.5} \\
  & 3.50 & --  & 5.24 / \textbf{57.3}  & \textbf{5.16} / 56.1  & \textbf{5.16} / 56.1 \\
  & 4.00 & 4.97 / 57.5  & 5.03 / 57.8  & \textbf{4.98} / 57.7  & \textbf{4.98} / \textbf{57.9} \\
\addlinespace
\midrule
\multirow{6}{*}{\shortstack[l]{Llama-3-8B\\{(FP16: 5.49 / 65.0)}}}
  & 2.25 & $256.28^\dagger$ / 29.8  & 47.20 / 34.0  & 35.75 / 35.6  & \textbf{22.41} / \textbf{39.2} \\
  & 2.50 & --  & 21.43 / 39.2  & 13.21 / 45.3  & \textbf{11.86} / \textbf{47.0} \\
  & 2.75 & --  & 11.95 / 49.8  & 9.69 / 52.2  & \textbf{8.83} / \textbf{53.3} \\
  & 3.00 & 12.52 / 57.7  & 9.16 / 55.8  & 7.54 / 58.5  & \textbf{7.15} / \textbf{58.6} \\
  & 3.50 & --  & 7.03 / 61.8  & \textbf{6.30} / 62.3  & \textbf{6.30} / \textbf{62.4} \\
  & 4.00 & 11.52 / 59.0  & 6.06 / 64.0  & 5.86 / 64.0  & \textbf{5.85} / \textbf{64.1} \\
\addlinespace
\midrule
\multirow{6}{*}{\shortstack[l]{Llama-3.1-8B\\{(FP16: 5.57 / 65.8)}}}
  & 2.25 & $103.52^\dagger$ / 30.5  & 40.67 / 34.3  & 25.65 / 37.6  & \textbf{20.56} / \textbf{39.6} \\
  & 2.50 & --  & 19.38 / 38.9  & 13.01 / 45.4  & \textbf{11.93} / \textbf{48.1} \\
  & 2.75 & --  & 11.82 / 50.4  & 9.56 / 53.2  & \textbf{8.78} / \textbf{54.1} \\
  & 3.00 & 19.99 / 53.9  & 9.34 / 56.5  & 7.31 / 59.0  & \textbf{7.09} / \textbf{60.4} \\
  & 3.50 & --  & 6.84 / 62.3  & 6.31 / \textbf{63.2}  & \textbf{6.30} / 63.1 \\
  & 4.00 & 9.27 / 62.2  & 6.12 / 63.9  & \textbf{5.89} / \textbf{64.6}  & \textbf{5.89} / \textbf{64.6} \\
\addlinespace
\midrule
\multirow{6}{*}{\shortstack[l]{Qwen3-8B\\{(FP16: 8.58 / 65.8)}}}
  & 2.25 & $24.32^\dagger$ / 33.9  & 20.19 / 34.3  & 14.63 / 39.7  & \textbf{13.63} / \textbf{40.8} \\
  & 2.50 & --  & 16.62 / 36.5  & 11.87 / 46.7  & \textbf{11.60} / \textbf{47.5} \\
  & 2.75 & --  & 13.89 / 38.5  & 10.60 / 51.7  & \textbf{10.29} / \textbf{54.5} \\
  & 3.00 & 9.64 / 61.1  & 12.28 / 41.7  & \textbf{9.41} / 60.2  & 9.46 / \textbf{61.5} \\
  & 3.50 & --  & 9.40 / 59.5  & 9.20 / 64.1  & \textbf{9.19} / \textbf{64.2} \\
  & 4.00 & 8.83 / 64.7  & 9.05 / 62.8  & \textbf{8.82} / \textbf{65.1}  & \textbf{8.82} / 64.7 \\
\addlinespace
\midrule
\multirow{6}{*}{\shortstack[l]{Qwen3-14B\\{(FP16: 7.58 / 69.7)}}}
  & 2.25 & $11.68^\dagger$ / 42.4  & 10.79 / 44.5  & 10.11 / 46.2  & \textbf{9.75} / \textbf{51.7} \\
  & 2.50 & --  & 10.25 / 46.0  & 9.40 / 52.0  & \textbf{9.06} / \textbf{52.6} \\
  & 2.75 & --  & 9.83 / 48.9  & \textbf{8.51} / 60.2  & \textbf{8.51} / \textbf{62.0} \\
  & 3.00 & 8.29 / 66.2  & 9.52 / 50.4  & 8.24 / 65.6  & \textbf{8.14} / \textbf{66.2} \\
  & 3.50 & --  & 8.35 / 64.6  & 8.02 / \textbf{67.7}  & \textbf{7.95} / 67.0 \\
  & 4.00 & 7.84 / 68.8  & 8.05 / 67.1  & 7.82 / \textbf{68.5}  & \textbf{7.78} / \textbf{68.5} \\
\bottomrule
\end{tabular}
\end{table*}

\begin{figure}[htbp]
  \centering
  \includegraphics[width=0.98\linewidth]{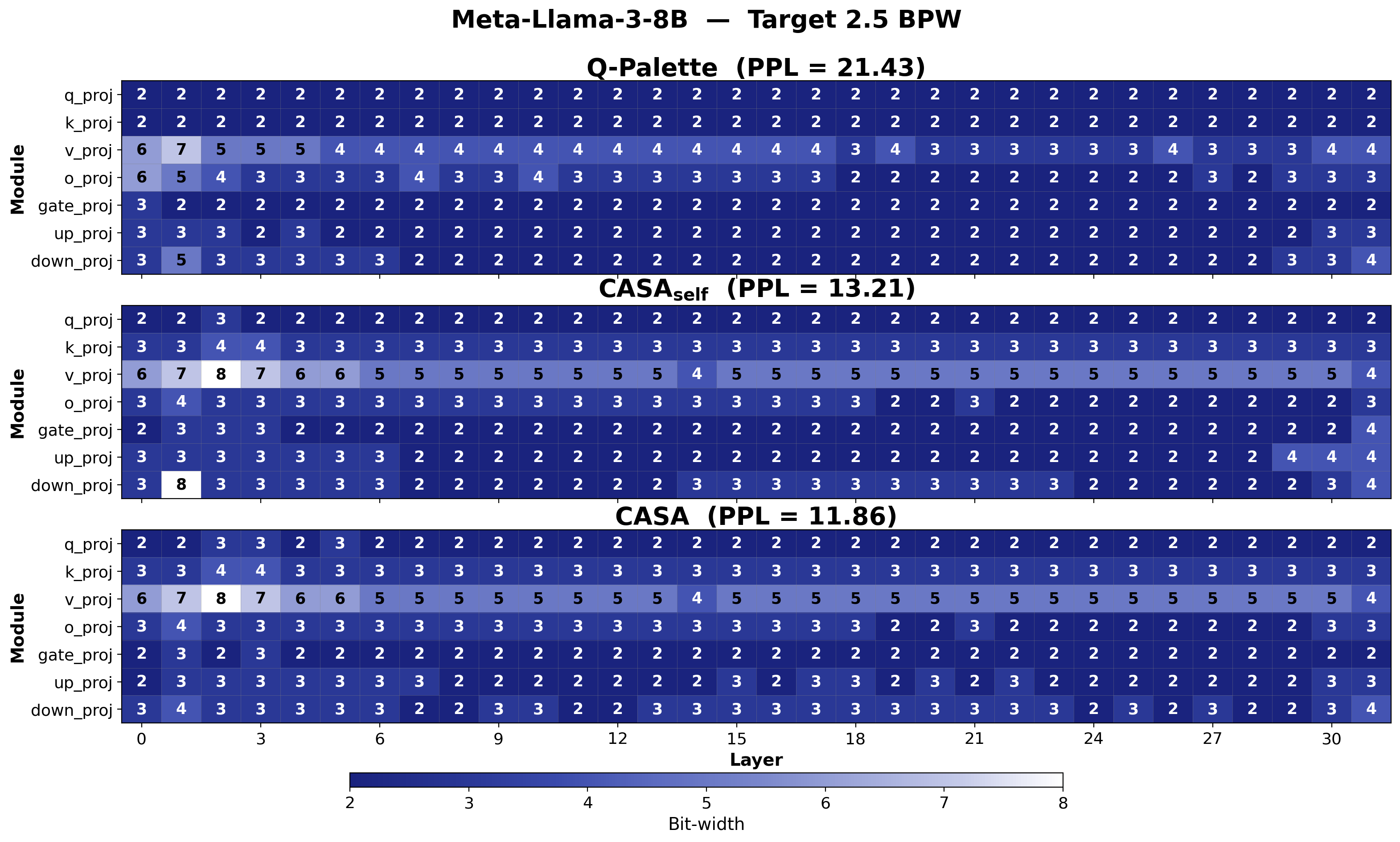}
  \caption{Bit allocation for Llama-3-8B at 2.5 BPW (blue: lower precision; white: higher precision).}
  \label{fig:allocation_llama}
\end{figure}

The key findings are:
\begin{itemize}
    \item 
    \textbf{Better results than the scalar-proxy method.}
    CASA and its ablation \textbf{CASA\textsubscript{self}} consistently achieve better performance than the latest scalar-based proxy, Q-Palette.
    \item \textbf{Massive activation channels in early-layer V projections.}
    On Llama-3-8B at 2.5 BPW, V-projection input Gram matrices $\mA$ in layers 0–4 have $\kappa(\mA)\in[2.4{\times}10^{11}, 1.5{\times}10^{15}]$. CASA flags these as critical, allocating 5–8 bits (peak 8 on layer 2). Q-Palette also rates V-projection important on average but caps at 7 bits and drops to 5 bits on layers 2–4 (vs. CASA's 8/7/6).
    \item
    \textbf{Heterogeneous output sensitivity.} The output-side Hessian $\mB$ varies substantially across output channels: in V-projection modules in particular, a subset of output dimensions carries disproportionate influence on the attention computation. CASA exploits this heterogeneity by allocating more bits to such output-sensitive modules (see \Cref{fig:allocation_llama}).
    \item
    \textbf{Cross-term aware local search improves quantized model performance.}
    In the second stage of CASA, we perform a local search that exploits the cross-term, which consistently achieves better results than CASA\textsubscript{self}.
    The improvements are largest at $2.25$--$2.75$ BPW.
    Replacing the cross-term ranking with a self-term ranking improves over CASA\textsubscript{self} but still falls short of full CASA (see Appendix~\ref{app:ablation_self_local_search}), showing that the cross-layer coupling itself is the source of the additional gain.
    \item \textbf{Distortion bound predicts the zero-shot accuracy improvement.}
    \Cref{thm:app_scalar_sharp_distortion} bounds the worst-case
    multiplicative distortion of any scalar proxy.
    \Cref{tab:distortion_vs_gap} shows the relationship between this distortion and the zero-shot accuracy improvement at 3.0~BPW: CASA's gain over Q-Palette grows substantially for models with higher distortion.
\end{itemize}

\begin{table}[t]
\centering
\caption{Mean distortion bound ($D$) and accuracy gain of CASA\textsubscript{self} over Q-Palette at 3.0 BPW (see~\Cref{thm:app_scalar_sharp_distortion}).
  $D_{\mathrm{med}}$ and $\bar{D}$ are the median and arithmetic mean over all modules and layers.}
\label{tab:distortion_vs_gap}
\begin{tabular}{@{}l rr rr r@{}}
\toprule
Model
  & $\log_{10} D_{\mathrm{med}}$
  & $\log_{10} \bar{D}$
  & \multicolumn{2}{c}{ACC (\%)}
  & $\Delta$ACC \\
\cmidrule(lr){4-5}
  &  &  & Q-Palette & CASA\textsubscript{self} & ($\%$) \\
\midrule
Llama-2-7B   & 2.81 & \textbf{7.73}  & 55.0 & 56.1 & \textbf{$+$1.1}  \\
Llama-3-8B   & 2.62 & \textbf{8.30}  & 55.8 & 58.5 & \textbf{$+$2.7}  \\
Llama-3.1-8B & 2.63 & \textbf{7.96}  & 56.5 & 59.0 & \textbf{$+$2.5}  \\
Qwen3-8B     & 4.21 & \textbf{10.86} & 41.7 & 60.2 & \textbf{$+$18.5} \\
Qwen3-14B    & 4.06 & \textbf{9.48}  & 50.4 & 65.6 & \textbf{$+$15.2} \\
\bottomrule
\end{tabular}
\end{table}

\section{Conclusion}
We proposed Cross-layer Activation-aware Sensitivity Allocation (CASA), which replaces scalar sensitivity proxies with an activation-aware quadratic derived from the Kronecker-factored Hessian and refines the allocation via cross-layer local search.
We also provided theoretical results bounding the worst-case distortion of scalar-based proxies relative to the full activation-aware quadratic.
Experiments on 7B--14B LLMs show that the activation-aware proxy consistently outperforms scalar-based proxies, with the cross-layer refinement providing further gains particularly in the ultra-low-bit regime.
The empirical improvement correlates with the per-model distortion bound $\bar{D}$ (\Cref{tab:distortion_vs_gap}), consistent with the prediction of \Cref{thm:app_scalar_sharp_distortion}.

\paragraph{Limitation.}
\label{app:limitation}
Our cross-term approximation only models adjacent layer pairs $(l, l{+}1)$ and does not capture longer-range or higher-order interactions, as faithfully incorporating them would make calibration and optimization combinatorially intractable.
Moreover, our objective is an activation-aware quadratic proxy rather than the true downstream loss, so the proxy ranking may diverge from the true perplexity ranking in extreme regimes.
\paragraph{Broader Impacts.}
On the positive side, better bit allocation enables higher-quality LLM inference on the same hardware budget, supporting on-device and edge deployment in resource-constrained settings.
On the negative side, lowering the cost of deploying capable LLMs simultaneously lowers the barrier to misuse such as disinformation generation;
this concern is common to LLM efficiency research broadly and is not amplified by our specific contribution.

\newpage
\bibliographystyle{plainnat}
\bibliography{autobit}


\newpage
\appendix

\section{Overall Algorithm}
\label{app:overall_algorithm}

We summarize the proposed CASA algorithm in \Cref{alg:casa}.
\begin{algorithm}[htbp]
    \caption{CASA: Cross-layer Activation-aware Sensitivity Allocation}
    \label{alg:casa}
    \begin{algorithmic}[1]
    \Require Pretrained model with $L$ layers and module set $\mathcal{M}$ per layer; calibration set $\mathcal{D}$; bit-width candidate set $\mathcal{Q}$; target BPW $B_{\rm avg}$; local-search rounds $R$; top-$K$ swap budget per round.
    \Ensure Per-module allocation $\boldsymbol{q}^\star = (q^\star_{l,m})$ with $q^\star_{l,m} \in \mathcal{Q}$.
    \Statex \textit{// Stage 0: shared statistics}
    \State Run one forward--backward pass on $\mathcal{D}$ to collect, for every $(l,m)$, the diagonals $a^{(l,m)}\!=\!\mathrm{diag}\,\mathbb{E}[\vx\vx^\top]$ and $b^{(l,m)}\!=\!\mathrm{diag}\,\mathbb{E}[\vg\vg^\top]$.
    \State For each $(l,m,q)\!\in\![L]\!\times\!\mathcal{M}\!\times\!\mathcal{Q}$, materialize the perturbation $\dW^{(l,m,q)}$.
    \Statex \textit{// Stage 1: self-term MCKP \hfill (\Cref{sec:casa:selfterm})}
    \State $\ell_{l,m,q}\gets\sum_{i,j} b_i^{(l,m)} a_j^{(l,m)} \bigl(\Delta w_{ij}^{(l,m,q)}\bigr)^{\!2}$
    \State $\boldsymbol{q}^{(0)}\gets\arg\min_{\boldsymbol{q}}\sum_{l,m} \ell_{l,m,q_{l,m}}\quad\text{s.t.}\quad\mathrm{BPW}(\boldsymbol{q})\le B_{\rm avg}$
    \Statex \textit{// Stage 2: cross-term local search \hfill (\Cref{sec:casa:crosslayer})}
    \State Precompute $\bar{C}^{(l,m)}_{q,q'}$ for all $q,q'\!\in\!\mathcal{Q}$, $l\!\in\![L\!-\!1]$, $m\!\in\!\mathcal{M}$
    \State $\boldsymbol{q}^\star\gets\boldsymbol{q}^{(0)}$;\quad $\mathcal{L}^\star\gets\mathcal{L}_{\rm cal}(\boldsymbol{q}^\star)$ \Comment{record calibration loss}
    \For{$r=1,\ldots,R$}
      \For{each feasible swap $s$} \Comment{proxy screening}
        \State $q_{l,m}^{\rm new}\gets(\boldsymbol{q}^\star\oplus s)_{l,m}$ for all $(l,m)$
        \State $\Delta_{\rm cross}(s)\gets 2\sum_{l,m}\!\Bigl(\bar{C}^{(l,m)}_{q_{l,m}^{\rm new},\,q_{l+1,m}^{\rm new}} - \bar{C}^{(l,m)}_{q^\star_{l,m},\,q^\star_{l+1,m}}\Bigr)$
      \EndFor
      \State $\mathcal{S}\gets$ top-$K$ swaps with $\Delta_{\rm cross}(s)<0$
      \If{$\mathcal{S}=\emptyset$} \textbf{break} \EndIf
      \State $s^\star\gets\arg\min_{s\in\mathcal{S}}\;\mathcal{L}_{\rm cal}(\boldsymbol{q}^\star\oplus s)$ \Comment{evaluate calibration loss}
      \If{$\mathcal{L}_{\rm cal}(\boldsymbol{q}^\star\oplus s^\star)\ge\mathcal{L}^\star$} \textbf{break} \EndIf
      \State $\boldsymbol{q}^\star\gets\boldsymbol{q}^\star\oplus s^\star$;\quad $\mathcal{L}^\star\gets\mathcal{L}_{\rm cal}(\boldsymbol{q}^\star)$
    \EndFor
    \State \Return $\boldsymbol{q}^\star$
    \end{algorithmic}
\end{algorithm}

\section{Per-Module Distortion Bound}
\label{app:distortion_bound}
\Cref{thm:app_scalar_sharp_distortion} shows that any scalar proxy incurs a worst-case
multiplicative distortion of $D := \sqrt{\kappa(\mA)\kappa(\mB)}$ relative to the
activation-aware quadratic proxy, where
$\kappa(\mA) = \lambda_{\max}(\mA)/\lambda_{\min}(\mA)$ and
$\kappa(\mB) = \lambda_{\max}(\mB)/\lambda_{\min}(\mB)$ are the condition
numbers of the input-side Gram matrix and the output-side curvature matrix, respectively.
When this distortion differs across modules, the scalar proxy misranks their relative sensitivities, producing suboptimal bit allocations.

To visualize the severity of this bound in practice, we compute
$\log_{10}D$ for every (layer, module) pair
in five representative models using 128 C4 calibration samples
(sequence length 256).
\Cref{fig:distortion_bound} displays the results as heatmaps in the same (module $\times$ layer) layout as the bit-allocation maps.
\paragraph{Key observations.}
\begin{itemize}
    \item The distortion bound spans many orders of magnitude within a single model.
    For example, Qwen3-8B ranges from ${\sim}10^{1.8}$ (o\_proj, layer~2)
    to ${\sim}10^{13.2}$ (gate\_proj, layer~1), a spread of over 11 orders.
    \item Early layers (0--3) consistently exhibit the highest distortion
    across all models, with Q projections in Llama-3-8B reaching $10^{10}$ and
    gate projections in Qwen3-8B reaching $10^{13}$.
    These early layers are exactly where scalar-proxy bit allocation diverges most
    from the activation-aware solution.
    \item Within each layer further weakens the scalar proxy: in Qwen3 layers, $\kappa{\sim}10^{10}$ modules coexist with $\kappa{\sim}10^{2}$ ones, so the upper bound $\sqrt{\kappa(\mA)\kappa(\mB)}$---the only a priori guarantee on scalar-proxy fidelity---varies by up to $8$ orders of magnitude across modules of the same layer, precluding any uniform ranking
    guarantee for the scalar proxy.
\end{itemize}
These heatmaps provide layer-by-layer empirical evidence that the theoretical
bound is not a loose worst-case artifact:
the distortion is large enough in practice to cause substantial misranking,
motivating the use of full activation-aware proxies as employed by CASA.

\begin{figure}[htbp]
    \centering
    \includegraphics[width=1.0\linewidth]{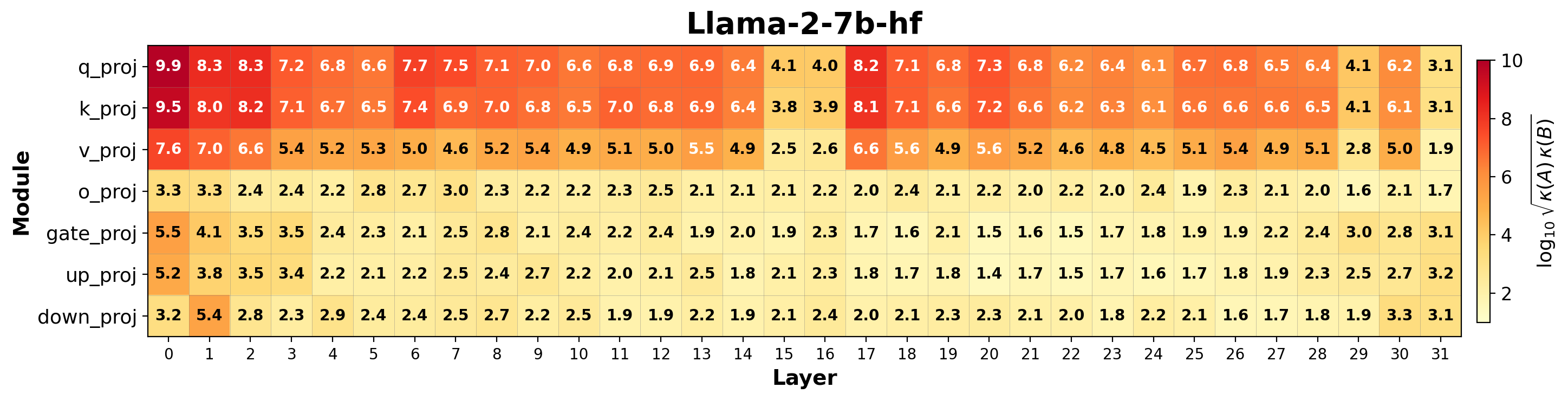}
    \includegraphics[width=1.0\linewidth]{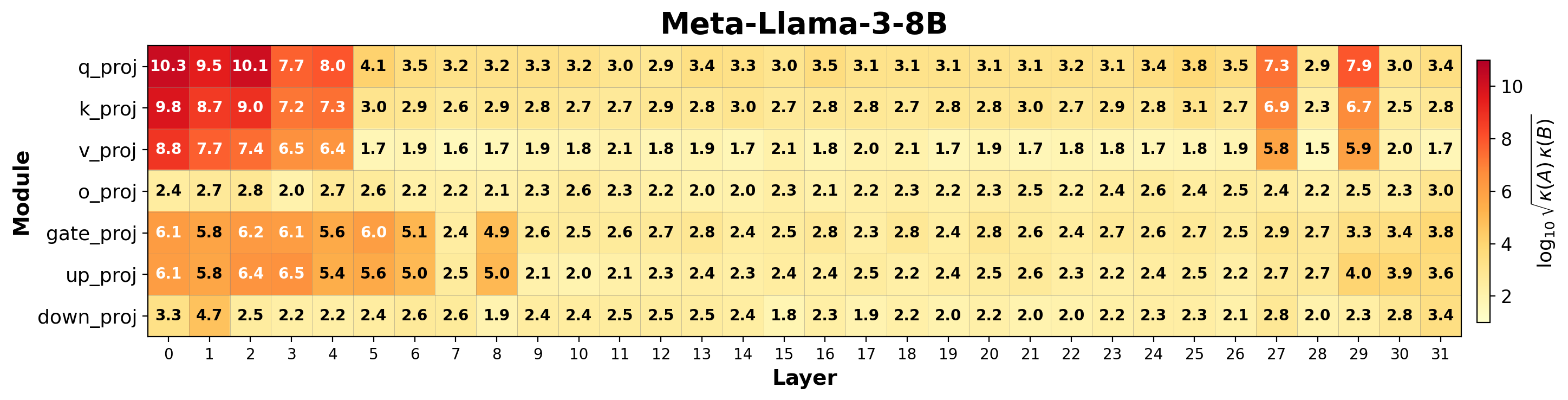}
    \includegraphics[width=1.0\linewidth]{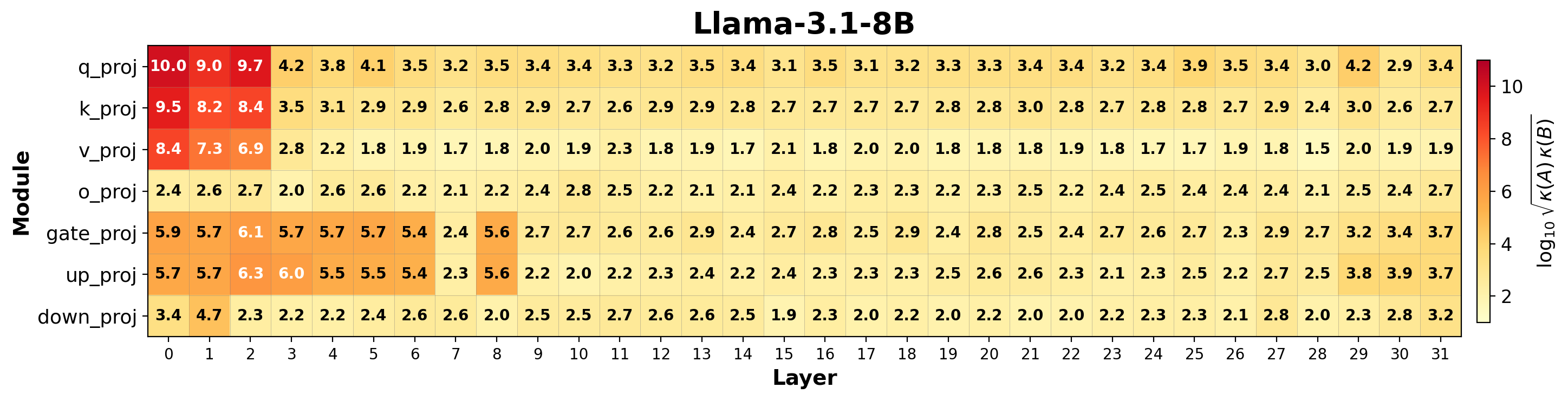}
    \includegraphics[width=1.0\linewidth]{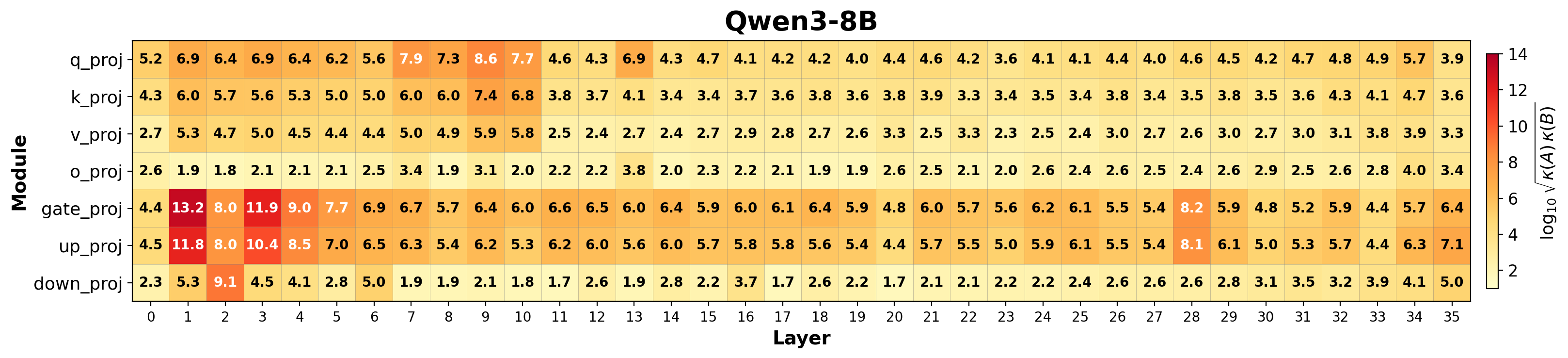}
    \includegraphics[width=1.0\linewidth]{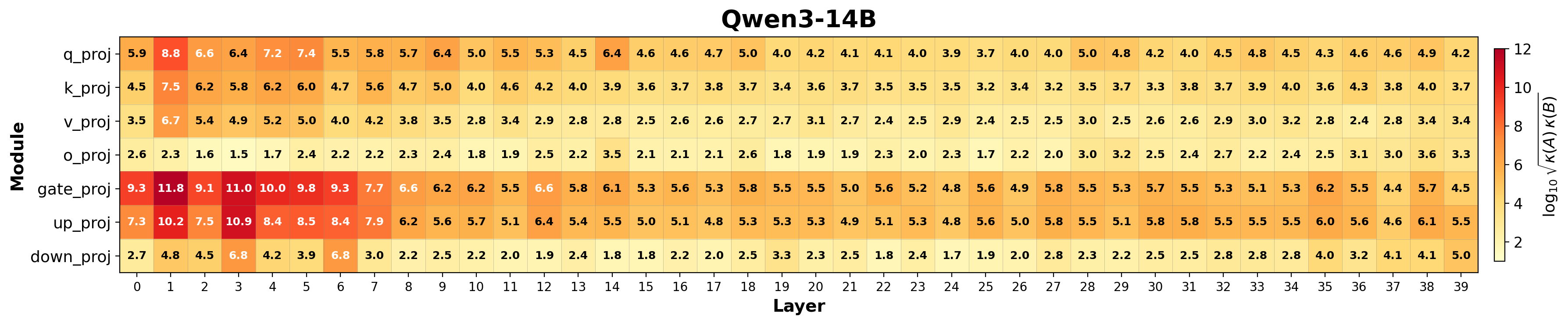}
    \caption{Per-module distortion bound $\log_{10}D$ across (layer, module) pairs for five representative models. Each cell shows $\log_{10}D$; brighter colors indicate larger worst-case scalar-proxy distortion. Early-layer attention/MLP projections exhibit the largest distortion.}
    \label{fig:distortion_bound}
\end{figure}

\section{Ablation Study: Single-Stage MCKP Combining Self- and Cross-Terms}
\label{app:ablation_one_stage}

CASA decouples the bit allocation into two stages: Stage 1 solves an MCKP on the self-term alone, and Stage 2 refines the assignment by a cross-layer-aware local search.
A natural alternative is to fold the cross-term directly into the MCKP objective and solve a single-stage problem.
We initially experimented with this approach and found that it does not improve over Stage 1 alone; this finding motivates the two-stage design of CASA.

\paragraph{Single-stage formulation.}
Adding the cross-term to~\Cref{eq:mckp_obj} yields the bilinear MCKP
\begin{equation}
  \min_{P}~
  \sum_{l,m,q} P_{l,m,q}\,\ell_{l,m,q}
  \;+\;
  2\sum_{l=1}^{L-1}\sum_{m,q,q'} P_{l,m,q}\,P_{l+1,m,q'}\,C^{(l,m)}_{q,q'},
  \label{eq:mckp_obj2}
\end{equation}
subject to the same assignment and budget constraints as in~\Cref{eq:mckp_obj}.
Here $C^{(l,m)}_{q,q'}$ is a non-negative cross-term coefficient between layers $l$ and $l{+}1$ for module $m$ when these are quantized at bit-widths $q$ and $q'$, respectively.

\paragraph{Two cross-term coefficient choices.}
We test two natural definitions of $C^{(l,m)}_{q,q'}$:
\begin{itemize}
    \item \texttt{geo\_abs}: use the Cauchy--Schwarz upper bound $\bar{C}^{(l,m)}_{q,q'}$ from~\Cref{sec:casa:crosslayer}, which depends only on the self-term diagonals already collected for Stage 1:
    \begin{equation}
      C^{(l,m),\,\mathtt{geo}}_{q,q'}
      := \bar{C}^{(l,m)}_{q,q'} = \sum_{i,j}
        \sqrt{b_i^{(l,m)} b_i^{(l+1,m)}}
        \sqrt{a_j^{(l,m)} a_j^{(l+1,m)}}
        \bigl|\Delta w_{ij}^{(l,m,q)}\bigr|
        \bigl|\Delta w_{ij}^{(l+1,m,q')}\bigr|.
      \label{eq:cross_geo_abs}
    \end{equation}
    \item \texttt{raw\_abs}: use the absolute values of the cross-layer Fisher diagonals defined in \Cref{eq:cross_err},
    \begin{equation}
      C^{(l,m),\,\mathtt{raw}}_{q,q'}
      := \sum_{i,j}
        \bigl|b_i^{\text{cross},(l,l+1,m)}\bigr|
        \bigl|a_j^{\text{cross},(l,l+1,m)}\bigr|
        \bigl|\Delta w_{ij}^{(l,m,q)}\bigr|
        \bigl|\Delta w_{ij}^{(l+1,m,q')}\bigr|.
      \label{eq:cross_raw_abs}
    \end{equation}
    Unlike \texttt{geo\_abs}, which reuses the self-term diagonals from Stage 1, \texttt{raw\_abs} requires the cross-Fisher diagonals $a^{\text{cross}}, b^{\text{cross}}$. These can be collected in the same calibration pass as the self-term diagonals, at the cost of additional memory for storing adjacent-layer activations.
\end{itemize}

\paragraph{McCormick linearization.}
Equation~\eqref{eq:mckp_obj2} is bilinear in $P$.
Introducing auxiliary binary variables $z_{l,m,q,q'} \in \{0,1\}$ together with the standard McCormick envelope linearizes the product into an integer linear programming problem:
\begin{align}
  \min_{P,z}~ &
    \sum_{l,m,q} P_{l,m,q}\,\ell_{l,m,q}
    \;+\;
    2\sum_{l=1}^{L-1}\sum_{m,q,q'} z_{l,m,q,q'}\,C^{(l,m)}_{q,q'},
  \label{eq:lin_qmckp_obj} \\
  \text{s.t.}~ &
    z_{l,m,q,q'} \le P_{l,m,q},\;\;
    z_{l,m,q,q'} \le P_{l+1,m,q'},\;\;
    z_{l,m,q,q'} \ge P_{l,m,q} + P_{l+1,m,q'} - 1,
  \label{eq:lin_qmckp_mc}
\end{align}
together with the assignment and budget constraints of~\Cref{eq:mckp_obj}.

\paragraph{Results.}
We use the same experimental setup as in~\Cref{sec:casa_experiments}; \Cref{tab:cross_term_single_stage} reports the results.
The linearized single-stage MCKP fails to improve over the self-term-only baseline under both coefficient choices, although for different reasons:
\begin{itemize}
  \item Under \texttt{geo\_abs}, the LP relaxation of the McCormick envelope is loose: the solver collapses onto degenerate, near-uniform allocations (marked $^\dagger$), and the resulting perplexity is not even monotone in the bit budget.
  \item Under \texttt{raw\_abs}, the degeneracy is avoided, but the allocations only match or marginally improve the self-term-only MCKP despite a substantially larger optimization problem.
\end{itemize}
These observations confirm that cross-layer effects are better handled \emph{outside} the MCKP formulation, motivating the two-stage design of CASA.

\begin{table*}[htbp]
\centering
\caption{Wikitext-2 perplexity ($\downarrow$) and 4-task average accuracy ($\uparrow$, \%)
    for single-stage MCKP with different objective functions.
    $\ell_{\rm self}$: self-term only;
    $\ell_{\rm self} + C^{\rm geo}$: adds the geometric-mean cross-term aggregate (\Cref{eq:cross_geo_abs});
    $\ell_{\rm self} + C^{\rm raw}$: adds the raw-absolute cross-term aggregate (\Cref{eq:cross_raw_abs}).
    All use GPTQ with group size 128, C4 calibration, and no local search.
    Each cell: PPL / ACC.
    \textbf{Bold PPL}: lowest per row; \textbf{bold ACC}: highest per row.
    $^\dagger$Solver collapse: identical allocation across all BPW targets due to McCormick LP-relaxation looseness.
  }
\label{tab:cross_term_single_stage}
\setlength{\tabcolsep}{5pt}
\begin{tabular}{@{}ll ccc@{}}
\toprule
Model & BPW & $\ell_{\mathrm{self}}$ & $\ell_{\mathrm{self}} +  C^{\rm geo}$ & $\ell_{\mathrm{self}} +  C^{\rm raw}$ \\
\midrule
\multirow{8}{*}{Llama-2-7B}
  & 2.25 & 11.01 / \textbf{43.0}  & 18.94$^\dagger$ / 39.8  & \textbf{10.93} / 42.8 \\
  & 2.50 & \textbf{7.51} / \textbf{47.8}  & 18.94$^\dagger$ / 39.8  & 7.61 / 46.9 \\
  & 2.75 & 6.22 / \textbf{52.1}  & 18.61$^\dagger$ / 40.2  & \textbf{6.22} / 51.6 \\
  & 3.00 & \textbf{5.16} / \textbf{55.3}  & 18.94$^\dagger$ / 39.8  & 5.53 / 54.9 \\
  & 3.25 & \textbf{5.28} / \textbf{56.4}  & 17.34$^\dagger$ / 39.1  & \textbf{5.28} / 56.0 \\
  & 3.50 & \textbf{5.16} / 55.3  & 17.52$^\dagger$ / 38.8  & \textbf{5.16} / \textbf{55.4} \\
  & 3.75 & \textbf{5.05} / \textbf{56.5}  & 19.19$^\dagger$ / 38.9  & \textbf{5.05} / \textbf{56.5} \\
  & 4.00 & \textbf{4.98} / 57.1  & 17.52$^\dagger$ / 39.3  & \textbf{4.98} / \textbf{57.2} \\
\addlinespace
\midrule
\multirow{8}{*}{Llama-3-8B}
  & 2.25 & 35.75 / 36.8  & \textbf{27.05} / \textbf{37.8}  & 35.62 / 37.0 \\
  & 2.50 & \textbf{13.21} / \textbf{46.2}  & 13.21 / 45.4  & 13.49 / 44.5 \\
  & 2.75 & 9.69 / 53.0  & 117.30$^\dagger$ / 33.1  & \textbf{9.67} / \textbf{53.7} \\
  & 3.00 & 7.54 / \textbf{60.3}  & 14.60$^\dagger$ / 58.9  & \textbf{7.49} / 59.6 \\
  & 3.25 & 6.60 / \textbf{62.8}  & 14.60$^\dagger$ / 58.9  & \textbf{6.60} / 62.7 \\
  & 3.50 & \textbf{6.29} / \textbf{64.2}  & 14.60$^\dagger$ / 58.9  & 6.30 / \textbf{64.2} \\
  & 3.75 & 6.07 / \textbf{65.2}  & 14.60$^\dagger$ / 58.9  & \textbf{6.07} / 65.1 \\
  & 4.00 & \textbf{5.86} / \textbf{65.8}  & 14.60$^\dagger$ / 58.9  & 5.87 / \textbf{65.8} \\
\addlinespace
\midrule
\multirow{8}{*}{Llama-3.1-8B}
  & 2.25 & 25.65 / 38.4  & 26.32$^\dagger$ / 37.4  & \textbf{25.59} / \textbf{38.5} \\
  & 2.50 & 13.01 / \textbf{46.0}  & 78.15$^\dagger$ / 32.6  & \textbf{12.82} / 45.8 \\
  & 2.75 & 9.56 / \textbf{54.0}  & 79.01$^\dagger$ / 31.5  & \textbf{9.42} / \textbf{54.0} \\
  & 3.00 & \textbf{7.31} / 59.8  & 22.70$^\dagger$ / 55.2  & 7.34 / \textbf{60.2} \\
  & 3.25 & 6.57 / \textbf{63.6}  & 22.70$^\dagger$ / 55.2  & \textbf{6.56} / 63.5 \\
  & 3.50 & 6.31 / \textbf{64.7}  & 22.70$^\dagger$ / 55.2  & \textbf{6.30} / 64.3 \\
  & 3.75 & \textbf{6.09} / \textbf{65.3}  & 22.70$^\dagger$ / 55.2  & \textbf{6.09} / 64.9 \\
  & 4.00 & 5.89 / \textbf{65.8}  & 22.70$^\dagger$ / 55.2  & \textbf{5.88} / 65.4 \\
\addlinespace
\midrule
\multirow{8}{*}{Qwen3-8B}
  & 2.25 & \textbf{14.63} / 40.5  & 24.98$^\dagger$ / 34.4  & 14.97 / \textbf{40.6} \\
  & 2.50 & 11.87 / 47.7  & 26.31$^\dagger$ / 34.0  & \textbf{11.60} / \textbf{49.3} \\
  & 2.75 & \textbf{10.60} / 54.1  & 23.44$^\dagger$ / 34.3  & \textbf{10.60} / \textbf{54.3} \\
  & 3.00 & 9.41 / 62.6  & 9.60$^\dagger$ / \textbf{63.5}  & \textbf{9.37} / 62.8 \\
  & 3.25 & \textbf{9.25} / 64.9  & 9.60$^\dagger$ / 63.5  & \textbf{9.25} / \textbf{65.4} \\
  & 3.50 & 9.20 / \textbf{66.3}  & 9.60$^\dagger$ / 63.5  & \textbf{9.17} / 66.2 \\
  & 3.75 & 8.94 / \textbf{66.7}  & 9.60$^\dagger$ / 63.5  & \textbf{8.88} / 66.1 \\
  & 4.00 & \textbf{8.82} / \textbf{67.3}  & 9.60$^\dagger$ / 63.5  & \textbf{8.82} / \textbf{67.3} \\
\addlinespace
\midrule
\multirow{8}{*}{Qwen3-14B}
  & 2.25 & \textbf{10.11} / 47.9  & 11.85$^\dagger$ / 44.0  & \textbf{10.11} / \textbf{49.6} \\
  & 2.50 & 9.40 / 53.6  & 11.76$^\dagger$ / 43.3  & \textbf{9.36} / \textbf{55.2} \\
  & 2.75 & \textbf{8.51} / \textbf{62.4}  & 11.77$^\dagger$ / 43.3  & 8.57 / 61.4 \\
  & 3.00 & 8.24 / 68.0  & 8.25$^\dagger$ / \textbf{68.8}  & \textbf{8.22} / 68.3 \\
  & 3.25 & 8.12 / \textbf{69.2}  & 8.25$^\dagger$ / 68.8  & \textbf{8.11} / \textbf{69.2} \\
  & 3.50 & 8.02 / \textbf{70.0}  & 8.25$^\dagger$ / 68.8  & \textbf{7.94} / 69.8 \\
  & 3.75 & 7.87 / \textbf{70.6}  & 8.25$^\dagger$ / 68.8  & \textbf{7.82} / 70.5 \\
  & 4.00 & \textbf{7.82} / 71.1  & 8.25$^\dagger$ / 68.8  & 7.83 / \textbf{71.2} \\
\bottomrule
\end{tabular}
\end{table*}

\newpage
\section{Ablation Study: Local Search without Cross-Layer Awareness}
\label{app:ablation_self_local_search}

In the second stage of CASA, we perform a local search that evaluates candidates using the full proxy objective including cross-term coupling between layers.
To isolate the contribution of this cross-term awareness, we compare three configurations in \Cref{tab:ablation_lsself}:
(1) CASA\textsubscript{self}, which uses the Stage-1 MCKP solution without any local search;
(2) +LS\textsubscript{self}, which applies local search with the same budget as CASA but ranks candidates by the self-term proxy alone; and
(3) the full CASA, whose local search ranks candidates by the complete proxy including cross-terms.
+LS\textsubscript{self} almost always improves over CASA\textsubscript{self} (14 of 15 configurations), confirming that local search itself is beneficial; the full CASA further improves over +LS\textsubscript{self} in 13 of 15 configurations.
The gap is most pronounced in ultra-low-bit regimes on the Llama-3 family: on Llama-3-8B at 2.25 BPW, perplexity drops from 28.15 (+LS\textsubscript{self}) to 22.41 (CASA), a 20\% relative reduction.
These results demonstrate that the self-term proxy alone cannot capture quantization error interactions across layers, and that incorporating cross-term coupling into the search objective is particularly beneficial for bit allocation in the low-bit regime.

\begin{table*}[htbp]
\centering
\caption{Ablation: self-only local search (+LS\textsubscript{self}) versus CASA\textsubscript{self} (no local search) and CASA (cross-term local search) in the low-bit regime.
  All methods share the same Stage 1 MCKP (self-only proxy) and GPTQ quantizer (group size 128, C4 calibration).
  +LS\textsubscript{self} uses the same search budget as CASA but ranks candidates by self-term proxy only (no cross-term coupling).
  Each cell: PPL / ACC.
  \textbf{Bold}: best per row.}
\label{tab:ablation_lsself}
\begin{tabular}{@{}ll cc>{\columncolor{blue!8}}c@{}}
\toprule
Model & BPW
  & CASA\textsubscript{self}
  & +LS\textsubscript{self}
  & CASA \\
\midrule
\multirow{3}{*}{\shortstack[l]{Llama-2-7B\\{(FP16: 4.86 / 58.6)}}}
  & 2.25 & 11.01 / 42.1  & \textbf{10.51} / 42.4  & 11.19 / \textbf{42.7} \\
  & 2.50 & 7.51 / 47.9  & \textbf{7.48} / 47.9  & \textbf{7.48} / \textbf{48.3} \\
  & 2.75 & 6.22 / \textbf{52.8}  & 6.21 / 52.2  & \textbf{6.18} / 52.5 \\
\midrule
\multirow{3}{*}{\shortstack[l]{Llama-3-8B\\{(FP16: 5.49 / 65.0)}}}
  & 2.25 & 35.75 / 35.6  & 28.15 / 38.0  & \textbf{22.41} / \textbf{39.2} \\
  & 2.50 & 13.21 / 45.3  & 12.44 / 46.6  & \textbf{11.86} / \textbf{47.0} \\
  & 2.75 & 9.69 / 52.2   & 9.47 / 52.9   & \textbf{8.83} / \textbf{53.3} \\
\midrule
\multirow{3}{*}{\shortstack[l]{Llama-3.1-8B\\{(FP16: 5.57 / 65.8)}}}
  & 2.25 & 25.65 / 37.6  & 23.24 / 38.7  & \textbf{20.56} / \textbf{39.6} \\
  & 2.50 & 13.01 / 45.4  & 12.38 / 45.7  & \textbf{11.93} / \textbf{48.1} \\
  & 2.75 & 9.56 / 53.2   & 9.29 / 53.6   & \textbf{8.78} / \textbf{54.1} \\
\midrule
\multirow{3}{*}{\shortstack[l]{Qwen3-8B\\{(FP16: 8.58 / 65.8)}}}
  & 2.25 & 14.63 / 39.7  & 14.10 / 40.2  & \textbf{13.63} / \textbf{40.8} \\
  & 2.50 & 11.87 / 46.7  & 11.68 / 46.7  & \textbf{11.60} / \textbf{47.5} \\
  & 2.75 & 10.60 / 51.7  & 10.52 / 53.1  & \textbf{10.29} / \textbf{54.5} \\
\addlinespace
\midrule
\multirow{3}{*}{\shortstack[l]{Qwen3-14B\\{(FP16: 7.58 / 69.7)}}}
  & 2.25 & 10.11 / 46.2  & \textbf{9.63} / 49.9   & 9.75 / \textbf{51.7} \\
  & 2.50 & 9.40 / 52.0   & 9.18 / \textbf{54.2}   & \textbf{9.06} / 52.6 \\
  & 2.75 & \textbf{8.51} / 60.2  & 8.57 / 60.2  & \textbf{8.51} / \textbf{62.0} \\
\bottomrule
\end{tabular}
\end{table*}

\newpage
\section{Visualization of bit-allocation}
\label{app:visualization}
In addition to~\Cref{fig:allocation_llama}, we present bit-allocation results for additional models in Figures~\ref{fig:allocation_llama2_7b}–-\ref{fig:allocation_qwen14b}.

\begin{figure}[htbp]
  \centering
  \includegraphics[width=0.95\linewidth]{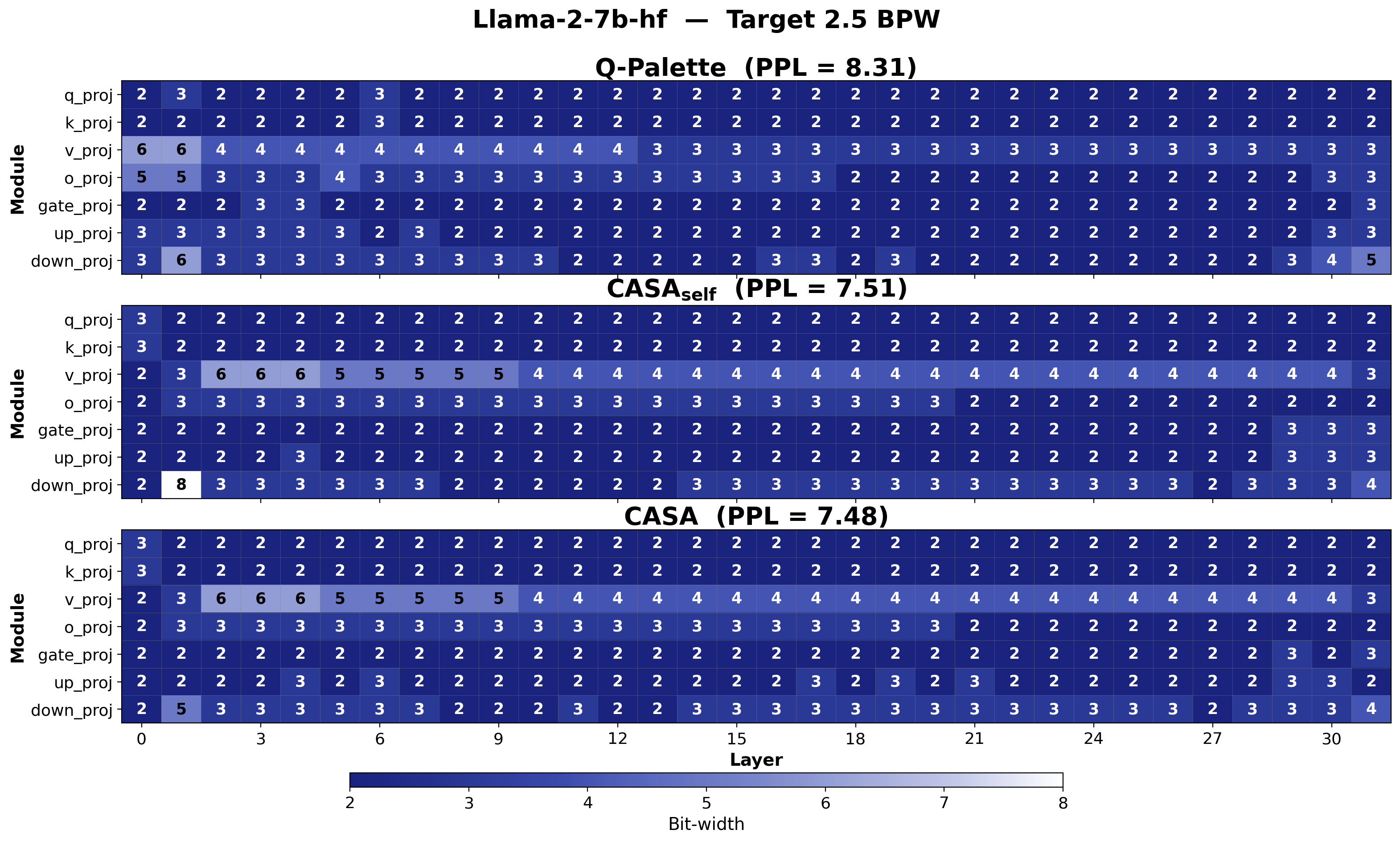}
  \caption{Visualization of bit-allocation for Llama-2-7B at 2.5BPW (blue: lower precision; white: higher precision)}
  \label{fig:allocation_llama2_7b}
\end{figure}

\begin{figure}[htbp]
  \centering
  \includegraphics[width=0.95\linewidth]{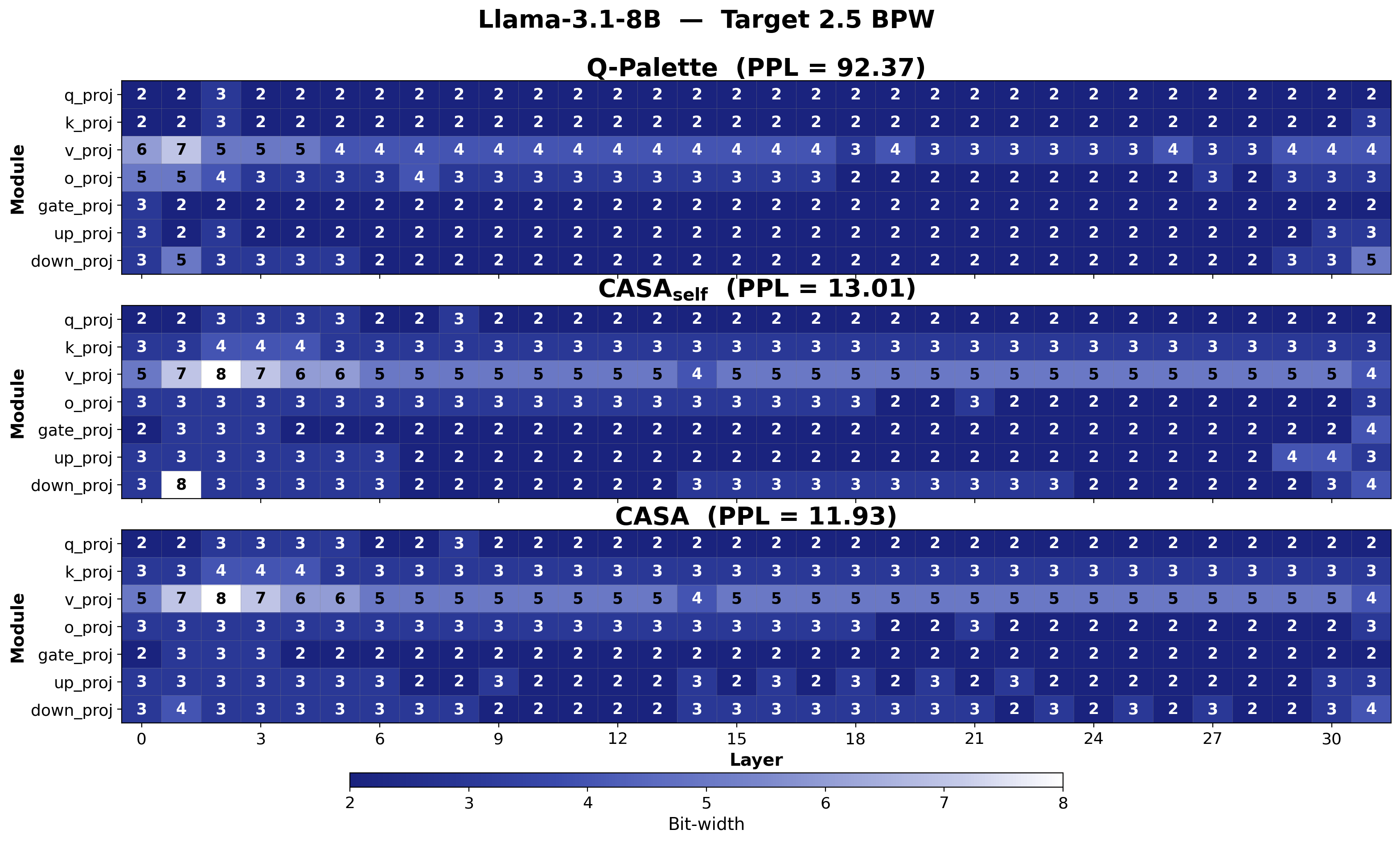}
  \caption{Visualization of bit-allocation for Llama-3.1-8B at 2.5BPW (blue: lower precision; white: higher precision)}
  \label{fig:allocation_llama31_8b}
\end{figure}

\begin{figure}[htbp]
  \centering
  \includegraphics[width=0.95\linewidth]{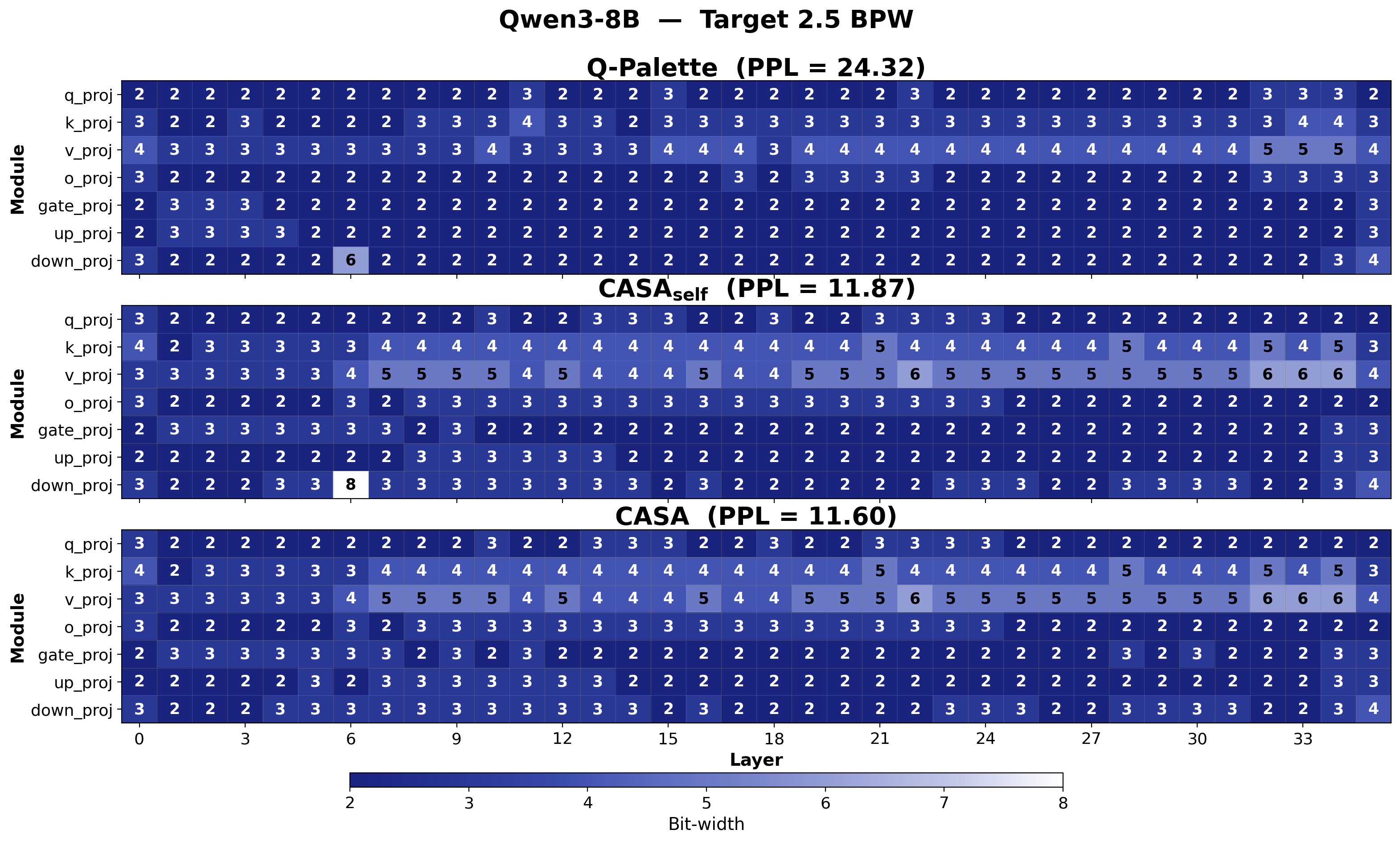}
  \caption{Visualization of bit-allocation for Qwen3-8B at 2.5BPW (blue: lower precision; white: higher precision)}
  \label{fig:allocation_qwen8b}
\end{figure}

\begin{figure}[htbp]
  \centering
  \includegraphics[width=0.95\linewidth]{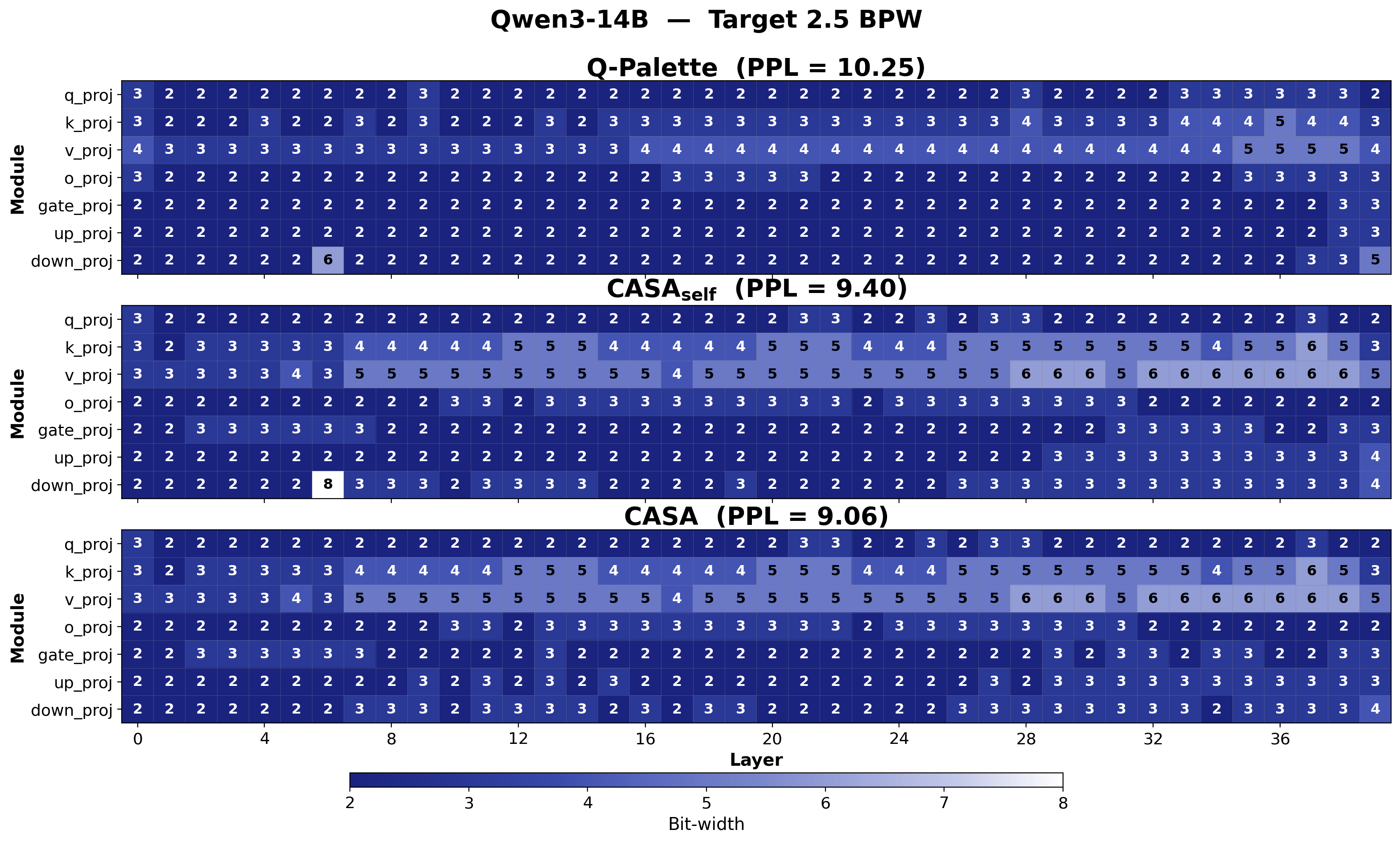}
  \caption{Visualization of bit-allocation for Qwen3-14B at 2.5BPW (blue: lower precision; white: higher precision)}
  \label{fig:allocation_qwen14b}
\end{figure}


\newpage
\section{Additional Experimental Results}
In addition to ~\Cref{tab:results},~\Cref{tab:additional_results} shows the additional experimental results under different bit budgets.
\begin{table*}[tbhp]
\centering
\caption{WikiText-2 perplexity ($\downarrow$) and 6-task average accuracy ($\uparrow$, \%)
  for various mixed-precision quantization methods and different bit budgets.
  Each cell: PPL / ACC.
  \textbf{Bold PPL}: lowest per row; \textbf{bold ACC}: highest per row
  (among mixed-precision methods).
  $^\dagger$Uniform at 2-bit as reference values.}
\label{tab:additional_results}
\setlength{\tabcolsep}{3.5pt}
\begin{tabular}{@{}l ccc>{\columncolor{blue!8}}c@{}}
\toprule
Model & BPW &  Q-Palette & CASA\textsubscript{self} & CASA \\
\midrule
\multirow{2}{*}{\shortstack[l]{Llama-2-7B\\{(FP16: 4.86 / 58.6)}}}
  & 3.25 &  5.37 / 56.1  & \textbf{5.28} / 56.8  & \textbf{5.28} / \textbf{56.9} \\
  & 3.75 &  5.14 / \textbf{57.8}  & \textbf{5.05} / 57.3  & \textbf{5.05} / 56.9 \\
\midrule
\multirow{2}{*}{\shortstack[l]{Llama-3-8B\\{(FP16: 5.49 / 65.0)}}}
  & 3.25 &  7.71 / 59.8  & 6.60 / 61.1  & \textbf{6.59} / \textbf{61.2} \\
  & 3.75 &  6.29 / \textbf{63.4}  & \textbf{6.07} / 63.2  & \textbf{6.07} / 63.2 \\
\midrule
\multirow{2}{*}{\shortstack[l]{Llama-3.1-8B\\{(FP16: 5.57 / 65.8)}}}
  & 3.25 &  7.61 / 60.2  & 6.57 / \textbf{62.1}  & \textbf{6.55} / 61.7 \\
  & 3.75 &  6.34 / 63.8  & \textbf{6.09} / \textbf{64.0}  & \textbf{6.09} / \textbf{64.0} \\
\midrule
\multirow{2}{*}{\shortstack[l]{Qwen3-8B\\{(FP16: 8.58 / 65.8)}}}
  & 3.25 &  10.57 / 49.8  & \textbf{9.25} / 62.5  & 9.29 / \textbf{63.1} \\
  & 3.75 &  9.24 / 62.9  & 8.94 / \textbf{64.2}  & \textbf{8.89} / 63.9 \\
\midrule
\multirow{2}{*}{\shortstack[l]{Qwen3-14B\\{(FP16: 7.58 / 69.7)}}}
  & 3.25 &  8.70 / 58.8  & \textbf{8.12} / \textbf{66.8}  & 8.13 / \textbf{66.8} \\
  & 3.75 &  8.14 / 67.1  & 7.87 / 68.3  & \textbf{7.86} / \textbf{68.4} \\
\bottomrule
\end{tabular}
\end{table*}

\section{Additional Theoretical Results}
\label{app:additional_theoretical_results}

This appendix provides additional theoretical support for the activation-aware proxy and the cross-layer extension.  We use the notation from the main paper. For a module $(l,m)$ and a bit-width candidate $q$, let $\dW^{(l,m,q)}$ represent the quantization perturbation. The activation-aware self-cost used by CASA is
\begin{equation}
  \ell_{lmq} =
  \tr\left(
    \mB^{(l,m)}\dW^{(l,m,q)}
    \mA^{(l,m)}\bigl(\dW^{(l,m,q)}\bigr)^\top
  \right).
  \label{eq:app_self_cost_repeat}
\end{equation}
When a statement concerns a single module, we suppress the superscript $(l,m)$ and write $\mA$, $\mB$, and $E$.

\subsection{A Lower Bound Showing Why Scalar Proxies Can Fail}
\label{app:scalar_lower_bound}

\begin{definition}[Activation-aware and scalar quadratic proxies]
\label{def:app_quadratic_proxies}
Let $\mA\in\mathbb{R}^{d_{\rm in}\times d_{\rm in}}$ and $\mB\in\mathbb{R}^{d_{\rm out}\times d_{\rm out}}$ be symmetric positive semidefinite matrices. For a perturbation $E\in\mathbb{R}^{d_{\rm out}\times d_{\rm in}}$, define the activation-aware quadratic proxy
\begin{equation}
  Q(E):=\tr(\mB E\mA E^\top).
  \label{eq:app_Q_full}
\end{equation}
For a scalar $\alpha>0$, define the scalar proxy
\begin{equation}
  Q_\alpha(E):=\alpha\|E\|_F^2.
  \label{eq:app_Q_scalar}
\end{equation}
The scalar $\alpha$ may include any layer-wise sensitivity coefficient, normalization by $\|\mW\|_F^2$, or other scalar rescaling. Thus, \eqref{eq:app_Q_scalar} represents any proxy that assigns the same weight to all directions of $E$.
\end{definition}

\appscalarsharpdistortion*

\begin{proof}
Since $\mA$ and $\mB$ are real symmetric positive definite matrices, they admit orthogonal eigendecompositions
\begin{equation}
  \mA=\bm{U}\operatorname{diag}(a_1,\ldots,a_{d_{\rm in}})\bm{U}^\top,~~
  \mB=\bm{V}\operatorname{diag}(b_1,\ldots,b_{d_{\rm out}})\bm{V}^\top,
\end{equation}
where $0<a_{\min}:=\lambda_{\min}(\mA)\le a_j\le a_{\max}:=\lambda_{\max}(\mA)$ for every $j$, and $0<b_{\min}:=\lambda_{\min}(\mB)\le b_i\le b_{\max}:=\lambda_{\max}(\mB)$ for every $i$.

Fix any nonzero $E\in\mathbb{R}^{d_{\rm out}\times d_{\rm in}}$ and define $F:=\bm{V}^\top E\bm{U}$. Since $\bm{U}$ and $\bm{V}$ are orthogonal,
\begin{align}
  \|F\|_F^2
  &=\tr(F^\top F) \\
  &=\tr\left(\bm{U}^\top E^\top\bm{V}\bm{V}^\top E\bm{U}\right) \\
  &=\tr(E^\top E) \\
  &=\|E\|_F^2.
  \label{eq:app_fro_invariance}
\end{align}
Using cyclic invariance of the trace,
\begin{align}
  Q(E)
  &=
  \tr\left(
    \bm{V}\operatorname{diag}(b_i)\bm{V}^\top
    E
    \bm{U}\operatorname{diag}(a_j)\bm{U}^\top
    E^\top
  \right) \\
  &=
  \tr\left(
    \operatorname{diag}(b_i)
    \bm{V}^\top E\bm{U}
    \operatorname{diag}(a_j)
    \bm{U}^\top E^\top\bm{V}
  \right) \\
  &=
  \tr\left(
    \operatorname{diag}(b_i)F\operatorname{diag}(a_j)F^\top
  \right) \\
  &=
  \sum_{i=1}^{d_{\rm out}}
  \sum_{j=1}^{d_{\rm in}}
  b_i a_j F_{ij}^2.
  \label{eq:app_Q_weighted_entries}
\end{align}
Since $E\neq0$, equation \eqref{eq:app_fro_invariance} implies $F\neq0$. Define $\omega_{ij}:=F_{ij}^2/\|F\|_F^2$. Then $\omega_{ij}\ge0$ and $\sum_{i=1}^{d_{\rm out}}\sum_{j=1}^{d_{\rm in}}\omega_{ij}=1$. Dividing \eqref{eq:app_Q_weighted_entries} by $\|E\|_F^2=\|F\|_F^2$ gives
\begin{equation}
  r(E)
  :=
  \frac{Q(E)}{\|E\|_F^2} =
  \sum_{i=1}^{d_{\rm out}}
  \sum_{j=1}^{d_{\rm in}}
  \omega_{ij}b_i a_j.
  \label{eq:app_rayleigh_weighted_average}
\end{equation}
Thus $r(E)$ is a convex combination of the numbers $b_i a_j$. Therefore
\begin{equation}
  m:=a_{\min}b_{\min}
  \le
  r(E)
  \le
  a_{\max}b_{\max}=:M
  ~~
  \text{for every }E\neq0.
  \label{eq:app_rayleigh_interval}
\end{equation}
Both endpoints in \eqref{eq:app_rayleigh_interval} are attainable. Let $u_{\min}$ be a unit eigenvector of $\mA$ associated with $a_{\min}$, and let $v_{\min}$ be a unit eigenvector of $\mB$ associated with $b_{\min}$. Set $E_{\min}:=v_{\min}u_{\min}^\top$. Then $\|E_{\min}\|_F^2=1$ and
\begin{align}
  Q(E_{\min})
  &=
  \tr\left(
    \mB v_{\min}u_{\min}^\top
    \mA u_{\min}v_{\min}^\top
  \right) \\
  &=
  \tr\left(
    \mB v_{\min}
    (u_{\min}^\top\mA u_{\min})
    v_{\min}^\top
  \right) \\
  &=
  a_{\min}\tr\left(\mB v_{\min}v_{\min}^\top\right) \\
  &=
  a_{\min}\tr\left(v_{\min}^\top\mB v_{\min}\right) \\
  &=
  a_{\min}b_{\min}=m.
\end{align}
Thus $r(E_{\min})=m$. Similarly, if $u_{\max}$ and $v_{\max}$ are unit eigenvectors associated with $a_{\max}$ and $b_{\max}$, then $E_{\max}:=v_{\max}u_{\max}^\top$ satisfies $r(E_{\max})=M$.

Now fix $\alpha>0$. Since $Q_\alpha(E)=\alpha\|E\|_F^2$, for every nonzero $E$,
\begin{equation}
  \frac{Q(E)}{Q_\alpha(E)}=\frac{r(E)}{\alpha},
  ~~
  \frac{Q_\alpha(E)}{Q(E)}=\frac{\alpha}{r(E)}.
\end{equation}
The bounds $m\le r(E)\le M$ imply
\begin{equation}
  \max\left\{
    \frac{r(E)}{\alpha},
    \frac{\alpha}{r(E)}
  \right\}
  \le
  \max\left\{
    \frac{M}{\alpha},
    \frac{\alpha}{m}
  \right\}
  ~~
  \text{for all }E\neq0.
\end{equation}
Conversely, the perturbation $E_{\max}$ attains the value $M/\alpha$ in the first ratio, and $E_{\min}$ attains the value $\alpha/m$ in the second ratio. Therefore, for this fixed $\alpha$,
\begin{equation}
  D(\alpha)
  :=
  \sup_{E\neq0}
  \max\left\{
    \frac{Q(E)}{Q_\alpha(E)},
    \frac{Q_\alpha(E)}{Q(E)}
  \right\} =
  \max\left\{
    \frac{M}{\alpha},
    \frac{\alpha}{m}
  \right\}.
  \label{eq:app_D_alpha}
\end{equation}
For any $\alpha>0$,
\begin{equation}
  D(\alpha)
  \ge
  \sqrt{\frac{M}{\alpha}\cdot\frac{\alpha}{m}} =
  \sqrt{\frac{M}{m}}.
\end{equation}
Equality is obtained when $M/\alpha=\alpha/m$, namely when $\alpha=\sqrt{Mm}$. Hence
\begin{align}
  \inf_{\alpha>0}D(\alpha)
  &=
  \sqrt{\frac{M}{m}} \\
  &=
  \sqrt{\frac{a_{\max}b_{\max}}{a_{\min}b_{\min}}} \\
  &=
  \sqrt{\frac{a_{\max}}{a_{\min}}\frac{b_{\max}}{b_{\min}}} \\
  &=
  \sqrt{\kappa(\mA)\kappa(\mB)}.
\end{align}
This proves the theorem.
\end{proof}

\begin{corollary}[Singular curvature gives infinite scalar distortion]
\label{cor:app_singular_infinite_distortion}
Let $\mA\succeq0$ and $\mB\succeq0$. Use the convention that $c/0=+\infty$ for every $c>0$. If either $\mA$ or $\mB$ is singular, then for all $\alpha>0$,
\begin{equation}
  \sup_{E\neq0}
  \max\left\{
    \frac{Q(E)}{Q_\alpha(E)},
    \frac{Q_\alpha(E)}{Q(E)}
  \right\} =+\infty.
\end{equation}
\end{corollary}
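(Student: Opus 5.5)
Since $\mA$ or $\mB$ is singular, it has a zero eigenvalue. The plan is to exhibit a single nonzero perturbation with $Q(E)=0$, which forces the second ratio $Q_\alpha(E)/Q(E)$ to be $+\infty$ under the stated convention, for every $\alpha>0$.

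Suppose first that $\mA$ is singular. Pick a unit vector $u\in\ker\mA$ and any unit vector $v\in\mathbb{R}^{d_{\rm out}}$, and set $E:=vu^\top\neq0$. As in the computation of $Q(E_{\min})$ in the proof of \Cref{thm:app_scalar_sharp_distortion}, we get
\[
Q(E)=\tr\bigl(\mB v\,(u^\top\mA u)\,v^\top\bigr)=(u^\top\mA u)\,(v^\top\mB v)=0 .
\]
If instead $\mB$ is singular, choose a unit $v\in\ker\mB$ and any unit $u$. The same identity gives $Q(E)=0$ through the factor $v^\top\mB v=0$.

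In either case we have $Q_\alpha(E)=\alpha\|E\|_F^2=\alpha>0$. Hence $Q_\alpha(E)/Q(E)=\alpha/0=+\infty$, so the inner maximum, and therefore the supremum over $E\neq0$, equals $+\infty$. This conclusion holds for each fixed $\alpha>0$.

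The only subtle point is handling the first ratio $Q(E)/Q_\alpha(E)$ at this $E$. It equals $0$, so it is well defined and does not interfere with the maximum. The convention $c/0=+\infty$ is needed only for the second ratio, and we always have $c=\alpha>0$ there, so no $0/0$ case ever arises.

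One could instead note that positive semidefiniteness (rather than definiteness) still gives the convex-combination representation $r(E)=\sum_{i,j}\omega_{ij}b_ia_j$ with $m=a_{\min}b_{\min}=0$. Then $D(\alpha)=\max\{M/\alpha,\alpha/m\}=+\infty$. The direct rank-one construction is cleaner, however, and avoids re-deriving the spectral argument.
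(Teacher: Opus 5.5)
Your proposal is correct and matches the paper's proof. Both use the rank-one perturbation $E=vu^\top$ with $u\in\ker\mA$ (or $v\in\ker\mB$), so that $Q(E)=(u^\top\mA u)(v^\top\mB v)=0$ while $Q_\alpha(E)=\alpha>0$; your note that the first ratio is $0$ and that no $0/0$ case arises spells out what the paper leaves implicit.
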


\begin{proof}
Assume first that $\mA$ is singular. Then there exists a unit vector $u_0\in\mathbb{R}^{d_{\rm in}}$ such that $\mA u_0=0$. Let $v\in\mathbb{R}^{d_{\rm out}}$ be any unit vector and set $E_0:=vu_0^\top$. Then $E_0\neq0$, $\|E_0\|_F^2=1$, and
\begin{align}
  E_0\mA E_0^\top
  &=
  vu_0^\top\mA u_0v^\top \\
  &=
  v(u_0^\top\mA u_0)v^\top \\
  &=0.
\end{align}
Consequently,
\begin{equation}
  Q(E_0)=\tr(\mB E_0\mA E_0^\top),
  ~~
  Q_\alpha(E_0)=\alpha\|E_0\|_F^2=\alpha>0.
\end{equation}
Thus $Q(E_0)=0$, and therefore $Q_\alpha(E_0)/Q(E_0)=+\infty$.

If instead $\mB$ is singular, choose a unit vector $v_0$ such that $\mB v_0=0$, choose any unit vector $u$, and set $E_0:=v_0u^\top$. Then
\begin{align}
  Q(E_0)
  &=
  \tr\left(\mB v_0u^\top\mA u v_0^\top\right) \\
  &=
  (u^\top\mA u)\tr\left(\mB v_0v_0^\top\right) \\
  &=
  (u^\top\mA u)\tr\left(v_0^\top\mB v_0\right) \\
  &=0,
\end{align}
while $Q_\alpha(E_0)=\alpha>0$. Hence, the supremum is again infinite.
\end{proof}

We presents a rigorous statement and proof of Proposition~\ref{prop:app_misranking_mckp_regret}. 
We first formally restate Proposition~\ref{prop:app_misranking_mckp_regret} below.

\begin{proposition}[Proxy misranking can create arbitrarily large MCKP regret]
Consider two modules indexed by $i\in\{1,2\}$ and two bit-width choices, $L$ and $H$, where $L$ is of lower precision and $H$ is of higher precision. Suppose the bit budget allows exactly one module to use $H$. Let the true second-order cost be
\begin{equation}
  C_i(b)=\gamma_i d_b,
  ~~ b\in\{L,H\},
\end{equation}
where $d_L>d_H>0$ and $\gamma_i>0$. Let a proxy MCKP use the proxy cost, $\widehat C_i(b)=\widehat\gamma_i d_b$, where $\widehat\gamma_i>0$. Suppose that the true sensitivities and proxy sensitivities have opposite rankings: $\gamma_1>\gamma_2,~~ \widehat\gamma_2>\widehat\gamma_1$.
Then the true optimal allocation assigns $H$ to module $1$, whereas the proxy-optimal allocation assigns $H$ to module $2$. The true regret of the proxy-optimal allocation is
\begin{equation}
  \operatorname{Regret} =
  (\gamma_1-\gamma_2)(d_L-d_H).
\end{equation}
In particular, taking $\gamma_1=R$ and $\gamma_2=1$ makes the regret equal to $(R-1)(d_L-d_H)$, which diverges as $R\to\infty$.
\end{proposition}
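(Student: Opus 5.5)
The plan is to enumerate the feasible set and compare objective values directly. Under the budget assumption that exactly one module may use $H$, only two allocations remain: $\mathcal{A}_1$ assigns $H$ to module $1$ and $L$ to module $2$; $\mathcal{A}_2$ assigns $L$ to module $1$ and $H$ to module $2$. Allocations with both modules at $L$ are feasible if the budget is an upper bound. I will dispose of them first: since $d_L>d_H$ and all sensitivities are positive, upgrading either module strictly lowers both the true and the proxy objectives. So neither optimum uses the all-$L$ allocation, and the comparison reduces to $\mathcal{A}_1$ versus $\mathcal{A}_2$.

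Next I compute the true costs:
\begin{equation*}
  T(\mathcal{A}_1)=\gamma_1 d_H+\gamma_2 d_L,\qquad T(\mathcal{A}_2)=\gamma_1 d_L+\gamma_2 d_H .
\end{equation*}
Their difference factors as
\begin{equation*}
  T(\mathcal{A}_2)-T(\mathcal{A}_1)=(\gamma_1-\gamma_2)(d_L-d_H).
\end{equation*}
This is strictly positive because $\gamma_1>\gamma_2$ and $d_L>d_H$, so the unique true optimum is $\mathcal{A}_1$.

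The identical computation with $\widehat\gamma_i$ in place of $\gamma_i$ gives
\begin{equation*}
  \widehat T(\mathcal{A}_1)-\widehat T(\mathcal{A}_2)=(\widehat\gamma_2-\widehat\gamma_1)(d_L-d_H)>0,
\end{equation*}
so the unique proxy optimum is $\mathcal{A}_2$. The regret is defined as the true cost of the proxy-optimal allocation minus the true optimal cost. It is therefore exactly the first difference, $(\gamma_1-\gamma_2)(d_L-d_H)$.

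The divergence claim follows by substitution. With $\gamma_1=R$, $\gamma_2=1$ and any proxy satisfying $\widehat\gamma_2>\widehat\gamma_1$ (for example $\widehat\gamma_1=1$, $\widehat\gamma_2=2$), the regret is $(R-1)(d_L-d_H)$. This grows without bound as $R\to\infty$ because $d_L-d_H>0$ is fixed.

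There is no real obstacle. The only point needing care is pinning down the feasible set, so that the optimum is genuinely a choice between $\mathcal{A}_1$ and $\mathcal{A}_2$. That means stating explicitly either that the budget forces exactly one upgrade, or, if it is an upper bound, handling the all-$L$ allocation by the monotonicity argument above.
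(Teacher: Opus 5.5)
Your proof is correct and follows the paper's argument: enumerate the two feasible allocations $(H,L)$ and $(L,H)$, factor the true and proxy cost differences as $(\gamma_1-\gamma_2)(d_L-d_H)$ and $(\widehat\gamma_2-\widehat\gamma_1)(d_L-d_H)$, and read off the regret. Your extra monotonicity step, which rules out the all-$L$ allocation when the budget is only an upper bound, is a small refinement that the paper skips by taking the two allocations as the only feasible ones.
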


\begin{proof}
Because exactly one module can use $H$, there are exactly two feasible allocations:
\begin{equation}
  A_1=(H,L),~~ A_2=(L,H),
\end{equation}
where $A_1$ assigns $H$ to module $1$ and $L$ to module $2$, while $A_2$ assigns $L$ to module $1$ and $H$ to module $2$.

First, consider the true objective. The true cost of $A_1$ is
\begin{equation}
  C(A_1)=\gamma_1d_H+\gamma_2d_L,
\end{equation}
and the true cost of $A_2$ is
\begin{equation}
  C(A_2)=\gamma_1d_L+\gamma_2d_H.
\end{equation}
Therefore,
\begin{align}
  C(A_2)-C(A_1)
  &=
  \gamma_1d_L+\gamma_2d_H-\gamma_1d_H-\gamma_2d_L \\
  &=
  \gamma_1(d_L-d_H)-\gamma_2(d_L-d_H) \\
  &=
  (\gamma_1-\gamma_2)(d_L-d_H).
\end{align}
Since $\gamma_1>\gamma_2$ and $d_L>d_H$, we have
\begin{equation}
  C(A_2)-C(A_1)>0.
\end{equation}
Thus, $C(A_1)<C(A_2)$. Since $A_1$ and $A_2$ are the only feasible allocations, $A_1$ is the unique true optimum.

Next, consider the proxy objective. The proxy cost of $A_1$ is
\begin{equation}
  \widehat C(A_1)=\widehat\gamma_1d_H+\widehat\gamma_2d_L,
\end{equation}
and the proxy cost of $A_2$ is
\begin{equation}
  \widehat C(A_2)=\widehat\gamma_1d_L+\widehat\gamma_2d_H.
\end{equation}
Subtracting gives
\begin{align}
  \widehat C(A_1)-\widehat C(A_2)
  &=
  \widehat\gamma_1d_H+\widehat\gamma_2d_L-\widehat\gamma_1d_L-\widehat\gamma_2d_H \\
  &=
  -\widehat\gamma_1(d_L-d_H)+\widehat\gamma_2(d_L-d_H) \\
  &=
  (\widehat\gamma_2-\widehat\gamma_1)(d_L-d_H).
\end{align}
Since $\widehat\gamma_2>\widehat\gamma_1$ and $d_L>d_H$, we have
\begin{equation}
  \widehat C(A_1)-\widehat C(A_2)>0.
\end{equation}
Therefore,
\begin{equation}
  \widehat C(A_2)<\widehat C(A_1).
\end{equation}
Since $A_1$ and $A_2$ are the only feasible allocations, $A_2$ is the unique proxy optimum.

The regret of the proxy-optimal allocation is defined as
\begin{equation}
  \operatorname{Regret} =
  C(A_2)-C(A_1).
\end{equation}
Using the true-cost calculation above,
\begin{equation}
  \operatorname{Regret} =
  (\gamma_1-\gamma_2)(d_L-d_H).
\end{equation}
Finally, if $\gamma_1=R$ and $\gamma_2=1$, then
\begin{equation}
  \operatorname{Regret} =
  (R-1)(d_L-d_H).
\end{equation}
Since $d_L-d_H>0$ is fixed, this quantity tends to $+\infty$ as $R\to\infty$.
\end{proof}

\subsection{High-rate Quantization and the Optimal Bit-allocation Rule}
\label{app:high_rate_bit_allocation}

Let $i=(l,m)$ denote a module index, and let $\mathcal{I}$ be the finite set of modules over which mixed precision is performed. For each module $i\in\mathcal{I}$, write
\begin{equation}
  n_i:=\text{size}_{lm}>0
\end{equation}
for the number of scalar weights in module $i$. We use $\beta_i\in\mathbb{R}$ to denote the continuous bit width assigned to module $i$, in order to avoid conflict with the diagonal entries $b_{i,r}$ of $\mB^{(i)}$. Throughout this subsection, $\operatorname{vec}$ stacks the columns of its matrix argument.

\begin{assumption}[High-rate second-moment model]
\label{assump:app_high_rate_model}
For each module $i\in\mathcal{I}$ and continuous bit width $\beta_i$, let
\begin{equation}
  E_i(\beta_i):=\dW^{(i,\beta_i)}\in\mathbb{R}^{d_{{\rm out},i}\times d_{{\rm in},i}}
\end{equation}
be the random quantization perturbation. We assume that there exists a fixed positive semidefinite matrix
\begin{equation}
  \Sigma_i\in\mathbb{R}^{d_{{\rm out},i}d_{{\rm in},i}\times d_{{\rm out},i}d_{{\rm in},i}},
  ~~
  \Sigma_i\succeq0,
\end{equation}
such that
\begin{equation}
  \mathbb{E}\left[
    \operatorname{vec}(E_i(\beta_i))
    \operatorname{vec}(E_i(\beta_i))^\top
  \right] =
  2^{-2\beta_i}\Sigma_i.
  \label{eq:app_high_rate_second_moment_complete}
\end{equation}
Thus the perturbation second moment decays proportionally to $2^{-2\beta_i}$.
\end{assumption}

The randomness in $E_i(\beta_i)$ may be interpreted either as randomness induced by stochastic rounding or as the standard high-rate quantization model in which quantization errors are analyzed under a source distribution. The results below use only the second-moment condition in Assumption~\ref{assump:app_high_rate_model}.

\begin{proposition}[Expected activation-aware distortion under the high-rate model]
\label{prop:app_high_rate_expected_distortion_complete}
Suppose Assumption~\ref{assump:app_high_rate_model} holds. Assume also that
\begin{equation}
  \mA^{(i)}\in\mathbb{R}^{d_{{\rm in},i}\times d_{{\rm in},i}},
  ~~
  \mB^{(i)}\in\mathbb{R}^{d_{{\rm out},i}\times d_{{\rm out},i}}
\end{equation}
are symmetric positive semidefinite matrices. Define
\begin{equation}
  \Gamma_i
  :=
  \tr\left(
    (\mA^{(i)}\otimes\mB^{(i)})\Sigma_i
  \right).
  \label{eq:app_Gamma_full_complete}
\end{equation}
Then
\begin{equation}
  \mathbb{E}\left[
    \tr\left(
      \mB^{(i)}E_i(\beta_i)\mA^{(i)}E_i(\beta_i)^\top
    \right)
  \right] =
  \Gamma_i2^{-2\beta_i}.
  \label{eq:app_high_rate_expected_cost_complete}
\end{equation}
Moreover, $\Gamma_i\ge0$. If, in addition, $\mA^{(i)}$ and $\mB^{(i)}$ are diagonal, with diagonal entries
\begin{equation}
  \mA^{(i)}=\operatorname{diag}(a_{i,1},\ldots,a_{i,d_{{\rm in},i}}),
  ~~
  \mB^{(i)}=\operatorname{diag}(b_{i,1},\ldots,b_{i,d_{{\rm out},i}}),
\end{equation}
and if the entrywise second moments satisfy
\begin{equation}
  \mathbb{E}\left[
    \left(\Delta w_{rc}^{(i)}(\beta_i)\right)^2
  \right] =
  \sigma_{i,rc}^2 2^{-2\beta_i},
  ~~
  1\le r\le d_{{\rm out},i},~
  1\le c\le d_{{\rm in},i},
  \label{eq:app_entrywise_second_moment_complete}
\end{equation}
then
\begin{equation}
  \Gamma_i =
  \sum_{r=1}^{d_{{\rm out},i}}
  \sum_{c=1}^{d_{{\rm in},i}}
  b_{i,r}a_{i,c}\sigma_{i,rc}^2.
  \label{eq:app_Gamma_diag_complete}
\end{equation}
\end{proposition}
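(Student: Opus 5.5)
The plan is to rewrite the activation-aware quadratic as a quadratic form in $\operatorname{vec}(E)$. Then linearity of expectation turns the expected cost into a trace against the second-moment matrix of Assumption~\ref{assump:app_high_rate_model}.

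\textbf{Vectorization.} For any $E\in\mathbb{R}^{d_{\rm out}\times d_{\rm in}}$ and symmetric $\mA,\mB$, I will use $\operatorname{vec}(\mB E\mA)=(\mA^\top\otimes\mB)\operatorname{vec}(E)=(\mA\otimes\mB)\operatorname{vec}(E)$ together with $\tr(X^\top Y)=\operatorname{vec}(X)^\top\operatorname{vec}(Y)$. These give
\begin{equation*}
  \tr(\mB E\mA E^\top)=\tr(E^\top\mB E\mA)=\operatorname{vec}(E)^\top(\mA\otimes\mB)\operatorname{vec}(E).
\end{equation*}
The middle equality is cyclic invariance of the trace. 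The last step uses $\tr(E^\top(\mB E\mA))=\operatorname{vec}(E)^\top\operatorname{vec}(\mB E\mA)$.

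\textbf{Expectation.} Writing the scalar quadratic form as $\tr\bigl((\mA\otimes\mB)\operatorname{vec}(E)\operatorname{vec}(E)^\top\bigr)$ and taking expectations, linearity of trace and expectation together with \eqref{eq:app_high_rate_second_moment_complete} yield $\tr((\mA\otimes\mB)\Sigma_i)\,2^{-2\beta_i}=\Gamma_i2^{-2\beta_i}$.

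\textbf{Nonnegativity.} $\mA\otimes\mB$ is PSD, since its eigenvalues are the products $a_jb_k\ge0$ and it is symmetric. $\Sigma_i$ is PSD by assumption. The trace of a product of two PSD matrices is nonnegative: write it as $\tr(\Sigma_i^{1/2}(\mA\otimes\mB)\Sigma_i^{1/2})\ge0$. Alternatively, $\Gamma_i$ is the limit of expectations of nonnegative quantities, since $Q\ge0$ for PSD factors.

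\textbf{Diagonal case.} When $\mA,\mB$ are diagonal, $\mA\otimes\mB$ is diagonal. Under column-stacking, the entry indexed by $(r,c)$ sits at position $(c-1)d_{\rm out}+r$ and equals $a_{i,c}b_{i,r}$. Hence $\Gamma_i=\sum_{r,c}a_{i,c}b_{i,r}(\Sigma_i)_{(r,c),(r,c)}$. The diagonal entry of $\Sigma_i$ is identified with $\sigma_{i,rc}^2$ by comparing the $(r,c)$ diagonal entry of \eqref{eq:app_high_rate_second_moment_complete} with \eqref{eq:app_entrywise_second_moment_complete}. Both equal $\mathbb{E}[(\Delta w_{rc})^2]2^{2\beta_i}$, so they agree for any $\beta_i$.

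\textbf{Main care point.} The only delicate step is bookkeeping of the Kronecker ordering, i.e.\ $\mA\otimes\mB$ versus $\mB\otimes\mA$, under column-major $\operatorname{vec}$. I will check it once directly on the identity $\operatorname{vec}(\mB E\mA)=(\mA^\top\otimes\mB)\operatorname{vec}(E)$. Everything else is linearity.
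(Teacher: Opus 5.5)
Your proposal is correct and follows essentially the same route as the paper: vectorize to get $\operatorname{vec}(E)^\top(\mA\otimes\mB)\operatorname{vec}(E)$, turn the expectation into a trace against $2^{-2\beta_i}\Sigma_i$, and get nonnegativity from a PSD square root (you use $\Sigma_i^{1/2}$ where the paper uses $(\mA\otimes\mB)^{1/2}$, which makes no difference). The only small difference is the diagonal case: you read $\Gamma_i$ off the diagonals of $\mA\otimes\mB$ and $\Sigma_i$ under column-major indexing, while the paper re-expands the trace entrywise as $\sum_{r,c} b_{i,r}a_{i,c}(\Delta w_{rc})^2$ and matches expectations with the first part; both arguments are valid.
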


\begin{proof}
Fix a module $i\in\mathcal{I}$ and a bit width $\beta_i$. For notational brevity, write
\begin{equation}
  E:=E_i(\beta_i),
  ~~
  \mA:=\mA^{(i)},
  ~~
  \mB:=\mB^{(i)},
  ~~
  z:=\operatorname{vec}(E).
\end{equation}
Since $E\in\mathbb{R}^{d_{{\rm out},i}\times d_{{\rm in},i}}$, we have $z\in\mathbb{R}^{d_{{\rm out},i}d_{{\rm in},i}}$. The standard vectorization identity states that, for compatible matrices $E$, $\mA$, and $\mB$,
\begin{equation}
  \tr(\mB E\mA E^\top) =
  \operatorname{vec}(E)^\top(\mA^\top\otimes\mB)\operatorname{vec}(E).
  \label{eq:app_vec_identity_complete}
\end{equation}
Because $\mA$ is symmetric, $\mA^\top=\mA$. Hence, if we define
\begin{equation}
  M:=\mA\otimes\mB,
\end{equation}
then \eqref{eq:app_vec_identity_complete} gives
\begin{equation}
  \tr(\mB E\mA E^\top) =
  z^\top Mz.
  \label{eq:app_quadratic_form_complete}
\end{equation}
We next recall the elementary identity
\begin{equation}
  \mathbb{E}[z^\top Mz] =
  \tr\left(M\mathbb{E}[zz^\top]\right).
  \label{eq:app_quadratic_expectation_identity_complete}
\end{equation}
Indeed, since $z^\top Mz$ is a scalar, it is equal to its trace. Therefore,
\begin{align}
  \mathbb{E}[z^\top Mz]
  &=
  \mathbb{E}\left[\tr(z^\top Mz)\right] \\
  &=
  \mathbb{E}\left[\tr(Mzz^\top)\right] \\
  &=
  \tr\left(M\mathbb{E}[zz^\top]\right),
\end{align}
where the second equality uses the cyclic invariance of the trace and the third equality uses linearity of expectation and trace. Applying \eqref{eq:app_quadratic_expectation_identity_complete} to \eqref{eq:app_quadratic_form_complete} yields
\begin{equation}
  \mathbb{E}\left[
    \tr(\mB E\mA E^\top)
  \right] =
  \tr\left(M\mathbb{E}[zz^\top]\right).
  \label{eq:app_expected_cost_pre_substitution_complete}
\end{equation}
By Assumption~\ref{assump:app_high_rate_model},
\begin{equation}
  \mathbb{E}[zz^\top] =
  2^{-2\beta_i}\Sigma_i.
\end{equation}
Substituting this into \eqref{eq:app_expected_cost_pre_substitution_complete}, and using $M=\mA^{(i)}\otimes\mB^{(i)}$, gives
\begin{align}
  \mathbb{E}\left[
    \tr\left(
      \mB^{(i)}E_i(\beta_i)\mA^{(i)}E_i(\beta_i)^\top
    \right)
  \right]
  &=
  \tr\left(
    (\mA^{(i)}\otimes\mB^{(i)})2^{-2\beta_i}\Sigma_i
  \right) \\
  &=
  2^{-2\beta_i}
  \tr\left(
    (\mA^{(i)}\otimes\mB^{(i)})\Sigma_i
  \right) \\
  &=
  \Gamma_i2^{-2\beta_i}.
\end{align}
This proves \eqref{eq:app_high_rate_expected_cost_complete}.

It remains to prove nonnegativity of $\Gamma_i$. Since $\mA^{(i)}\succeq0$ and $\mB^{(i)}\succeq0$, the Kronecker product satisfies
\begin{equation}
  \mA^{(i)}\otimes\mB^{(i)}\succeq0.
\end{equation}
Set
\begin{equation}
  P:=\mA^{(i)}\otimes\mB^{(i)}.
\end{equation}
Then $P\succeq0$, so its unique positive semidefinite square root $P^{1/2}$ exists. Since $\Sigma_i\succeq0$, we have
\begin{equation}
  P^{1/2}\Sigma_iP^{1/2}\succeq0.
\end{equation}
Therefore its trace is nonnegative. Using cyclic invariance of the trace,
\begin{equation}
  \Gamma_i =
  \tr(P\Sigma_i) =
  \tr(P^{1/2}P^{1/2}\Sigma_i) =
  \tr(P^{1/2}\Sigma_iP^{1/2})
  \ge0.
\end{equation}
This proves $\Gamma_i\ge0$.

Finally, suppose $\mA^{(i)}$ and $\mB^{(i)}$ are diagonal as stated. For this diagonal case, again abbreviate $E=E_i(\beta_i)$ and write its entries as $E_{rc}=\Delta w_{rc}^{(i)}(\beta_i)$. The $(r,r)$ entry of $E\mA^{(i)}E^\top$ is
\begin{equation}
  (E\mA^{(i)}E^\top)_{rr} =
  \sum_{c=1}^{d_{{\rm in},i}}a_{i,c}E_{rc}^2.
\end{equation}
Since $\mB^{(i)}$ is diagonal, we obtain
\begin{align}
  \tr\left(\mB^{(i)}E\mA^{(i)}E^\top\right)
  &=
  \sum_{r=1}^{d_{{\rm out},i}}b_{i,r}(E\mA^{(i)}E^\top)_{rr} \\
  &=
  \sum_{r=1}^{d_{{\rm out},i}}
  \sum_{c=1}^{d_{{\rm in},i}}
  b_{i,r}a_{i,c}E_{rc}^2 \\
  &=
  \sum_{r=1}^{d_{{\rm out},i}}
  \sum_{c=1}^{d_{{\rm in},i}}
  b_{i,r}a_{i,c}
  \left(\Delta w_{rc}^{(i)}(\beta_i)\right)^2.
\end{align}
Taking expectations and using \eqref{eq:app_entrywise_second_moment_complete} gives
\begin{align}
  \mathbb{E}\left[
    \tr\left(\mB^{(i)}E_i(\beta_i)\mA^{(i)}E_i(\beta_i)^\top\right)
  \right]
  &=
  \sum_{r=1}^{d_{{\rm out},i}}
  \sum_{c=1}^{d_{{\rm in},i}}
  b_{i,r}a_{i,c}
  \mathbb{E}\left[
    \left(\Delta w_{rc}^{(i)}(\beta_i)\right)^2
  \right] \\
  &=
  \sum_{r=1}^{d_{{\rm out},i}}
  \sum_{c=1}^{d_{{\rm in},i}}
  b_{i,r}a_{i,c}\sigma_{i,rc}^2 2^{-2\beta_i} \\
  &=
  \left(
    \sum_{r=1}^{d_{{\rm out},i}}
    \sum_{c=1}^{d_{{\rm in},i}}
    b_{i,r}a_{i,c}\sigma_{i,rc}^2
  \right)2^{-2\beta_i}.
\end{align}
Comparing this expression with \eqref{eq:app_high_rate_expected_cost_complete}, we conclude that
\begin{equation}
  \Gamma_i =
  \sum_{r=1}^{d_{{\rm out},i}}
  \sum_{c=1}^{d_{{\rm in},i}}
  b_{i,r}a_{i,c}\sigma_{i,rc}^2.
\end{equation}
This proves \eqref{eq:app_Gamma_diag_complete}.
\end{proof}

In the diagonal case, the quantity $\tr(\mB^{(i)}E_i(\beta_i)\mA^{(i)}E_i(\beta_i)^\top)$ depends only on the squared entries $(\Delta w_{rc}^{(i)}(\beta_i))^2$. Hence the diagonal formula requires only the entrywise second moments in \eqref{eq:app_entrywise_second_moment_complete}; it does not require the off-diagonal covariances $\mathbb{E}[\Delta w_{rc}^{(i)}(\beta_i)\Delta w_{r'c'}^{(i)}(\beta_i)]$ to vanish.

\begin{assumption}[Positive effective sensitivity]
\label{assump:app_positive_gamma}
For every module $i\in\mathcal{I}$ considered in the continuous bit-allocation problem, the effective activation-aware high-rate sensitivity satisfies
\begin{equation}
  \Gamma_i>0.
\end{equation}
\end{assumption}

\appoptimalcontinuousbits*

\begin{proof}
We first prove strict convexity. For each $i\in\mathcal{I}$, define
\begin{equation}
  f_i(\beta_i):=\Gamma_i2^{-2\beta_i}.
\end{equation}
Then
\begin{equation}
  \frac{d}{d\beta_i}f_i(\beta_i) =
  -2\ln(2)\Gamma_i2^{-2\beta_i},
\end{equation}
and
\begin{equation}
  \frac{d^2}{d\beta_i^2}f_i(\beta_i) =
  4(\ln 2)^2\Gamma_i2^{-2\beta_i}.
\end{equation}
Since $\Gamma_i>0$ and $2^{-2\beta_i}>0$, we have
\begin{equation}
  \frac{d^2}{d\beta_i^2}f_i(\beta_i)>0
\end{equation}
for every $\beta_i\in\mathbb{R}$. Hence each $f_i$ is strictly convex. Since $F(\beta)=\sum_i f_i(\beta_i)$ is a sum of strictly convex functions in separate coordinates, $F$ is strictly convex on $\mathbb{R}^{|\mathcal{I}|}$. The feasible set
\begin{equation}
  \left\{\beta\in\mathbb{R}^{|\mathcal{I}|}:\sum_{i\in\mathcal{I}}n_i\beta_i\le B_{\rm total}\right\}
\end{equation}
is convex because it is a half-space.

We next prove existence of a minimizer. The feasible set is nonempty; for example, taking all $\beta_i$ sufficiently negative gives $\sum_i n_i\beta_i\le B_{\rm total}$. We first show that any optimal point, if it exists, must satisfy the budget equality. Suppose that $\beta$ is feasible and satisfies
\begin{equation}
  \sum_i n_i\beta_i<B_{\rm total}.
\end{equation}
Choose any $k\in\mathcal{I}$. Since the inequality is strict and $n_k>0$, there exists $\varepsilon>0$ sufficiently small such that the vector $\widetilde{\beta}$ defined by
\begin{equation}
  \widetilde{\beta}_k=\beta_k+\varepsilon,
  ~~
  \widetilde{\beta}_i=\beta_i~\text{for }i\neq k
\end{equation}
is still feasible. Moreover, since $f_k$ is strictly decreasing, we have
\begin{equation}
  f_k(\widetilde{\beta}_k)<f_k(\beta_k),
\end{equation}
while all other terms in $F$ are unchanged. Therefore
\begin{equation}
  F(\widetilde{\beta})<F(\beta).
\end{equation}
Thus no point with slack budget can be optimal. Consequently, any minimizer must lie on the affine hyperplane
\begin{equation}
  \mathcal{H}:=
  \left\{
    \beta\in\mathbb{R}^{|\mathcal{I}|}:
    \sum_i n_i\beta_i=B_{\rm total}
  \right\}.
\end{equation}

It remains to show that a minimizer exists on $\mathcal{H}$. Choose any $\bar{\beta}\in\mathcal{H}$, for instance
\begin{equation}
  \bar{\beta}_i=\frac{B_{\rm total}}{\sum_j n_j}
  ~~
  \text{for all } i\in\mathcal{I}.
\end{equation}
Consider the sublevel set
\begin{equation}
  \mathcal{S}:=
  \left\{
    \beta\in\mathcal{H}:F(\beta)\le F(\bar{\beta})
  \right\}.
\end{equation}
The set $\mathcal{S}$ is closed because $\mathcal{H}$ is closed and $F$ is continuous. We now show that $\mathcal{S}$ is bounded. Suppose, for contradiction, that $\mathcal{S}$ is unbounded. Then there exists a sequence $\{\beta^{(t)}\}_{t=1}^{\infty}\subset\mathcal{S}$ such that
\begin{equation}
  \|\beta^{(t)}\|_2\to\infty.
\end{equation}
Since $\mathcal{I}$ is finite, after passing to a subsequence, at least one coordinate is unbounded in absolute value. If some coordinate satisfies $\beta_k^{(t)}\to-\infty$ along a subsequence, then
\begin{equation}
  \Gamma_k2^{-2\beta_k^{(t)}}\to+\infty,
\end{equation}
and hence
\begin{equation}
  F(\beta^{(t)})\to+\infty,
\end{equation}
contradicting $F(\beta^{(t)})\le F(\bar{\beta})$. Otherwise, some coordinate satisfies $\beta_k^{(t)}\to+\infty$ along a subsequence. Since every $\beta^{(t)}\in\mathcal{H}$,
\begin{equation}
  \sum_i n_i\beta_i^{(t)}=B_{\rm total}.
\end{equation}
Because $n_k>0$ and $n_i>0$ for all $i$, the positive divergence of $n_k\beta_k^{(t)}$ must be offset by at least one coordinate $r\neq k$ satisfying $\beta_r^{(t)}\to-\infty$ along a further subsequence. Then again
\begin{equation}
  \Gamma_r2^{-2\beta_r^{(t)}}\to+\infty,
\end{equation}
so
\begin{equation}
  F(\beta^{(t)})\to+\infty,
\end{equation}
contradicting $F(\beta^{(t)})\le F(\bar{\beta})$. Therefore $\mathcal{S}$ is bounded. Since $\mathcal{S}$ is closed and bounded in finite-dimensional Euclidean space, it is compact. By continuity of $F$, there exists a minimizer of $F$ over $\mathcal{S}$, and therefore over $\mathcal{H}$. Since every global minimizer must lie on $\mathcal{H}$, this minimizer is also a global minimizer of the original inequality-constrained problem. Strict convexity of $F$ and convexity of the feasible set imply that the global minimizer is unique.

We now derive its closed form. The problem is convex and satisfies Slater's condition because, for example, taking all $\beta_i$ sufficiently negative yields
\begin{equation}
  \sum_i n_i\beta_i<B_{\rm total}.
\end{equation}
Hence the Karush--Kuhn--Tucker conditions are necessary and sufficient for global optimality. Let $\lambda\ge0$ be the Lagrange multiplier for the constraint
\begin{equation}
  \sum_i n_i\beta_i\le B_{\rm total}.
\end{equation}
The Lagrangian is
\begin{equation}
  \mathcal{L}(\beta,\lambda) =
  \sum_i\Gamma_i2^{-2\beta_i}
  +
  \lambda\left(\sum_i n_i\beta_i-B_{\rm total}\right).
\end{equation}
Stationarity at the unique minimizer $\beta^{\prime}$ gives, for every $i\in\mathcal{I}$,
\begin{equation}
  -2\ln(2)\Gamma_i2^{-2\beta_i^{\prime}}+\lambda n_i=0.
  \label{eq:app_stationarity_beta}
\end{equation}
Since the budget is active, complementary slackness is consistent with any $\lambda\ge0$. However, \eqref{eq:app_stationarity_beta} implies $\lambda>0$, because $\Gamma_i>0$, $n_i>0$, and $2^{-2\beta_i^{\prime}}>0$. Rearranging \eqref{eq:app_stationarity_beta} gives
\begin{equation}
  2^{-2\beta_i^{\prime}} =
  \frac{\lambda n_i}{2\ln(2)\Gamma_i}.
  \label{eq:app_two_power_solution}
\end{equation}
Taking base-two logarithms of both sides yields
\begin{equation}
  -2\beta_i^{\prime} =
  \log_2\lambda+\log_2 n_i-\log_2(2\ln(2))-\log_2\Gamma_i.
\end{equation}
Multiplying by $-1/2$ gives
\begin{equation}
  \beta_i^{\prime} =
  \frac{1}{2}\log_2\left(\frac{\Gamma_i}{n_i}\right)
  +
  \frac{1}{2}\log_2\left(\frac{2\ln(2)}{\lambda}\right).
\end{equation}
The second term is independent of $i$. Define
\begin{equation}
  c:=
  \frac{1}{2}\log_2\left(\frac{2\ln(2)}{\lambda}\right).
\end{equation}
Then
\begin{equation}
  \beta_i^{\prime} =
  \frac{1}{2}\log_2\left(\frac{\Gamma_i}{n_i}\right)+c.
\end{equation}
Substituting this expression into the active budget equality gives
\begin{align}
  B_{\rm total}
  &=
  \sum_i n_i\beta_i^{\prime} \\
  &=
  \sum_i n_i
  \left[
    \frac{1}{2}\log_2\left(\frac{\Gamma_i}{n_i}\right)+c
  \right] \\
  &=
  \frac{1}{2}\sum_i n_i\log_2\left(\frac{\Gamma_i}{n_i}\right)
  +
  c\sum_i n_i.
\end{align}
Solving this equation for $c$ gives
\begin{equation}
  c =
  \frac{
    B_{\rm total}
    -
    \frac{1}{2}\sum_{j\in\mathcal{I}}n_j
    \log_2\left(\frac{\Gamma_j}{n_j}\right)
  }{
    \sum_{j\in\mathcal{I}}n_j
  }.
\end{equation}
Finally, subtracting the formula for $\beta_j^{\prime}$ from the formula for $\beta_i^{\prime}$ cancels the common constant $c$ and gives
\begin{equation}
  \beta_i^{\prime}-\beta_j^{\prime} =
  \frac{1}{2}\log_2\left(
    \frac{\Gamma_i/n_i}{\Gamma_j/n_j}
  \right).
\end{equation}
This completes the proof.
\end{proof}

\begin{corollary}[Bounded bit widths give clipped water-filling]
\label{cor:app_clipped_water_filling}
Assume $\Gamma_i>0$ and $n_i>0$ for all $i\in\mathcal{I}$. Suppose practical constraints impose
\begin{equation}
  \beta_i^{\min}\le\beta_i\le\beta_i^{\max}
\end{equation}
for every $i\in\mathcal{I}$, with $\beta_i^{\min}\le\beta_i^{\max}$, and suppose the budget satisfies
\begin{equation}
  \sum_i n_i\beta_i^{\min}
  <
  B_{\rm total}
  <
  \sum_i n_i\beta_i^{\max}.
  \label{eq:app_budget_strict_box_feasible}
\end{equation}
Then the problem
\begin{equation}
  \begin{aligned}
    \min_{\beta}~
    &\sum_i\Gamma_i2^{-2\beta_i} \\
    \text{subject to}~
    &\sum_i n_i\beta_i\le B_{\rm total}, \\
    &\beta_i^{\min}\le\beta_i\le\beta_i^{\max}
    ~\forall i\in\mathcal{I}
  \end{aligned}
  \label{eq:app_bounded_problem}
\end{equation}
has a unique global minimizer. Moreover, there exists a constant $c\in\mathbb{R}$ such that the unique minimizer is
\begin{equation}
  \beta_i^{\prime} =
  \min\left\{
    \beta_i^{\max},
    \max\left\{
      \beta_i^{\min},
      \frac{1}{2}\log_2\left(\frac{\Gamma_i}{n_i}\right)+c
    \right\}
  \right\},
  \label{eq:app_clipped_water_filling}
\end{equation}
where $c$ is chosen so that
\begin{equation}
  \sum_i n_i\beta_i^{\prime}=B_{\rm total}.
  \label{eq:app_clipped_budget_active}
\end{equation}
\end{corollary}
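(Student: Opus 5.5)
This is a separable, strictly convex problem over a compact box intersected with a half-space, so I will prove existence and uniqueness first and then read the clipped form off the KKT conditions.

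\textbf{Existence and uniqueness.} The feasible set is nonempty: by the left inequality in \eqref{eq:app_budget_strict_box_feasible}, $\beta^{\min}$ is feasible. It is also compact, being the intersection of the box with a closed half-space. Each $f_i(\beta_i)=\Gamma_i2^{-2\beta_i}$ is strictly convex, as shown in the proof of \Cref{thm:app_optimal_continuous_bits}. Hence $F$ attains its minimum on this compact convex set, and the minimizer is unique.

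\textbf{The budget is active.} Suppose a feasible $\beta$ had $\sum_i n_i\beta_i<B_{\rm total}$. If every coordinate sat at $\beta_i=\beta_i^{\max}$, the right inequality in \eqref{eq:app_budget_strict_box_feasible} would give $\sum_i n_i\beta_i^{\max}>B_{\rm total}$, a contradiction. So some $\beta_k<\beta_k^{\max}$. Raising $\beta_k$ by a small $\varepsilon>0$ stays feasible and strictly decreases $f_k$, so $\beta$ is not optimal. This establishes \eqref{eq:app_clipped_budget_active} at the minimizer.

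\textbf{KKT conditions and the clipped form.} All constraints are affine, so the KKT conditions are necessary and sufficient for this convex problem. Let $\lambda\ge0$ be the budget multiplier and $\mu_i,\nu_i\ge0$ the multipliers for the lower and upper box bounds. Stationarity reads
\[
-2\ln 2\,\Gamma_i2^{-2\beta_i}+\lambda n_i-\mu_i+\nu_i=0 .
\]
First I show $\lambda>0$. If $\lambda=0$, then $\nu_i=\mu_i+2\ln2\,\Gamma_i2^{-2\beta_i}>0$ for every $i$, so every coordinate sits at its upper bound $\beta_i^{\max}$. This contradicts the active budget together with the right inequality in \eqref{eq:app_budget_strict_box_feasible}. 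With $\lambda>0$, set $c=\tfrac12\log_2(2\ln2/\lambda)$ and $t_i=\tfrac12\log_2(\Gamma_i/n_i)+c$, the unconstrained stationary point from \Cref{thm:app_optimal_continuous_bits}. The map $\beta_i\mapsto-2\ln2\,\Gamma_i2^{-2\beta_i}+\lambda n_i$ is strictly increasing and vanishes exactly at $t_i$. A case split on this sign then gives each coordinate:
\begin{itemize}
\item if $\beta_i$ is interior, then $\mu_i=\nu_i=0$ and $\beta_i=t_i$;
\item if $\beta_i=\beta_i^{\min}$, then $\nu_i=0$ and $\mu_i\ge0$, which forces $t_i\le\beta_i^{\min}$;
\item if $\beta_i=\beta_i^{\max}$, then $\mu_i=0$, which forces $t_i\ge\beta_i^{\max}$.
\end{itemize}
When $\beta_i^{\min}=\beta_i^{\max}$ the coordinate is fixed and the formula holds trivially. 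In every case $\beta_i^{\prime}=\min\{\beta_i^{\max},\max\{\beta_i^{\min},t_i\}\}$, which is \eqref{eq:app_clipped_water_filling}, and $c$ satisfies \eqref{eq:app_clipped_budget_active} because the budget is active.

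\textbf{Main obstacle.} The delicate step is justifying $\lambda>0$ and the existence of a common constant $c$. I will handle it as above, via the strict upper inequality in \eqref{eq:app_budget_strict_box_feasible}. As a cross-check, the map $c\mapsto\sum_i n_i\,\mathrm{clip}(t_i(c))$ is continuous and nondecreasing, and tends to $\sum_i n_i\beta_i^{\min}$ as $c\to-\infty$ and to $\sum_i n_i\beta_i^{\max}$ as $c\to+\infty$. By the intermediate value theorem, a $c$ satisfying the budget equality exists. Conversely, the clipped point for any such $c$ satisfies KKT with $\lambda=2\ln2\cdot2^{-2c}$, so it coincides with the unique minimizer.
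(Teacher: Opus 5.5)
Your proof is correct. The existence, uniqueness and active-budget steps coincide with the paper's. The difference is in how you obtain the clipped form.

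You invoke necessity of the KKT conditions, which is valid because all constraints are affine. You rule out $\lambda=0$ via the strict upper budget inequality. You then read off the clipping coordinate by coordinate from complementary slackness. As a result, the constant $c=\tfrac12\log_2(2\ln 2/\lambda)$ comes directly from the multiplier of the actual minimizer.

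The paper never uses KKT necessity. It keeps the box as a hard constraint and minimizes the separable Lagrangian $\sum_i[\Gamma_i2^{-2\beta_i}+\lambda n_i\beta_i]$ over the box coordinatewise, which projects each stationary point onto its interval. It finds $c$ by applying the intermediate value theorem to the continuous nondecreasing map $G(c)=\sum_i n_i\,\mathrm{clip}_i(c)$. It then proves global optimality of the resulting point by the one-line weak-duality inequality $F(\beta)\ge F(\beta(c))+\lambda(B_{\rm total}-\sum_i n_i\beta_i)\ge F(\beta(c))$, and concludes by uniqueness. Your ``cross-check'' paragraph is essentially this argument, with KKT sufficiency in place of the explicit inequality.

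What each route buys:
\begin{itemize}
\item Yours explains \emph{why} the minimizer must have the clipped form and produces $c$ without an intermediate-value argument. The cost is relying on the constraint-qualification result that KKT conditions are necessary under affine constraints.
\item The paper's is fully elementary and self-contained. It also makes explicit that \emph{any} $c$ satisfying the budget equation yields the same unique minimizer.
\end{itemize}
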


\begin{proof}
The objective is strictly convex by the same argument as in Theorem~\ref{thm:app_optimal_continuous_bits}. The feasible set is the intersection of a closed box and a closed half-space, hence it is closed and bounded. It is nonempty because \eqref{eq:app_budget_strict_box_feasible} implies that the lower-bound vector $\beta^{\min}$ is feasible. Therefore, by continuity, a global minimizer exists. Since the objective is strictly convex and the feasible set is convex, the global minimizer is unique.

We next show that the budget constraint is active at the minimizer. Let $\beta^{\prime}$ be the unique minimizer. Suppose, for contradiction, that
\begin{equation}
  \sum_i n_i\beta_i^{\prime}<B_{\rm total}.
\end{equation}
If $\beta_i^{\prime}=\beta_i^{\max}$ for every $i$, then
\begin{equation}
  \sum_i n_i\beta_i^{\prime} =
  \sum_i n_i\beta_i^{\max}
  >
  B_{\rm total},
\end{equation}
this contradicts feasibility. Hence there exists at least one index $k$ such that
\begin{equation}
  \beta_k^{\prime}<\beta_k^{\max}.
\end{equation}
Since the budget inequality is strict and $\beta_k^{\prime}<\beta_k^{\max}$, there exists $\varepsilon>0$ sufficiently small such that the vector $\widetilde{\beta}$ defined by
\begin{equation}
  \widetilde{\beta}_k=\beta_k^{\prime}+\varepsilon,
  ~~
  \widetilde{\beta}_i=\beta_i^{\prime}
  ~\text{for }i\neq k
\end{equation}
satisfies both the box constraints and the budget constraint. Since $\Gamma_k2^{-2\beta_k}$ is strictly decreasing in $\beta_k$, we have
\begin{equation}
  F(\widetilde{\beta})<F(\beta^{\prime}),
\end{equation}
contradicting optimality. Therefore
\begin{equation}
  \sum_i n_i\beta_i^{\prime}=B_{\rm total}.
\end{equation}

We now derive the clipped form. For any $\lambda>0$, define the box-constrained Lagrangian subproblem
\begin{equation}
  \min_{\beta_i^{\min}\le\beta_i\le\beta_i^{\max}}
  \sum_i
  \left[
    \Gamma_i2^{-2\beta_i}
    +
    \lambda n_i\beta_i
  \right].
\end{equation}
This problem separates across coordinates. For each $i$, define
\begin{equation}
  \phi_i(\beta_i):=
  \Gamma_i2^{-2\beta_i}+\lambda n_i\beta_i.
\end{equation}
Its derivative is
\begin{equation}
  \phi_i'(\beta_i) =
  -2\ln(2)\Gamma_i2^{-2\beta_i}+\lambda n_i,
\end{equation}
and its second derivative is
\begin{equation}
  \phi_i''(\beta_i) =
  4(\ln 2)^2\Gamma_i2^{-2\beta_i}>0.
\end{equation}
Thus $\phi_i$ is strictly convex. The unconstrained minimizer is the unique point satisfying
\begin{equation}
  -2\ln(2)\Gamma_i2^{-2\beta_i}+\lambda n_i=0.
\end{equation}
Equivalently,
\begin{equation}
  2^{-2\beta_i} =
  \frac{\lambda n_i}{2\ln(2)\Gamma_i}.
\end{equation}
Taking base-two logarithms gives
\begin{equation}
  \beta_i^{\rm unc} =
  \frac{1}{2}\log_2\left(\frac{\Gamma_i}{n_i}\right)
  +
  \frac{1}{2}\log_2\left(\frac{2\ln(2)}{\lambda}\right).
\end{equation}
Since $\phi_i$ is strictly convex, its minimizer over the interval $[\beta_i^{\min},\beta_i^{\max}]$ is the projection of $\beta_i^{\rm unc}$ onto that interval. Therefore, if we define
\begin{equation}
  c:=
  \frac{1}{2}\log_2\left(\frac{2\ln(2)}{\lambda}\right),
\end{equation}
then the coordinate-wise minimizer is
\begin{equation}
  \beta_i(c) =
  \min\left\{
    \beta_i^{\max},
    \max\left\{
      \beta_i^{\min},
      \frac{1}{2}\log_2\left(\frac{\Gamma_i}{n_i}\right)+c
    \right\}
  \right\}.
\end{equation}

It remains to show that one can choose $c$ so that the active budget is satisfied. Define
\begin{equation}
  G(c)
  :=
  \sum_i n_i
  \min\left\{
    \beta_i^{\max},
    \max\left\{
      \beta_i^{\min},
      \frac{1}{2}\log_2\left(\frac{\Gamma_i}{n_i}\right)+c
    \right\}
  \right\}.
\end{equation}
Each summand is continuous and nondecreasing in $c$, so $G$ is continuous and nondecreasing. Moreover,
\begin{equation}
  \lim_{c\to-\infty}G(c) =
  \sum_i n_i\beta_i^{\min},
\end{equation}
and
\begin{equation}
  \lim_{c\to+\infty}G(c) =
  \sum_i n_i\beta_i^{\max}.
\end{equation}
By \eqref{eq:app_budget_strict_box_feasible} and the intermediate value theorem, there exists $c\in\mathbb{R}$ such that
\begin{equation}
  G(c)=B_{\rm total}.
\end{equation}
Let $\beta(c)$ denote the vector defined by the clipped formula with this value of $c$, and let
\begin{equation}
  \lambda:=2\ln(2)2^{-2c}>0.
\end{equation}
By construction, $\beta(c)$ minimizes
\begin{equation}
  \sum_i
  \left[
    \Gamma_i2^{-2\beta_i}
    +
    \lambda n_i\beta_i
  \right]
\end{equation}
over the box constraints. Therefore, for every box-feasible $\beta$,
\begin{equation}
  F(\beta)+\lambda\sum_i n_i\beta_i
  \ge
  F(\beta(c))+\lambda\sum_i n_i\beta_i(c).
\end{equation}
Now let $\beta$ be feasible for the original problem. Then
\begin{equation}
  \sum_i n_i\beta_i\le B_{\rm total} =
  \sum_i n_i\beta_i(c).
\end{equation}
Using $\lambda>0$, we obtain
\begin{align}
  F(\beta)
  &\ge
  F(\beta(c))
  +
  \lambda\sum_i n_i\beta_i(c)
  -
  \lambda\sum_i n_i\beta_i \\
  &=
  F(\beta(c))
  +
  \lambda
  \left(
    B_{\rm total}
    -
    \sum_i n_i\beta_i
  \right) \\
  &\ge
  F(\beta(c)).
\end{align}
Thus $\beta(c)$ is globally optimal for the original bounded problem. Since the global minimizer is unique, $\beta(c)$ is the unique minimizer. This proves the clipped water-filling form.
\end{proof}

\subsection{Why cross-layer extensions are theoretically beneficial}
\label{app:cross_layer_benefit}

The main MCKP objective uses only self-costs $\ell_{lmq}$, corresponding to a block-diagonal Hessian approximation. We now formalize what is gained by retaining cross-layer Hessian blocks.

\begin{definition}[Self and cross-layer quadratic objectives]
\label{def:app_cross_objectives}
Let $i=(l,m)$ denote a module index, and let
\begin{equation}
  e_{iq}:=\operatorname{vec}(\dW^{(i,q)})
\end{equation}
be the vectorized perturbation of module $i$ under bit candidate $q$. Let $\mH_{ii}$ be the Hessian block for module $i$, and let $\mH_{ij}$ be the cross-Hessian block between modules $i$ and $j$. The full Hessian is symmetric, so $\mH_{ji}=\mH_{ij}^\top$. Let $\mathcal{E}$ be a set of unordered interacting pairs, for example adjacent pairs $((l,m),(l+1,m))$, each included once.

For a feasible assignment $P$, define
\begin{equation}
  S(P)
  :=
  \sum_{i\in\mathcal{I}}
  \sum_q P_{iq}s_{iq},
  ~~
  s_{iq}:=e_{iq}^\top\mH_{ii}e_{iq},
  \label{eq:app_self_objective_S}
\end{equation}
and
\begin{equation}
  X(P)
  :=
  2\sum_{(i,j)\in\mathcal{E}}
  \sum_{q,q'}P_{iq}P_{jq'}c_{ijqq'},
  ~~
  c_{ijqq'}:=e_{iq}^\top\mH_{ij}e_{jq'}.
  \label{eq:app_cross_objective_X}
\end{equation}
The cross-layer quadratic surrogate is
\begin{equation}
  C_{\rm quad}(P):=S(P)+X(P).
  \label{eq:app_C_quad}
\end{equation}
The factor $2$ in \eqref{eq:app_cross_objective_X} accounts for the symmetric $(i,j)$ and $(j,i)$ Hessian blocks. A common factor $1/2$ from the Taylor expansion is omitted because it does not affect the optimizer.
\end{definition}

Under the Kronecker self-approximation used in the main text,
\begin{equation}
  \mH_{ii}\approx \mA^{(i)}\otimes\mB^{(i)},
\end{equation}
and therefore
\begin{equation}
  s_{iq}=e_{iq}^\top\mH_{ii}e_{iq}
  \approx
  \tr\left(\mB^{(i)}\dW^{(i,q)}\mA^{(i)}(\dW^{(i,q)})^\top\right) =
  \ell_{iq}.
\end{equation}

This section presents a rigorous statement and proof of Proposition~\ref{prop:app_cross_never_worse}. 
We first formally restate Proposition~\ref{prop:app_cross_never_worse} below.
\begin{proposition}[Exact cross-aware optimization is never worse for the quadratic surrogate]
Let $\mathcal{F}$ be the feasible set defined by the MCKP assignment and budget constraints. Let
\begin{equation}
  P_{\rm self}\in\operatorname*{arg\,min}_{P\in\mathcal{F}}S(P), \quad
  P_{\rm cross}\in\operatorname*{arg\,min}_{P\in\mathcal{F}}C_{\rm quad}(P).
\end{equation}
Then
\begin{equation}
  C_{\rm quad}(P_{\rm cross})
  \le
  C_{\rm quad}(P_{\rm self}).
\end{equation}
If $P_{\rm self}$ is not a minimizer of $C_{\rm quad}$ over $\mathcal{F}$, then the inequality is strict.
\end{proposition}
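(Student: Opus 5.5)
The argument is a direct optimality comparison, using only that $P_{\rm cross}$ minimizes $C_{\rm quad}$ over the same feasible set $\mathcal{F}$ that contains $P_{\rm self}$. I will first record that both argmin sets are nonempty. The feasible set $\mathcal{F}$ consists of binary assignment matrices satisfying the one-hot and budget constraints, so it is finite. Hence $S$ and $C_{\rm quad}$ each attain their minimum over $\mathcal{F}$ whenever $\mathcal{F}\neq\emptyset$, which is implicit in the statement since $P_{\rm self}$ and $P_{\rm cross}$ are assumed to exist. No structure of $\mH_{ii}$, $\mH_{ij}$, or the sign of the cross coefficients $c_{ijqq'}$ is needed. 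Nor is any relation between $S$ and $X$ needed.

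\textbf{Non-strict inequality.} Since $P_{\rm self}\in\operatorname*{arg\,min}_{P\in\mathcal{F}}S(P)\subseteq\mathcal{F}$, it is a feasible point for the cross-aware problem. By definition of $P_{\rm cross}$ as a minimizer of $C_{\rm quad}$ over $\mathcal{F}$, we have $C_{\rm quad}(P_{\rm cross})\le C_{\rm quad}(P)$ for every $P\in\mathcal{F}$. Specializing to $P=P_{\rm self}$ gives \eqref{eq:app_cross_never_worse}.

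\textbf{Strictness.} Suppose $P_{\rm self}$ is not a minimizer of $C_{\rm quad}$ over $\mathcal{F}$. Then $C_{\rm quad}(P_{\rm self})\neq\min_{P\in\mathcal{F}}C_{\rm quad}(P)=C_{\rm quad}(P_{\rm cross})$. Combined with the non-strict inequality, this forces $C_{\rm quad}(P_{\rm cross})<C_{\rm quad}(P_{\rm self})$.

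There is essentially no obstacle. The only point needing care is making explicit that the two optimization problems share the identical feasible set $\mathcal{F}$, with the same assignment and budget constraints. This shared set is what makes $P_{\rm self}$ admissible as a competitor. I will also remark that the guarantee concerns the exact minimizer of $C_{\rm quad}$; it does not cover the heuristic local search of Stage 2. That search separately cannot worsen the incumbent, because Algorithm~\ref{alg:casa} accepts a move only if the calibration loss strictly decreases.
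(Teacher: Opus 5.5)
Your proposal is correct and matches the paper's proof: the non-strict inequality follows because $P_{\rm self}\in\mathcal{F}$ and $P_{\rm cross}$ is a global minimizer of $C_{\rm quad}$ over $\mathcal{F}$, and strictness follows because a non-minimizer's value must exceed the minimum $C_{\rm quad}(P_{\rm cross})$. Your extra remarks are accurate but not needed for the argument: finiteness of $\mathcal{F}$ guarantees the argmins exist, and the proposition covers only the exact minimizer of $C_{\rm quad}$, not the Stage-2 heuristic.
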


\begin{proof}
By definition, $P_{\rm cross}$ is a global minimizer of $C_{\rm quad}$ over $\mathcal{F}$. Since $P_{\rm self}\in\mathcal{F}$,
\begin{equation}
  C_{\rm quad}(P_{\rm cross}) =
  \min_{P\in\mathcal{F}}C_{\rm quad}(P)
  \le
  C_{\rm quad}(P_{\rm self}).
\end{equation}
If $P_{\rm self}$ is not a minimizer of $C_{\rm quad}$, then its objective value is strictly larger than the minimum value, which gives strict inequality.
\end{proof}

\begin{proposition}[The self-only gap is controlled by omitted cross terms]
\label{prop:app_gap_controlled_by_cross}
With $P_{\rm self}$ and $P_{\rm cross}$ as in Proposition~\ref{prop:app_cross_never_worse},
\begin{equation}
  0
  \le
  C_{\rm quad}(P_{\rm self})-C_{\rm quad}(P_{\rm cross})
  \le
  X(P_{\rm self})-X(P_{\rm cross})
  \le
  2\max_{P\in\mathcal{F}}|X(P)|.
  \label{eq:app_self_gap_cross_terms}
\end{equation}
\end{proposition}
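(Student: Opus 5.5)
The plan is to chain three elementary inequalities, each coming from one of the two optimality properties ($P_{\rm self}$ minimizes $S$, $P_{\rm cross}$ minimizes $C_{\rm quad}$) or from a trivial absolute-value bound. Throughout, I use the decomposition $C_{\rm quad}=S+X$ from Definition~\ref{def:app_cross_objectives}.

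\emph{Leftmost inequality.} The bound $0\le C_{\rm quad}(P_{\rm self})-C_{\rm quad}(P_{\rm cross})$ is exactly Proposition~\ref{prop:app_cross_never_worse}, so I would simply invoke it.

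\emph{Middle inequality.} Expanding the decomposition gives
\begin{equation*}
  C_{\rm quad}(P_{\rm self})-C_{\rm quad}(P_{\rm cross})
  =\bigl[S(P_{\rm self})-S(P_{\rm cross})\bigr]+\bigl[X(P_{\rm self})-X(P_{\rm cross})\bigr].
\end{equation*}
Since $P_{\rm cross}\in\mathcal{F}$ and $P_{\rm self}$ minimizes $S$ over $\mathcal{F}$, we have $S(P_{\rm self})\le S(P_{\rm cross})$. The first bracket is therefore nonpositive, and dropping it yields the bound by $X(P_{\rm self})-X(P_{\rm cross})$. This is the only step that uses the self-optimality of $P_{\rm self}$, and it is where the cross terms are seen to control the gap.

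\emph{Rightmost inequality.} The triangle inequality gives $X(P_{\rm self})-X(P_{\rm cross})\le |X(P_{\rm self})|+|X(P_{\rm cross})|$. Both assignments lie in $\mathcal{F}$, so each absolute value is at most $\max_{P\in\mathcal{F}}|X(P)|$. This maximum is attained because $\mathcal{F}$ is a finite set of binary assignments, and it is nonempty because the argmins in the statement are assumed to exist.

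None of the steps is a real obstacle. The only point needing care is the direction of the $S$-comparison in the middle step, which requires that $P_{\rm self}$ and $P_{\rm cross}$ are compared over the same feasible set $\mathcal{F}$. The statement guarantees this by definition.
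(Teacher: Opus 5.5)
Your proposal is correct and matches the paper's proof step for step: the left inequality comes from Proposition~\ref{prop:app_cross_never_worse}, the middle one from splitting $C_{\rm quad}=S+X$ and discarding the nonpositive term $S(P_{\rm self})-S(P_{\rm cross})$, and the right one from the triangle inequality. Your remark that the maximum over $\mathcal{F}$ is attained because $\mathcal{F}$ is a finite nonempty set is a small point the paper leaves implicit.
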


\begin{proof}
The first inequality follows from Proposition~\ref{prop:app_cross_never_worse}. For the second inequality, expand the difference:
\begin{align}
  C_{\rm quad}(P_{\rm self})-C_{\rm quad}(P_{\rm cross})
  &=
  \left[S(P_{\rm self})+X(P_{\rm self})\right]
  -
  \left[S(P_{\rm cross})+X(P_{\rm cross})\right] \\
  &=
  \left[S(P_{\rm self})-S(P_{\rm cross})\right]
  +
  \left[X(P_{\rm self})-X(P_{\rm cross})\right].
\end{align}
Since $P_{\rm self}$ minimizes $S$ over $\mathcal{F}$ and $P_{\rm cross}\in\mathcal{F}$,
\begin{equation}
  S(P_{\rm self})\le S(P_{\rm cross}).
\end{equation}
Therefore
\begin{equation}
  S(P_{\rm self})-S(P_{\rm cross})\le0,
\end{equation}
and hence
\begin{equation}
  C_{\rm quad}(P_{\rm self})-C_{\rm quad}(P_{\rm cross})
  \le
  X(P_{\rm self})-X(P_{\rm cross}).
\end{equation}
The last inequality follows from the triangle inequality:
\begin{align}
  X(P_{\rm self})-X(P_{\rm cross})
  &\le
  |X(P_{\rm self})|+|X(P_{\rm cross})| \\
  &\le
  2\max_{P\in\mathcal{F}}|X(P)|.
\end{align}
Combining the three inequalities proves the result.
\end{proof}

\begin{example}[Self-costs alone can be arbitrarily poor in relative value]
\label{ex:app_self_only_bad_relative}
For every $R>0$ and $\rho\in(0,1)$, there exists a two-module quadratic problem with a positive definite Hessian such that all feasible assignments have exactly the same self-cost, yet a self-only allocation can be worse than the cross-aware optimum by an additive gap $4\rho R$ and by a multiplicative ratio
\begin{equation}
  \frac{1+\rho}{1-\rho}.
\end{equation}
As $\rho\uparrow1$, this ratio diverges to $+\infty$.
\end{example}

\begin{proof}
Consider two adjacent scalar modules. Each module has two feasible perturbation choices, represented by signs $s,t\in\{-1,+1\}$. Define
\begin{equation}
  e_1(s)=\sqrt{R}s,
  ~~
  e_2(t)=\sqrt{R}t.
\end{equation}
Let the Hessian over the two scalar parameters be
\begin{equation}
  \mH=
  \begin{pmatrix}
    1 & \rho\\
    \rho & 1
  \end{pmatrix}.
\end{equation}
The eigenvalues of $\mH$ are $1+\rho$ and $1-\rho$. Since $\rho\in(0,1)$, both eigenvalues are strictly positive, so $\mH\succ0$.

For an assignment $(s,t)$, the full quadratic cost is
\begin{align}
  C_{\rm quad}(s,t)
  &=
  \begin{pmatrix}e_1(s)&e_2(t)\end{pmatrix}
  \begin{pmatrix}
    1 & \rho\\
    \rho & 1
  \end{pmatrix}
  \begin{pmatrix}e_1(s)\\ e_2(t)\end{pmatrix} \\
  &=
  e_1(s)^2+e_2(t)^2+2\rho e_1(s)e_2(t) \\
  &=
  R+R+2\rho Rst \\
  &=
  2R+2\rho Rst.
  \label{eq:app_two_module_cross_cost}
\end{align}
The self-only cost drops the cross term:
\begin{equation}
  S(s,t)=e_1(s)^2+e_2(t)^2=2R.
\end{equation}
Thus all four assignments have identical self-cost, and a self-only criterion has no information with which to distinguish the sign patterns. In particular, an assignment with $st=+1$ is self-optimal. Its full quadratic cost is
\begin{equation}
  C_{\rm same}=2R+2\rho R.
\end{equation}
The cross-aware optimum chooses $st=-1$, for which
\begin{equation}
  C_{\rm opp}=2R-2\rho R.
\end{equation}
The additive gap is
\begin{equation}
  C_{\rm same}-C_{\rm opp} =
  (2R+2\rho R)-(2R-2\rho R) =
  4\rho R.
\end{equation}
The multiplicative ratio is
\begin{equation}
  \frac{C_{\rm same}}{C_{\rm opp}} =
  \frac{2R+2\rho R}{2R-2\rho R} =
  \frac{1+\rho}{1-\rho}.
\end{equation}
Since $1-\rho\to0$ as $\rho\uparrow1$, the ratio diverges to $+\infty$.
\end{proof}

\begin{lemma}[PSD Hessian blocks imply a Cauchy--Schwarz cross-term bound]
\label{lem:app_psd_cauchy_bound}
Let
\begin{equation}
  \begin{pmatrix}
    \mH_{ii} & \mH_{ij}\\
    \mH_{ji} & \mH_{jj}
  \end{pmatrix}
  \succeq0,
  ~~
  \mH_{ji}=\mH_{ij}^\top.
\end{equation}
Then, for all compatible real vectors $x$ and $y$,
\begin{equation}
  |x^\top\mH_{ij}y|
  \le
  \sqrt{x^\top\mH_{ii}x}
  \sqrt{y^\top\mH_{jj}y}.
  \label{eq:app_psd_cauchy_bound}
\end{equation}
\end{lemma}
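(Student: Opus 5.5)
The plan is the standard Cauchy--Schwarz argument for a positive semidefinite block matrix. It uses only the nonnegativity of the quadratic form along a one-parameter family of test vectors.

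Fix compatible $x$ and $y$, and write
\begin{equation*}
  \alpha:=x^\top\mH_{ii}x,\qquad
  \gamma:=y^\top\mH_{jj}y,\qquad
  \beta:=x^\top\mH_{ij}y .
\end{equation*}
Since $\mH_{ji}=\mH_{ij}^\top$, we also have $y^\top\mH_{ji}x=\beta$. For every real $t$, apply the positive semidefiniteness of the block matrix to the stacked vector $(x^\top,\,t y^\top)^\top$. This gives
\begin{equation*}
  \phi(t):=\alpha+2t\beta+t^2\gamma\ge0
  \quad\text{for all } t\in\mathbb{R}.
\end{equation*}
The same argument applied to $(x^\top,0)^\top$ and $(0,y^\top)^\top$ shows that $\alpha\ge0$ and $\gamma\ge0$, so the square roots in \eqref{eq:app_psd_cauchy_bound} are well defined.

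\textbf{Case 1: $\gamma>0$.} The polynomial $\phi$ is a genuine quadratic in $t$ that is nonnegative on all of $\mathbb{R}$. Its discriminant is therefore nonpositive, $4\beta^2-4\alpha\gamma\le0$. Equivalently, evaluate $\phi$ at its minimizer $t=-\beta/\gamma$ to get $\alpha-\beta^2/\gamma\ge0$. Either way $\beta^2\le\alpha\gamma$, and taking square roots gives the claim.

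\textbf{Case 2: $\gamma=0$.} Here $\phi(t)=\alpha+2t\beta$ is affine in $t$. If $\beta\neq0$, sending $t\to-\operatorname{sign}(\beta)\cdot\infty$ makes $\phi$ negative, a contradiction. So $\beta=0$, and the inequality holds trivially as $0\le0$.

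The only point that needs care is this degenerate case $\gamma=0$, where the discriminant argument does not directly apply. It is handled by the affine-function observation above. Alternatively, one could swap the roles of $x$ and $y$ when $\alpha>0$, but the affine argument covers all subcases at once, including $\alpha=\gamma=0$.
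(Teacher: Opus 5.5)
Your proof is correct and is essentially the paper's argument with the roles of the two blocks swapped. The paper tests the block form on $(tx, y)$ and splits on whether $x^\top\mH_{ii}x$ vanishes, whereas you test on $(x, ty)$ and split on $y^\top\mH_{jj}y$; both use the same discriminant bound in the nondegenerate case and the same affine-in-$t$ argument forcing the cross term to zero in the degenerate case, and your remark that both diagonal forms are nonnegative (so the square roots are defined) is a detail the paper leaves implicit.
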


\begin{proof}
Define
\begin{equation}
  a:=x^\top\mH_{ii}x,
  ~~
  b:=y^\top\mH_{jj}y,
  ~~
  c:=x^\top\mH_{ij}y.
\end{equation}
Since the block matrix is positive semidefinite, for every $t\in\mathbb{R}$,
\begin{align}
  0
  &\le
  \begin{pmatrix}tx\\y\end{pmatrix}^\top
  \begin{pmatrix}
    \mH_{ii} & \mH_{ij}\\
    \mH_{ji} & \mH_{jj}
  \end{pmatrix}
  \begin{pmatrix}tx\\y\end{pmatrix} \\
  &=
  t^2x^\top\mH_{ii}x
  +t x^\top\mH_{ij}y
  +t y^\top\mH_{ji}x
  +y^\top\mH_{jj}y.
\end{align}
Because $\mH_{ji}=\mH_{ij}^\top$ and all quantities are real scalars,
\begin{equation}
  y^\top\mH_{ji}x =
  y^\top\mH_{ij}^\top x =
  x^\top\mH_{ij}y =
  c.
\end{equation}
Hence
\begin{equation}
  at^2+2ct+b\ge0
  ~~
  \text{for all }t\in\mathbb{R}.
  \label{eq:app_quadratic_nonnegative}
\end{equation}
If $a>0$, then the quadratic polynomial in \eqref{eq:app_quadratic_nonnegative} has nonpositive discriminant; otherwise it would take a negative value for some real $t$. Thus
\begin{equation}
  (2c)^2-4ab\le0,
\end{equation}
which is equivalent to $c^2\le ab$ and hence $|c|\le\sqrt{ab}$.

If $a=0$, then \eqref{eq:app_quadratic_nonnegative} becomes
\begin{equation}
  2ct+b\ge0
  ~~
  \text{for all }t\in\mathbb{R}.
\end{equation}
This is possible only if $c=0$; otherwise taking $t\to-\infty$ when $c>0$ or $t\to+\infty$ when $c<0$ gives a contradiction. Hence $|c|=0=\sqrt{ab}$ because $a=0$. The desired inequality follows in all cases.
\end{proof}

\begin{proposition}[A conservative cross-layer objective upper-bounds the quadratic surrogate]
\label{prop:app_robust_upper_bound}
For every interacting pair $(i,j)\in\mathcal{E}$ and candidates $(q,q')$, define
\begin{equation}
  s_{iq}:=e_{iq}^\top\mH_{ii}e_{iq},
  ~~
  s_{jq'}:=e_{jq'}^\top\mH_{jj}e_{jq'}.
\end{equation}
Assume $s_{iq}\ge0$ and $s_{jq'}\ge0$, and suppose there are constants $\rho_{ij}\in[0,1]$ such that
\begin{equation}
  |e_{iq}^\top\mH_{ij}e_{jq'}|
  \le
  \rho_{ij}\sqrt{s_{iq}s_{jq'}}
  ~~
  \text{for all }q,q'.
  \label{eq:app_rho_cross_bound}
\end{equation}
Define
\begin{equation}
  U_\rho(P)
  :=
  \sum_{i,q}P_{iq}s_{iq}
  +
  2\sum_{(i,j)\in\mathcal{E}}
  \sum_{q,q'}
  P_{iq}P_{jq'}\rho_{ij}\sqrt{s_{iq}s_{jq'}}.
  \label{eq:app_robust_upper_objective}
\end{equation}
Then every feasible assignment $P$ satisfies
\begin{equation}
  C_{\rm quad}(P)\le U_\rho(P).
  \label{eq:app_U_upper_bounds_C}
\end{equation}
If the Hessian block over $(i,j)$ is positive semidefinite, then \eqref{eq:app_rho_cross_bound} always holds with $\rho_{ij}=1$ by Lemma~\ref{lem:app_psd_cauchy_bound}.
\end{proposition}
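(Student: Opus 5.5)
The plan is to compare $C_{\rm quad}(P)$ and $U_\rho(P)$ term by term, using the decomposition $C_{\rm quad}=S+X$ from Definition~\ref{def:app_cross_objectives}.

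\textbf{Self parts.} For a feasible $P$, the self parts coincide: $U_\rho$ contains $\sum_{i,q}P_{iq}s_{iq}=S(P)$ exactly. It therefore suffices to show
\begin{equation*}
  X(P)\le 2\sum_{(i,j)\in\mathcal{E}}\sum_{q,q'}P_{iq}P_{jq'}\rho_{ij}\sqrt{s_{iq}s_{jq'}}.
\end{equation*}

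\textbf{Cross parts.} Fix an interacting pair $(i,j)\in\mathcal{E}$ and candidates $(q,q')$. Feasibility gives $P_{iq}P_{jq'}\in\{0,1\}$, so in particular $P_{iq}P_{jq'}\ge0$. Then
\begin{equation*}
  P_{iq}P_{jq'}c_{ijqq'}\le P_{iq}P_{jq'}|c_{ijqq'}|\le P_{iq}P_{jq'}\rho_{ij}\sqrt{s_{iq}s_{jq'}},
\end{equation*}
where the second step is hypothesis \eqref{eq:app_rho_cross_bound}. The square root is well defined because $s_{iq},s_{jq'}\ge0$ by assumption. Summing over $q,q'$ and over $(i,j)\in\mathcal{E}$, and multiplying by the positive factor $2$, yields the desired inequality for $X(P)$. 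Adding back the identical self parts gives $C_{\rm quad}(P)\le U_\rho(P)$.

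\textbf{The PSD case.} Suppose the $2\times2$ block Hessian over modules $(i,j)$ is positive semidefinite, with $\mH_{ji}=\mH_{ij}^\top$ as guaranteed by Definition~\ref{def:app_cross_objectives}. Apply Lemma~\ref{lem:app_psd_cauchy_bound} with $x=e_{iq}$ and $y=e_{jq'}$. This gives $|e_{iq}^\top\mH_{ij}e_{jq'}|\le\sqrt{s_{iq}}\sqrt{s_{jq'}}$, which is \eqref{eq:app_rho_cross_bound} with $\rho_{ij}=1$. In this case nonnegativity of $s_{iq}$ and $s_{jq'}$ is automatic, since the diagonal blocks of a PSD matrix are themselves PSD.

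No step is a genuine obstacle; the argument only requires care with two points. First, the products $P_{iq}P_{jq'}$ must be nonnegative, which holds because $P$ is binary, so that multiplying the inequality by them preserves its direction. Second, $\mathcal{E}$ contains each unordered pair exactly once in both $X$ and $U_\rho$, so the factor $2$ matches on both sides.
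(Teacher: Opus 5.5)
Your proposal is correct and follows essentially the same route as the paper. Both arguments bound each cross term by $c_{ijqq'}\le|c_{ijqq'}|\le\rho_{ij}\sqrt{s_{iq}s_{jq'}}$, multiply by the nonnegative $P_{iq}P_{jq'}$, sum, and invoke Lemma~\ref{lem:app_psd_cauchy_bound} with $x=e_{iq}$, $y=e_{jq'}$ for the PSD case. Your extra remark that $s_{iq},s_{jq'}\ge0$ is automatic in the PSD case, because diagonal blocks of a PSD matrix are PSD, is a small correct addition that the paper leaves implicit.
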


\begin{proof}
Starting from Definition~\ref{def:app_cross_objectives},
\begin{equation}
  C_{\rm quad}(P) =
  \sum_{i,q}P_{iq}s_{iq}
  +
  2\sum_{(i,j)\in\mathcal{E}}
  \sum_{q,q'}P_{iq}P_{jq'}e_{iq}^\top\mH_{ij}e_{jq'}.
\end{equation}
For every cross term,
\begin{equation}
  e_{iq}^\top\mH_{ij}e_{jq'}
  \le
  |e_{iq}^\top\mH_{ij}e_{jq'}|
  \le
  \rho_{ij}\sqrt{s_{iq}s_{jq'}},
\end{equation}
where the second inequality is \eqref{eq:app_rho_cross_bound}. Since $P_{iq}P_{jq'}\ge0$, multiplying by $P_{iq}P_{jq'}$ preserves the inequality. Summing over all interacting pairs and candidates yields
\begin{equation}
  C_{\rm quad}(P)
  \le
  \sum_{i,q}P_{iq}s_{iq}
  +
  2\sum_{(i,j)\in\mathcal{E}}
  \sum_{q,q'}
  P_{iq}P_{jq'}\rho_{ij}\sqrt{s_{iq}s_{jq'}} =
  U_\rho(P).
\end{equation}
If the block Hessian for $(i,j)$ is positive semidefinite, then applying Lemma~\ref{lem:app_psd_cauchy_bound} with $x=e_{iq}$ and $y=e_{jq'}$ gives
\begin{equation}
  |e_{iq}^\top\mH_{ij}e_{jq'}|
  \le
  \sqrt{e_{iq}^\top\mH_{ii}e_{iq}}
  \sqrt{e_{jq'}^\top\mH_{jj}e_{jq'}} =
  \sqrt{s_{iq}s_{jq'}},
\end{equation}
which is \eqref{eq:app_rho_cross_bound} with $\rho_{ij}=1$.
\end{proof}

\begin{corollary}[Robust cross-aware allocation improves the conservative upper bound]
\label{cor:app_robust_never_worse_upper}
Let
\begin{equation}
  P_\rho\in\operatorname*{arg\,min}_{P\in\mathcal{F}}U_\rho(P).
\end{equation}
Then
\begin{equation}
  U_\rho(P_\rho)
  \le
  U_\rho(P_{\rm self}).
\end{equation}
\end{corollary}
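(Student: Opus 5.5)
The claim follows directly from optimality of $P_\rho$ for $U_\rho$ over $\mathcal{F}$, in the same way as the proof of Proposition~\ref{prop:app_cross_never_worse}. The only ingredients are that $P_{\rm self}$ is feasible and that the minimum of $U_\rho$ over $\mathcal{F}$ is attained.

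\textbf{Step 1: existence of the minimizers.} I will first check that $\mathcal{F}$ is finite. It consists of binary assignment matrices with $\sum_q P_{iq}=1$ that also satisfy the budget inequality, over a finite module set $\mathcal{I}$ and a finite candidate set $\mathcal{Q}$. Hence $\operatorname*{arg\,min}_{P\in\mathcal{F}}U_\rho(P)$ and $\operatorname*{arg\,min}_{P\in\mathcal{F}}S(P)$ are nonempty whenever $\mathcal{F}\neq\emptyset$. This nonemptiness is implicitly assumed by the statement, since it refers to $P_\rho$ and $P_{\rm self}$. Every value $U_\rho(P)$ is a finite real number because $s_{iq}\ge0$, so the square roots are defined.

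\textbf{Step 2: comparison with $P_{\rm self}$.} By Proposition~\ref{prop:app_cross_never_worse}, $P_{\rm self}$ is defined as a minimizer of $S$ over $\mathcal{F}$, so in particular $P_{\rm self}\in\mathcal{F}$. Since $P_\rho$ minimizes $U_\rho$ over $\mathcal{F}$, we get
\[
U_\rho(P_\rho)=\min_{P\in\mathcal{F}}U_\rho(P)\le U_\rho(P_{\rm self}),
\]
which is the claim.

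For interpretation, I would add a short remark combining this with Proposition~\ref{prop:app_robust_upper_bound}: $C_{\rm quad}(P_\rho)\le U_\rho(P_\rho)\le U_\rho(P_{\rm self})$. So the robust allocation has a certified quadratic-surrogate value no larger than the certificate available for the self-only allocation.

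\textbf{Main obstacle.} None of the steps is substantive. The only care needed is to be explicit that $\mathcal{F}$ is finite and nonempty, so that both argmins exist. No convexity or relaxation argument is required.
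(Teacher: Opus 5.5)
Your proof is correct and matches the paper's argument exactly: $P_\rho$ minimizes $U_\rho$ over $\mathcal{F}$ and $P_{\rm self}\in\mathcal{F}$, so $U_\rho(P_\rho)\le U_\rho(P_{\rm self})$ is immediate. Your added remarks on the finiteness of $\mathcal{F}$ and the chain $C_{\rm quad}(P_\rho)\le U_\rho(P_\rho)\le U_\rho(P_{\rm self})$ are valid but not needed for the statement itself.
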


\begin{proof}
This follows immediately because $P_\rho$ minimizes $U_\rho$ over the same feasible set $\mathcal{F}$ and $P_{\rm self}\in\mathcal{F}$.
\end{proof}


\end{document}